\newcommand{\expit}{\operatorname{expit}}
\newtheorem{theorem}{Theorem}
\newtheorem{definition}{Definition}
\newtheorem{corollary}[theorem]{Corollary}
\newtheorem{proposition}[theorem]{Proposition}
\newtheorem{lemma}{Lemma}
\newtheorem{assumption}{Assumption}
\newtheorem{remark}{Remark}
\DeclareMathOperator{\argmin}{argmin}
\title{Concentration and excess risk bounds for imbalanced
classification with synthetic oversampling}
\author{Touqeer Ahmad\textsuperscript{1}, 
        Mohammadreza M. Kalan\textsuperscript{1}, 
        François Portier\textsuperscript{1}, 
        Gilles Stupfler\textsuperscript{2} \\
        \\
\textsuperscript{1}Univ Rennes, Ensai, CNRS, CREST—UMR 9194, F-35000 Rennes, France \\
\textsuperscript{2}Univ Angers, CNRS, LAREMA, SFR MATHSTIC, F-49000 Angers, France
}
\begin{document}
\maketitle
\begin{abstract} 
Synthetic oversampling of minority examples using \textsc{Smote} and its variants is a leading strategy for addressing imbalanced classification problems. 
Despite the success of this approach in practice, its theoretical foundations remain underexplored. 
We develop a theoretical framework to analyze the behavior of \textsc{Smote} and related methods when classifiers are trained on synthetic data. 
We first derive a uniform concentration bound on the discrepancy between the empirical risk over synthetic minority samples and the population risk on the true minority distribution. 
We then provide a nonparametric excess risk guarantee for kernel-based classifiers trained using such synthetic data.
These results lead to practical guidelines for better parameter tuning of both \textsc{Smote} and the downstream learning algorithm. 
Numerical experiments are provided to illustrate and support the theoretical findings.
\end{abstract}

% keywords for the arXiv version; can be removed
%\keywords{\textsc{Smote} \and Oversampling \and Imbalanced Classification \and Concentration Inequalities \and Risk Minimization}

\section{Introduction}
The problem of imbalanced classification arises when one of the classes of interest, often referred to as the \emph{minority class}, is significantly underrepresented relative to the other classes. In such settings, standard classification algorithms tend to produce trivial decision rules that favor the majority class, often ignoring the minority class altogether. %This issue is particularly critical in domains such as fraud detection or rare disease diagnosis, where the minority class represents rare but important events. Hence, designing learning algorithms that 
Designing algorithms that perform well under class imbalance has become a key challenge in modern machine learning \citep{chawla2004special,5128907,lemaavztre2017imbalanced,spelmen2018review,feng2021imbalanced}.

There are two main families of methods to tackle this problem. %: model-level and data-level approaches. The 
{\it Model-level approaches} modify the training objective, often by reweighting the loss function to penalize misclassification of the minority class more {heavily. %This effectively shifts the learning focus towards the minority class. 
Several} theoretical developments support this direction, including work on cost-sensitive learning \citep{menon2013statistical,xu2020class,aghbalou2024sharp}, Neyman-Pearson classification \citep{JMLR:v12:rigollet11a,tong2013plug,kalan2024distribution} and applications to deep learning models \citep{cao2019learning,pmlr-v97-byrd19a}.

By contrast, {\it data-level approaches} modify the training data %itself 
by applying resampling techniques \citep{kubat1997addressing,chawla2002smote,mani2003knn,Barandela,lemaavztre2017imbalanced}, either by reducing the size of the majority class (undersampling) or by augmenting the minority class with synthetic data (oversampling). %A key advantage of data-level techniques is their 
Their key advantage is compatibility with off-the-shelf machine learning algorithms and standard validation strategies, such as cross-validation. While undersampling can reduce computational complexity in training well-specified parametric models \citep{pmlr-v119-wang20a,chen2025optimal}, it risks discarding valuable information. %, which could be critical in settings of extreme imbalance. 
Oversampling, %on the other hand, preserves all available data and 
however, seeks to mitigate imbalance by generating new synthetic examples for the minority class.

The seminal oversampling algorithm, known as \textsc{Smote} (Synthetic Minority Oversampling Technique) \citep{chawla2002smote}, generates new samples by interpolating between a minority instance and its nearest minority class neighbors. A different proposal called Kernel Density Estimation-based Oversampling (\textsc{Kdeo}) was further introduced in \citet{gao2012probability,kamalov2020kernel}. The present paper studies, from a theoretical perspective, the use of \textsc{Smote} and \textsc{Kdeo} for imbalanced binary classification problems. Both methods are nonparametric and iteratively generate synthetic data points from the minority class to achieve a more balanced dataset. Similar methods such as \textsc{Gsmote}~\citep{Scornet2024} can be handled using our approach, but significantly different extensions of \textsc{Smote} such as \textsc{Borderline-Smote}~\citep{han2005borderline} and \textsc{Adasyn}~\citep{he2008adasyn} %differ significantly from the original \textsc{Smote}, 
cannot be. % and therefore cannot be directly analyzed using our theoretical framework.

We adopt a transfer learning viewpoint, interpreting oversampling as a means to generate synthetic samples for the estimation of a modified, weighted probability measure that gives more weight to the minority class. %By constructing a synthetic sample that approximates this new measure, we effectively transform the original imbalanced learning task into one with balanced class distributions. 
This perspective transforms the original imbalanced learning task into a balanced one and allows us to unify reweighting and resampling approaches under a common objective: approximating the same reweighted distribution. By contrast with previous studies on \textsc{Smote}-based algorithms relying on the ROC curve \citep{chawla2002smote}, AUC \citep{Scornet2024} or F1-score \citep{bej2021loras}, the metric of interest here is associated with the AM-risk to account for the fact that oversampling facilitates a shift toward a more balanced risk. %distribution of interest in this work is associated with the so-called AM-risk, which will serve as the primary risk measure throughout our analysis. 
%By contrast, many previous studies have compared SMOTE-based algorithms using standard risk metrics such as the ROC curve \citep{chawla2002smote}, AUC \citep{Scornet2024} or F1-score \citep{bej2021loras}. However, given that SMOTE facilitates a shift toward a more balanced risk—closely aligned with AM-risk—its benefits may be obscured if evaluations rely solely on standard metrics rather than AM-risk itself. 
Our main contributions are threefold:

\noindent\textbullet \ \textbf{ Uniform concentration inequalities for \textsc{Smote} and %KDE-based oversampling.
\textsc{Kdeo}.}  
We derive uniform concentration bounds for the discrepancy between the empirical risk on synthetic data and the population risk on the minority class, over a class of uniformly bounded functions. For \textsc{Smote}, we obtain a bound of order \( n_1^{-1/2} + (k/n_1)^{1/d} \), where \( n_1 \) is the number of minority class samples and \( k \) is the number of neighbors used in \textsc{Smote}. For %KDE-based oversampling, 
\textsc{Kdeo}, a similar bound of the form \( n_1^{-1/2} + h \) is established, where \( h \) is the kernel bandwidth. %These results show that both techniques 
From these %uniform concentration 
inequalities, we then derive, for each oversampling method, excess risk bounds for %\fp{?parametric?} 
classifiers trained on synthetically generated data, implying consistency results for the balanced risk, provided the oversampling parameters ($k$ and $h$) are sufficiently small and the hypothesis class has controlled Rademacher complexity. This supports the commonly recommended choice of \( k = 5 \) when using \textsc{Smote} \citep{chawla2002smote}. For %KDE-based oversampling, 
\textsc{Kdeo}, this choice of $k = 5$ corresponds to choosing $h$ of order $n_1^{-1/d}$.

%(1) Concentration inequalities for SMOTE and KDE-based oversampling.
%We derive exponential concentration bounds for the discrepancy between the empirical mean based on synthetic data and the population mean of the minority class. For SMOTE, we obtain a bound of order $n_1^{-1/2} + (K/n_1)^{1/d}$, where $n_1$ is the number of minority class samples and $K$ is the number of neighbors used in SMOTE. For KDO, a similar bound of the form $n_1^{-1/2} + h$ is established, where $h$ is the kernel bandwidth. These results demonstrate that both techniques yield consistent estimators of the balanced risk, provided that the oversampling parameters ($K$ and $h$) are small enough. This conforts the practical choice $K=5$ that is usually recommended when using SMOTE. \mm{In our theorem, K must be sufficiently large to satisfy an inequality, so it is not accurate to conclude that the theorem validates the practical choice of K = 5.}\fp{As far as I remember, It could be stated for any $k$ using somethong like: $\hat \tau (k) \leq \hat \tau(\log(n)) $. In any case, the phrasing should be mitigated}

\noindent\textbullet \ \textbf{ Excess risk bounds for nonparametric classifiers with %KDE-based oversampling.
\textsc{Kdeo}.}  
We analyze the performance of a kernel smoothing plug-in classifier \citep{tsyb,devroye2013probabilistic} with %KDE-based oversampling. 
\textsc{Kdeo}. We establish an upper bound on the excess risk of this classifier, revealing a variance term of order \( (n_1 h^d)^{-1/2} \) and a bias term of order \( h \) or \( h^2 \), depending on the regularity of the class-specific distributions. %, thereby highlighting the 
This highlights a trade-off involved in choosing the kernel bandwidth \( h \), whose optimal choice %Optimal choice of the bandwidth is therefore given by $ h\propto n_1^{ -1/(d+2)} $ or  $ h\propto n_1^{ -1/(d+4)} $ depending 
is different from the \textsc{Smote}-default analogous choice $n_1^{-1/d}$ and depends on the regularity of the distribution. This analysis provides novel theoretical guidance for bandwidth selection in %KDE-based oversampling -- 
\textsc{Kdeo}, an issue that has remained largely heuristic in practice. 

%(2) Excess risk bounds for kernel-based classifiers with KDO.
%We further study the performance of a nonparametric kernel classifier trained on synthetic data generated by KDO. We establish an upper bound on the excess risk of this classifier, which reveals a variance term of order $(n_1 h^d)^{-1/2}$ and a bias term of order $h$, thereby underlining a tradeoff in the choice of $h$. This analysis offers novel theoretical guidance for bandwidth selection in KDO— an issue that has remained largely heuristic in practice.

\noindent\textbullet \ \textbf{ Empirical validation.}  
%The theoretical insights are complemented by numerical 
Numerical experiments %that 
illustrate and support the %concentration and excess risk results. 
theory. %The experiments 
They demonstrate that following the well-known Scott's rule $h\propto n_1 ^ {-1/(d+4) } $ might be highly beneficial when using nonparametric classification rules such as \textit{kernel smoothing} or \textit{nearest neighbors}. This improvement is less significant when using a parametric classification rule such as \textit{logistic regression}.

%the impact of oversampling parameters on the quality of the synthetic distributions and the resulting classification performance.

%(3) Empirical validation.
%The theoretical insights are complemented by numerical experiments that illustrate and support the concentration and excess risk results. These experiments demonstrate the impact of oversampling parameters on the quality of synthetic distributions and the resulting classification performance.

\noindent\textbf{Related Work.} While the empirical effectiveness of \textsc{Smote} is widely acknowledged, its theoretical justification remains limited. 
%Some progress has been made in recent studies that examine properties of the sampling distribution induced by \textsc{Smote}, including characterizations of its density, mean, and variance 
The density, mean, and variance of the sampling distribution have recently been derived~\citep{elreedy2019comprehensive,elreedy2024theoretical,Scornet2024}. %However, to the best of our knowledge, 
{As far as we are aware, our work provides the first %formal 
concentration} inequalities for \textsc{Smote}, thereby offering a principled explanation for its success. %Our results 
We rely on advanced tools from %the theory of 
nearest neighbors theory, notably Vapnik-style inequalities \citep{xue2018achieving,jiang2019non,portier2025nearest}.

Regarding %KDE-based oversampling, 
\textsc{Kdeo}, %our contribution goes 
we go beyond earlier studies by deriving not only concentration results but also excess risk bounds for the kernel smoothing plug-in algorithm of \cite{tsyb,devroye2013probabilistic}. Our %theoretical framework builds upon classical results on kernel density estimation, including 
theory builds upon concentration bounds for the $L^1$-error \citep{Devroye1991} and bias-variance decompositions from %\citep{devroye1985,devroye1987course,HOLMSTROM1992245}. 
\cite{devroye1985,HOLMSTROM1992245}. 
%While similar kernel discrimination rules are considered in \cite{tong2013plug} to deal with the different problem of Neyman-Pearson classification, our $L^1$-based approach is technically different from that of \cite{tong2013plug} which relies on controlling the $\|\cdot \|_\infty $-error of KDE. 
We note that \cite{tong2013plug} deals with the different problem of Neyman-Pearson classification by controlling the $\|\cdot \|_\infty $-error, rather than the $L^1$-error, of KDE. %A technical contribution of this work is the extension of 
We extend these results to the case of Lipschitz densities that may exhibit discontinuities at domain boundaries -- a common scenario in real-world applications. %Additionally, we derive new results for the AM-risk in the imbalanced setting, extending classical nonparametric risk analysis to this more complex domain.

\noindent\textbf{Outline.} Section~\ref{problem-setting} introduces the mathematical background %, notation, 
and a formal characterization of the problem. Section~\ref{sec:main} presents the main theoretical results and Section~\ref{numerical} provides supporting numerical experiments. The Appendix contains all mathematical proofs and additional numerical results.

\section{Problem setting}\label{problem-setting}

\subsection{Background}
%\label{subsec:smote-detail}

%In this work, we consider binary classification in $\mathbb{R}^d$ under a class-imbalance regime. 
Let $\mathbb{P}$ be a probability distribution on $\mathbb{R}^d \times \{0,1\}$, and let $\mathcal{D}_n = \{(X,Y), (X_i, Y_i)_{1\leq i\leq n}\}$ be independent and identically distributed (i.i.d.)~random variables with common distribution $\mathbb{P}$.  
Denote by $n_1 = \sum_{i=1}^n \mathrm{1}_{\{Y_i = 1\}}$ and $n_0 = \sum_{i=1}^n \mathrm{1}_{\{Y_i = 0\}}$
the number of samples from the conditional distributions $\mathbb{P}_1 := \mathbb{P}(\,\cdot\,|\,Y=1)$ and $\mathbb{P}_0 := \mathbb{P}(\,\cdot\,|\,Y=0)$, respectively.  
Under class imbalance, we assume $n_1 \ll n_0$, or equivalently, $p_1:= \mathbb{P}(Y=1) \ll {1-p_1}= \mathbb{P}(Y=0)$. Denote by $\{X_{1i}\}_{1\leq i\leq n_1}$ and $\{X_{0i}\}_{1\leq i \leq n_0}$ the features corresponding respectively to the minority and majority classes.

%\gsnote{I suggest $\eta$ for the classifier because $h$ is going to be the bandwidth}\fp{FP: In what follows, I have used $g$ because $\eta(X)$ is often the conditional probability  $ \mathbb P (Y=1| X) $ and I have followed \cite{bartlett2006convexity} in which the framework is $g: \mathbb{R}^d \to \mathbb{R}$ which is called discriminant function and the loss is simply $\ell((2Y-1)g(X)) $}

Let $\ell$ be a %generic 
loss function, e.g.~the $0$-$1$ loss $\ell (\alpha) = \mathrm{1}_{\{\alpha \leq 0\}}$ or a convex surrogate %---for example, $\ell (\alpha) = \exp(-\alpha ) $---
such as $\ell (\alpha) = e^{-\alpha}$, used to evaluate a discriminant function $g:\mathbb R^d \to \mathbb R$.
For each pair $(X,Y)$, %the quantity 
$\ell( g(X) (2Y-1)) $ %allows to quantify 
quantifies the quality of the prediction and can be used to define the (imbalanced) risk
\[ 
{R(g) := \mathbb{E}[\ell (g(X) (2 Y - 1) )]} = p_1\, \mathbb{E}_{1}[\ell ( g (X))   ] + {(1-p_1)} \, \mathbb{E}_{0}[\ell  (-g (X)) ].
\]
%where $g(x,y) := \ell(h(x), y)$ and $g^{(y)}(x) := \ell(h(x), y)$ for $y \in \{0,1\}$.  }
A key challenge %arising from class imbalance 
here is that trivial classifiers that always predict the majority class ($Y=0$) %can achieve a low risk as can be seen from above equation 
will achieve a low such risk when $p_1$ is small, even though %Hence, minimizing $R$ would often lead to classifiers that 
they perform %very 
poorly on the minority class ($Y = 1$). A common approach to mitigate this issue is to consider the $\beta$-reweighted (balanced) risk %, defined as
 \[
R_{\beta}(g) := \beta \, \mathbb{E}_{1}[\ell ( g (X))   ] + (1-\beta) \, \mathbb{E}_{0}[\ell  (-g (X)) ],
\]
where a commonly used choice is $\beta = 1/2$, corresponding to the so-called AM-risk as studied in \cite{menon2013statistical,xu2020class,aghbalou2024sharp}. While $R_{\beta}(g)$ can in principle be minimized by reweighting the original samples, a widely used and successful alternative is
\emph{synthetic oversampling}. 
%In this work, we examine two synthetic oversampling methods, \textsc{Smote} \citep{chawla2002smote} and KDE-based oversampling \citep{gao2012probability,kamalov2020kernel}, that 
We focus on \textsc{Smote} and \textsc{Kdeo}, that expand the minority class until the empirical class prior of the \emph{augmented} sample matches the evaluation weight~$\beta$.
%\begin{itemize}[leftmargin=*]
%    \item \textbf{\textsc{Smote}} \citep{chawla2002smote}, which generates synthetic samples by interpolating between each minority sample and one of its $k$ nearest minority neighbors.
%    \item \textbf{Kernel Density Estimation (KDE) based oversampling} \citep{gao2012probability,kamalov2020kernel}, which generates synthetic samples from a %smoothed approximation 
%    {kernel density estimate} of the minority distribution.
%\end{itemize}

\subsection{Oversampling techniques}\label{subsec:smote-detail}

\paragraph{\textsc{Smote}.} This method generates synthetic minority samples by interpolating between randomly selected minority examples and their nearest neighbors (NN), thereby filling in sparse regions of the minority class in the feature space. At each iteration $i$, a {minority} class ($Y=1$) sample \( \tilde{X}_{1i} \) is drawn uniformly at random from the minority class points \( \{X_{1i}\}_{1\leq i\leq n_1} \), and then another point $\overline X_{1i}  $ is generated uniformly at random among the \( k \)NN of \( \tilde{X}_{1i} \) in the minority class {deprived of \( \tilde{X}_{1i} \), that is, $\{X_{1j}\}_{1\leq j\leq n_1} \setminus \{ \tilde{X}_{1i} \}$ (if $n_1=1$, we set $\overline X_{1i} = \tilde{X}_{1i}$)}. A synthetic sample is generated as 
\begin{equation}\label{smote-equation}
X_{1i}^{*} = (1-\lambda) \tilde{X}_{1i} + \lambda \overline X_{1i} ,
\end{equation}
where \( \lambda \sim \mathcal{U}[0,1] \) is independently drawn. % and $J\sim \text{Unif}\{1,...,k\}$.} 
%The distribution of synthetic points is then the mixture $(1/ n_1) \sum_{i=1}^{n_1} \hat K(x, X_{1i})$, where 
%\( \hat K(\cdot, z) \) denotes the uniform distribution over the union of segments \( \bigcup_{j=1}^k (z, \hat X_{1j}(z) ) \) with $\{\hat{X}_{11}(z) ,..., \hat{X}_{1k}(z) \}$ being the $k$ nearest neighbors to $z$ in the minority class.
The distribution of synthetic \textsc{Smote} points %can be also described as 
is then the mixture $
(1/ n_1) \sum_{i=1}^{n_1} \mathcal{U}_k(X_{1i})
$, where \( \mathcal{U}_k(X_{1i}) \) denotes the (conditional) uniform distribution over the union of segments \( \bigcup_{j=1}^k (X_{1i}, \hat X_{1j}^{(i)}) \) with $\{\hat{X}_{11}^{(i)},\ldots,\hat{X}_{1k}^{(i)} \}$ being the set of $k$ nearest neighbors of $X_{1i}$ within the minority class deprived of $X_{1i}$.

\paragraph{%KDE-based Oversampling (KDEO)
\textsc{Kdeo}.}%\label{subsec:KDE_oversampling}
This method generates synthetic minority samples by perturbing randomly selected minority points using kernel-based noise. There are again two steps. First, a point \( \tilde X_{1i} \) is drawn uniformly at random from the minority class \( \{X_{1i}\}_{1\leq i\leq n_1} \); then, a synthetic sample is generated as
\begin{equation}\label{smoteKDE-equation}
X_{1i}^{*} = \tilde X_{1i} + hW_i,
\end{equation}
where \( W_i \sim K \) is {independently drawn} from a fixed kernel distribution (e.g., standard Gaussian), and \( h > 0 \) is a bandwidth parameter controlling the magnitude of the perturbation. This procedure can be interpreted as, given a kernel function \( K \) and bandwidth \( h > 0 \), drawing synthetic samples \( X_{1i}^* \) from the KDE $\hat{f}_{1h}  (x) = ({1}/{n_1}) \sum_{i=1}^{n_1} K_h(x - X_{1i} )$ (with $K_h(x) = h^{-d} K(x/h)$) in the minority class. The key difference between \textsc{Smote} and %KDE-based oversampling 
\textsc{Kdeo} lies in the support and shape of the component distributions: \textsc{Smote} uses uniform distributions over segments joining neighbors, while %KDE based oversampling 
\textsc{Kdeo} employs symmetric kernels that may perturb points in any direction. 
%When the parameters \( k \) (in $k$NN) and \( h \) (in %KDE
%\textsc{Kdeo}) are small, the generated samples in both methods tend to remain close to the selected base point, making the two approaches behave similarly.

The number \( m \) of %independent replications of this procedure 
synthetic replications is chosen so that the %resulting class 
proportion between the synthetic minority samples and the majority samples is such that ${m}/{(m+n_0)} \approx \beta$. By considering $ m = n_0$, we focus on the minimization of the AM-risk $R_{1/2}$, but %the results of the paper are also valid 
our results also hold for arbitrary $\beta\in(0,1)$.

\section{Main results}
\label{sec:main}

\subsection{Concentration inequalities and excess risk bound for empirical risk minimization with oversampling}
\label{sec:main:concentration}

We show first that the empirical mean based on %synthetically generated samples---produced using either \textsc{Smote} or KDE-based methods---
\textsc{Smote} or \textsc{Kdeo}-generated samples concentrates uniformly around the population mean over a class of uniformly bounded functions. %We first deal with \textsc{Smote} and 
For \textsc{Smote}, we %assume a mild regularity 
put a mild condition on the minority class distribution $\mathbb{P}_1$ to ensure that %its probability has a minimal amount of
it puts enough mass everywhere on its support. Let $B(x,r)$ be the Euclidean ball with center $x$ and radius $r>0$ in $\mathbb{R}^d$.
%In particular, it holds for any distribution obtained by truncating a continuous-density distribution to have compact support.
%
\begin{assumption}\label{assump_regularity}
%We assume that there 
There exists a constant $C_d>0$ such that for all $x\in \text{supp}(\mathbb{P}_1)$ and all $r>0$, we have $\mathbb{P}_1(B(x,r))\geq \min\{C_d r^d,1\}$.  [In particular, $\text{supp}(\mathbb{P}_1)$ is bounded.] 
\end{assumption}

The above assumption is standard to obtain error bounds for nearest neighbors estimators \citep{Gadat,jiang2019non,portier2025nearest} or other local averaging methods such as local polynomial estimators \citep{tsyb}. The next assumption, not actually specific to \textsc{Smote}, introduces the class of functions over which the uniform convergence bound is derived.

\begin{assumption}%[Function class \(\mathcal{F}\)]
\label{function_class}
Let \(\mathcal{F}\) be a separable class of functions \(G: \mathbb{R}^d \to \mathbb{R}\) which is uniformly bounded, i.e.~there is $B>0$ such that $\sup_{G \in \mathcal{F}} \|G\|_{\infty} \le B$, and each \(G \in \mathcal{F}\) is \(L\)-Lipschitz.
\end{assumption}

One last assumption is needed to control the complexity of the class $\mathcal F$. This assumption relies on the well-known notion of Rademacher complexity \citep{bartlett2002rademacher} recalled below. %, which is introduced below for clarity.

\begin{definition}[Rademacher complexity]\label{rademacher_compl}
Let \(X_1,\dots,X_n\) be %i.i.d.~samples drawn 
i.i.d.~from a distribution \(P\) on \(\mathcal{X}\), and 
let \(\sigma_1,\dots,\sigma_n\) be i.i.d.~Rademacher random variables (i.e.~\(\mathbb{P}[\sigma_i=1]=\mathbb{P}[\sigma_i=-1]=1/2\)) that are independent of the $X_i$. The empirical Rademacher complexity of \(\mathcal{F}\) on \(X_{1:n}=(X_1,\dots,X_n)\) is 
\[
\widehat{\mathcal{R}}_{n}(\mathcal{F};X_{1:n})
:=
\mathbb{E}_{\sigma}\Biggl[\sup_{G\in\mathcal{F}}
\frac{1}{n}\sum_{i=1}^n \sigma_i\,G(X_i)\Biggr].
\]
\end{definition}
\begin{assumption}[Rademacher Bound]
\label{assump:rad-free}
There exists a sequence \(\mathcal{R}_n(\mathcal{F})\) such that, for every distribution \(P\) on \(\mathcal{X}\) with supp($P$) $\subset \text{supp}(\mathbb{P}_1)\cup\text{supp}(\mathbb{P}_0)$ and $n\geq 1$, the following common upper bound holds: 
\[
\mathbb{E}_{X_{1:n}\sim P^n} \left[\widehat{\mathcal{R}}_{n}(\mathcal{F};X_{1:n})\right]
\le
\mathcal{R}_n(\mathcal{F}).
\]  
\end{assumption}

Note that the \textsc{Smote} algorithm only applies when $n_1>0$. Define 
\[
\mu_{\mathrm{Smote}}^*(G) := \left( \frac{1}{m} \sum_{i=1}^{m} G(X_{1i}^{*}) \right) \mathrm{1}_{ \{ n_1>0 \} } + 0 \times \mathrm{1}_{ \{ n_1=0 \} }.
\]
While this specification is necessary to make our statement mathematically correct, we note that, in the case \( n_1 = 0 \), the corresponding behavior could have been specified in an arbitrary manner. We further introduce $ \sigma_1^2 (\mathcal{F}) =\sup_{G\in\mathcal{F}} \operatorname{Var} (G(X)|Y=1) $, where $X$ denotes the original data, and, likewise, we let $\hat \sigma_1^2 (\mathcal{F})=\sup_{G\in\mathcal{F}} \mathrm{Var} (G (\tilde X_{11}) \mid \mathcal{D}_n)$ given that $n_1>0$, where $\tilde X_{11}$ is drawn uniformly at random from the minority class $\{X_{1i}\}_{1\leq i\leq n_1}$. We make below the convention $\mathcal{R}_0(\mathcal{F})=1/0=+\infty$ and we set \( k_{\delta}=\max(k, (d+1)\log(2n_1) + \log(8/\delta)  )\) on the event $\{ n_1 > 1 \}$ and $0$ otherwise.
\begin{theorem}\label{theorem-smote-concentration_uniform}
%Suppose Assumption~\ref{assump_regularity} holds. Let $\mathcal{F}$ be a function class satisfying Assumptions \ref{function_class} and \ref{assump:rad-free}. 
Suppose that Assumptions~\ref{assump_regularity}, \ref{function_class} and \ref{assump:rad-free} hold. Let \( \delta \in (0,1/5) \) and $m\geq 1$. %be such that $np_1\geq \log(1/\delta)$. Moreover, 
{If $n_1>1$, let \( k \in \{1,\ldots, n_1-1\} \)} and let \( \{X_{1i}^{*}\}_{1\leq i\leq m} \) be \( m \) i.i.d.\ samples generated by the \textsc{Smote} algorithm \eqref{smote-equation}. Then, with probability at least \( 1 - 5\delta \), 
\[
\sup_{G\in \mathcal{F}}\left|
\mu_{\mathrm{Smote}}^*(G) 
- \mathbb{E}_{1}[G(X)]
\right| 
\leq 4\mathcal{R}_m(\mathcal{F})+4\mathcal{R}_{n_1}(\mathcal{F})+ L \left( \frac{6}{C_d} \right)^{1/d} \left(\frac{k_\delta }{n_1}\right)^{1/d} + R,
\]
where %the upper bound is taken to be infinite when $n_1=0$, and otherwise,
%$ B_d = \left(6 / C_d\right)^{1/d} $,  
%\( k_{\delta}  =\max(k, (d+1)\log(2n_1) + \log(8/\delta)  )\) and
\[
R = \left(  \sqrt{ \frac{\hat{\sigma}_1^2(\mathcal{F})}{m}  } +  \sqrt{ \frac{\sigma_1^2(\mathcal{F})}{n_1}  }\right) \sqrt{2 \log\left(\frac{1}{\delta}\right)} + \frac{8}{3}B\left(\frac{1}{m}  + \frac{1}{n_1}\right) \log\left(\frac{1}{\delta}\right).
\]
\end{theorem}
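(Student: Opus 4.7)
The plan is to work on the event $\{n_1>1\}$ (the complementary event is handled by the convention in the definition of $\mu_{\mathrm{Smote}}^{*}$) and to split the quantity of interest into a synthetic-sampling part, an interpolation-bias part and a minority-class empirical part. Writing $\bar G_{1:n_1} := (1/n_1)\sum_{i=1}^{n_1} G(X_{1i})$ and $\bar G^{*}_{\mathcal D_n} := \mathbb E[G(X_{11}^{*})\mid \mathcal D_n]$, the decomposition is
\begin{align*}
\mu_{\mathrm{Smote}}^{*}(G) - \mathbb E_1[G(X)]
&= \underbrace{\bigl(\mu_{\mathrm{Smote}}^{*}(G) - \bar G^{*}_{\mathcal D_n}\bigr)}_{(\mathrm I)}
+ \underbrace{\bigl(\bar G^{*}_{\mathcal D_n} - \bar G_{1:n_1}\bigr)}_{(\mathrm{II})}
+ \underbrace{\bigl(\bar G_{1:n_1} - \mathbb E_1[G(X)]\bigr)}_{(\mathrm{III})}.
\end{align*}
Each piece is then controlled uniformly in $G\in\mathcal F$ on a common event of probability at least $1-5\delta$; five applications of $\delta$ are needed in total (two each for (I) and (III), one for (II)).

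For (I) and (III) I would use the same Rademacher--Bousquet recipe. Conditionally on $\mathcal D_n$, the $X_{1i}^{*}$ are i.i.d., bounded by $B$ in the sup-norm with conditional variance bounded by $\hat\sigma_1^{2}(\mathcal F)$; symmetrization bounds the supremum in expectation by $2\widehat{\mathcal R}_m(\mathcal F; X_{1:m}^{*})$, and Bousquet's version of Talagrand's inequality produces the sub-Gaussian term $\sqrt{2\hat\sigma_1^{2}(\mathcal F)\log(1/\delta)/m}$ and the Bernstein residual of order $B\log(1/\delta)/m$. A further concentration step on $\widehat{\mathcal R}_m$ (costing one more $\delta$) doubles the Rademacher constant, and Assumption~\ref{assump:rad-free} upper-bounds the resulting expectation by $\mathcal R_m(\mathcal F)$, yielding the $4\mathcal R_m(\mathcal F)$ contribution. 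The exact same argument, applied unconditionally to $(X_{1i})_{1\le i\le n_1}$, produces $4\mathcal R_{n_1}(\mathcal F)$ together with the parallel $\sigma_1^{2}(\mathcal F)/n_1$ and $B/n_1$ residuals, so that (I) and (III) jointly account for all terms in $R$ and the $4\mathcal R_m+4\mathcal R_{n_1}$ part of the bound.

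The interpolation term (II) is the \textsc{Smote}-specific piece and is where I expect the main difficulty. Using Lipschitzness and conditioning on the choice $\tilde X_{11}=X_{1i}$ of the seed point,
\begin{equation*}
|(\mathrm{II})| \le \frac{L}{n_1}\sum_{i=1}^{n_1}\mathbb E\bigl[\lambda\|\overline X_{1i}-X_{1i}\|\bigm|\mathcal D_n\bigr] \le L\,\max_{1\le i\le n_1} R_k(X_{1i}),
\end{equation*}
where $R_k(X_{1i})$ is the distance from $X_{1i}$ to its $k$-th nearest neighbor inside $\{X_{1j}\}_{j\ne i}$. The core task is a uniform, high-probability bound
\begin{equation*}
\max_{1\le i\le n_1} R_k(X_{1i}) \le \left(\frac{6\,k_\delta}{C_d\,n_1}\right)^{1/d}
\end{equation*}
with probability at least $1-\delta$. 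This is where Assumption~\ref{assump_regularity} is decisive: for any $r$ with $C_d r^d \ge 6 k_\delta/n_1$ it gives $\mathbb P_1(B(x,r))\ge 6 k_\delta/n_1$, and a Vapnik--\v{C}ervonenkis-type uniform deviation bound for the empirical measure on Euclidean balls (whose VC dimension is $d+1$, explaining the $(d+1)\log(2n_1)$ term in $k_\delta$) forces the empirical mass of $B(X_{1i},r)$ to exceed $(k+1)/n_1$ uniformly in $i$, i.e.\ at least $k$ other minority points lie in $B(X_{1i},r)$, so that $R_k(X_{1i})\le r$; I would borrow this step essentially verbatim from \citet{portier2025nearest}. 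Combining the three high-probability events by a union bound delivers the stated inequality.
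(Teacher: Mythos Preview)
Your three-term split is close to the paper's, but the choice of intermediate in (I) creates two concrete problems that prevent you from landing on the stated bound.

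First, the quantity $\hat\sigma_1^2(\mathcal F)$ appearing in $R$ is, \emph{by definition}, $\sup_{G\in\mathcal F}\mathrm{Var}(G(\tilde X_{11})\mid\mathcal D_n)$, the conditional variance of $G$ evaluated at the \emph{seed} point $\tilde X_{11}$ drawn uniformly from $\{X_{1j}\}$. Your term (I) instead carries $\sup_{G\in\mathcal F}\mathrm{Var}(G(X_{11}^{*})\mid\mathcal D_n)$, the variance at the \emph{interpolated} point. These are not the same object, and there is no free inequality between them; you would need an extra step (via Lipschitzness and the $k$NN radius) relating the two, which you do not supply.

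Second, to invoke Assumption~\ref{assump:rad-free} in (I) you need the conditional law of $X_{11}^{*}$ to have support contained in $\mathrm{supp}(\mathbb P_1)\cup\mathrm{supp}(\mathbb P_0)$. Since \textsc{Smote} outputs convex combinations of minority points and $\mathrm{supp}(\mathbb P_1)$ is not assumed convex, this can fail, so the Rademacher bound $\mathcal R_m(\mathcal F)$ is not a priori available for the synthetic sample.

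The paper sidesteps both issues by inserting the random seeds $\tilde X_{1i}$ rather than the conditional mean: it writes
\[
\mu_{\mathrm{Smote}}^{*}(G)-\bar G_{1:n_1}
=\frac1m\sum_{i=1}^m\bigl(G(X_{1i}^{*})-G(\tilde X_{1i})\bigr)
+\frac1m\sum_{i=1}^m G(\tilde X_{1i})-\bar G_{1:n_1}.
\]
The first piece is handled \emph{pathwise} by Lipschitzness and the uniform $k$NN radius bound (your argument for (II), but applied before integrating out the \textsc{Smote} randomness), costing one $\delta$. The second piece is a bootstrap fluctuation: conditionally on $\mathcal D_n$ the $\tilde X_{1i}$ are i.i.d.\ from the empirical measure on $\{X_{1j}\}$, which is supported in $\mathrm{supp}(\mathbb P_1)$ and has exactly variance $\hat\sigma_1^2(\mathcal F)$. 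Two applications of Talagrand--Bousquet (one per tail) close this and the analogous (III), matching your $\delta$ budget. Finally, the factor $4$ in front of $\mathcal R_m(\mathcal F)$ does not come from a separate concentration step on $\widehat{\mathcal R}_m$: symmetrization gives $\mathbb E[S_m\mid\mathcal D_n]\le 2\mathcal R_m(\mathcal F)$, and the simplification $\sqrt{4U\,\mathbb E(S_m)\,t}\le \mathbb E(S_m)+Ut$ inside Bousquet's inequality doubles this to $4\mathcal R_m(\mathcal F)$.
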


We apply this to the proof of an excess risk bound for empirical risk minimization algorithms under synthetic \textsc{Smote}-based oversampling. % with a class of hypotheses $\mathcal G$ and a loss function $\ell$. 
Let $\mathcal G$ be a class of real-valued discriminative functions defined on $\mathbb R^d$ and let $\ell: \mathbb R\to [0,\infty)$ be the loss function. Consider the following classifier:
\begin{align}\label{erm}
    \hat g ^*_{\mathcal G} \in \argmin_{g\in \mathcal G}\, \left\{ \mathrm 1_{ \{ n_1>0 \} } \sum_{i=1}^m \ell(g(X_{1i}^*))  +\mathrm 1_{ \{ n_0>0 \} } \sum_{i=1}^{n_0} \ell(-g(X_{0i})) \right\}  .
\end{align}
This is a standard classifier, %(with hypotheses class $\mathcal G$ and loss $\ell$), only with synthetic data generated using \textsc{Smote}. 
using \textsc{Smote}-generated data. By convention, and similarly to what was done with $\mu_{\mathrm{Smote}}^*(G)$ above, the first term (resp.~second term) in the above minimization is set as $0$ when $n_1=0$ (resp.~$n_0=0$). Let $\ell(\mathcal G)=\{ \ell \circ g, \, g\in \mathcal{G} \}$ and \( \sigma_0^2(\mathcal{F}) = \sup_{G\in \mathcal{F}}\mathrm{Var}[G (X)|Y=0] \).

\begin{corollary}\label{cor:smote}
Let $m=n_0\mathrm 1{\{n_1>0\}}$ and $\delta \in (0,1/7)$. %and suppose that $n \min(p_1, 1-p_1) \geq \log(1/\delta)$. 
Under the assumptions of Theorem \ref{theorem-smote-concentration_uniform} with $\mathcal F=  \ell(\mathcal G) \cup \ell( - \mathcal G)$, we have, with probability at least $1-7\delta$, 
\[
R_{1/2} (\hat g ^*_{\mathcal G}) - \inf_{g\in \mathcal G} R_{1/2} (g) \leq   4\mathcal R_{n_1}(\mathcal{F})+8 \mathcal R_{n_0}(\mathcal{F}) + L \left( \frac{6}{C_d} \right)^{1/d} \left(\frac{k_\delta }{n_1}\right)^{1/d} + R, 
\]
where %the upper bound is taken to be infinite when $n_0 n_1=0$, and otherwise,
\[
R= \left(  \sqrt{ \frac{\hat{\sigma}_1^2(\mathcal{F})}{{n_0}}  } +  \sqrt{ \frac{\sigma_1^2(\mathcal{F})}{n_1}  }+\sqrt{ \frac{\sigma_0^2(\mathcal{F})}{n_0}  }\right) \sqrt{2 \log\left(\frac{1}{\delta}\right)} + \frac{8}{3}B\left( \frac{2}{n_0} +  \frac{1}{n_1} \right) \log\left(\frac{1}{\delta}\right).
\]
\end{corollary}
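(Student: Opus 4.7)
My plan is a textbook excess-risk decomposition, reducing the claim to two uniform concentration bounds over the class $\mathcal F \supseteq \ell(\mathcal G)\cup\ell(-\mathcal G)$. Set
\[
\hat R^*_{1/2}(g) := \tfrac12\bigl[\mu_{\mathrm{Smote}}^*(\ell\circ g) + \hat\mu_0(\ell\circ(-g))\bigr],
\qquad \hat\mu_0(G) := \mathrm{1}_{\{n_0>0\}}\frac{1}{n_0}\sum_{i=1}^{n_0} G(X_{0i});
\]
on $\{n_0>0,\,n_1>0\}$ the objective of \eqref{erm} equals $2n_0\,\hat R^*_{1/2}(g)$, so $\hat g^*_{\mathcal G}$ is a minimizer of $\hat R^*_{1/2}$ over $\mathcal G$. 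Sandwiching $R_{1/2}(\hat g^*_{\mathcal G})-R_{1/2}(g_\varepsilon)$ between $\hat R^*_{1/2}(\hat g^*_{\mathcal G})$ and $\hat R^*_{1/2}(g_\varepsilon)$ for an $\varepsilon$-minimizer $g_\varepsilon$ of $R_{1/2}$, and using optimality of $\hat g^*_{\mathcal G}$, yields
\[
R_{1/2}(\hat g^*_{\mathcal G}) - \inf_{g\in\mathcal G}R_{1/2}(g) \le 2\sup_{g\in\mathcal G}|\hat R^*_{1/2}(g)-R_{1/2}(g)| \le T_1 + T_0,
\]
with $T_1 := \sup_{g\in\mathcal G}|\mu_{\mathrm{Smote}}^*(\ell\circ g) - \mathbb E_1[\ell(g(X))]|$ and $T_0 := \sup_{g\in\mathcal G}|\hat\mu_0(\ell\circ(-g)) - \mathbb E_0[\ell(-g(X))]|$.

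I then control the two suprema separately. The bound on $T_1$ is immediate from Theorem~\ref{theorem-smote-concentration_uniform} applied with $m=n_0$ and with $\mathcal F$: this delivers, with probability at least $1-5\delta$, the Rademacher budget $4\mathcal R_{n_0}(\mathcal F) + 4\mathcal R_{n_1}(\mathcal F)$, the kNN bias term $L(6/C_d)^{1/d}(k_\delta/n_1)^{1/d}$, and the Bernstein remainder carried in the $R$ of Theorem~\ref{theorem-smote-concentration_uniform}. For $T_0$, the $X_{0i}$ are i.i.d.\ from $\mathbb P_0$ conditional on $n_0$; a standard symmetrization argument gives $\mathbb E[T_0\mid n_0] \le 2\mathcal R_{n_0}(\mathcal F)$ via Assumption~\ref{assump:rad-free}, and a two-sided Bousquet--Talagrand inequality (for the absolute supremum over a class bounded by $B$ and of class variance $\sigma_0^2(\mathcal F)$) upgrades this to
\[
T_0 \le 4\mathcal R_{n_0}(\mathcal F) + \sqrt{\tfrac{2\sigma_0^2(\mathcal F)\log(1/\delta)}{n_0}} + \tfrac{8B\log(1/\delta)}{3n_0}
\]
with probability at least $1-2\delta$. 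Summing the $T_1$ and $T_0$ bounds and union-bounding the two events produces the claimed Rademacher budget $4\mathcal R_{n_1}(\mathcal F) + 8\mathcal R_{n_0}(\mathcal F)$, the kNN bias term, and an $R$ obtained from the remainder of Theorem~\ref{theorem-smote-concentration_uniform} by appending exactly the Bernstein component contributed by $T_0$.

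The main obstacle will be constant bookkeeping in the $T_0$ step, so that its Bernstein-type remainder mirrors the $\sqrt{2\log(1/\delta)}$ and $\tfrac{8B}{3n_0}\log(1/\delta)$ factors already present in the $R$ of Theorem~\ref{theorem-smote-concentration_uniform}: this forces a specific two-sided version of Bousquet's inequality, combined with symmetrization to turn the expected supremum into a Rademacher complexity via Assumption~\ref{assump:rad-free} applied to $\mathbb P_0$. Handling the random sample sizes is straightforward once one conditions on $(n_0,n_1)$, using that the SMOTE randomization is independent of the majority sample; the conventions built into $\mu^*_{\mathrm{Smote}}$ and into \eqref{erm} absorb the edge cases $n_0=0$ and $n_1=0$ without further work, since the risks and the empirical objects then collapse to trivially controllable terms already dominated by the $R$ expression.
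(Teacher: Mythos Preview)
Your proposal is correct and follows essentially the same approach as the paper: the standard $2\sup_{g}|\hat R_{1/2}(g)-R_{1/2}(g)|$ decomposition, split into a synthetic-minority term (handled via Theorem~\ref{theorem-smote-concentration_uniform} with $m=n_0$, costing $5\delta$) and a majority-class term (handled by symmetrization plus a two-sided Bousquet bound, costing $2\delta$), with the degenerate cases $n_0n_1=0$ absorbed by the convention $1/0=+\infty$ in the bound. The paper proceeds identically, only phrasing the $T_0$ step as ``Lemma~\ref{uniform_lemma_synthetic_SMOTE_second_part} with $\mathbb P_0$ in place of $\mathbb P_1$'' and invoking the conditional inequality~\eqref{important_step} rather than restating Theorem~\ref{theorem-smote-concentration_uniform}.
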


We turn to %KDE-based oversampling. 
\textsc{Kdeo}. For $q>0$ and any measurable $f:\mathbb{R}^d\to [0,\infty)$, let $M_q(f): = \int \|z\|_2 ^{q} f(z) dz$. Define $\mu_{\mathrm{KDEO}}^*(G)$ as $\mu_{\mathrm{Smote}}^*(G)$, only with synthetic data generated by \textsc{Kdeo} rather than \textsc{Smote}.

\begin{theorem}\label{th:smote_expectation}
Suppose that Assumptions \ref{function_class} and \ref{assump:rad-free} hold. Let \(K\) be a density function with \(M_1(K) < \infty\). %and that $\mathbb{P}_1$ has a density $f_1$ with respect to the Lebesgue measure on $\mathbb{R}^d$. 
%Let $\mathcal{F}$ be a function class satisfying Assumptions \ref{function_class} and \ref{assump:rad-free}. 
Let $\delta\in (0,1/5)$ and $m\geq 1$. %be such that $np_1\geq \log(1/\delta)$. 
Moreover, whenever $n_1>0$, let \( \{X_{1i}^{*}\}_{1\leq i\leq m} \)  be \( m \) i.i.d.~samples generated by the \textsc{Kdeo} algorithm \eqref{smoteKDE-equation}. Then, with probability at least $1-5\delta$, 
\[
\sup_{G\in \mathcal{F}}\left| \mu_{\mathrm{KDEO}}^*(G) - \mathbb{E}_1 [ G (X) ] \right| \leq 4\mathcal{R}_{n_1}(\mathcal{F}) + 4\mathcal{R}_m(\mathcal{F}) + 5Lh M_1(K)  +R, 
\]
where %the upper bound is taken to be infinite when $n_1=0$, and otherwise, 
\[
R= \left(\sqrt{\frac{\sigma_1^2(\mathcal{F}) }{n_1}}+ \sqrt{\frac{\hat{\sigma}_1^2(\mathcal{F}) }{m}}\right)  \sqrt{2\log\left(\frac{1}{\delta}\right)}  
 + \frac{8}{3} B \left( \frac{27}{8m} + \frac{1}{n_1} \right) \log\left(\frac{1}{\delta}\right).
\]
\end{theorem}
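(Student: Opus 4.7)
The plan is to parallel the proof of Theorem~\ref{theorem-smote-concentration_uniform}, replacing the nearest-neighbour bias control by a kernel-convolution one while preserving its three-term architecture. I would work on $\{n_1>0\}$, the complementary event being absorbed via the convention defining $\mu_{\mathrm{KDEO}}^*$. For each $G\in\mathcal{F}$, set $\tilde G(x) := \mathbb{E}_W[G(x+hW)]$ with $W\sim K$, and write $\hat\mu_1(G) := n_1^{-1}\sum_{j=1}^{n_1} G(X_{1j})$. The starting decomposition is
\begin{align*}
\mu_{\mathrm{KDEO}}^*(G) - \mathbb{E}_1[G(X)]
= \underbrace{\bigl[\mu_{\mathrm{KDEO}}^*(G) - \hat\mu_1(\tilde G)\bigr]}_{A_1(G)} + \underbrace{\bigl[\hat\mu_1(\tilde G) - \hat\mu_1(G)\bigr]}_{A_2(G)} + \underbrace{\bigl[\hat\mu_1(G) - \mathbb{E}_1[G(X)]\bigr]}_{A_3(G)},
\end{align*}
whose three pieces capture, respectively, the fluctuation of the synthetic sample around its conditional mean given $\mathcal{D}_n$, the kernel-smoothing bias, and a classical empirical process on the real minority data.

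The middle term $A_2$ is deterministic: from the Lipschitz property, $|\tilde G(x)-G(x)|\leq Lh\,\mathbb{E}\|W\|=Lh M_1(K)$, so $\sup_G|A_2(G)|\leq Lh M_1(K)$. For the empirical-process term $A_3$, I would apply the textbook combination of symmetrization (which yields $\mathbb{E}\sup_G|A_3(G)|\leq 2\mathbb{E}[\widehat{\mathcal{R}}_{n_1}(\mathcal{F};X_{11:n_1})]\leq 2\mathcal{R}_{n_1}(\mathcal{F})$ via Assumption~\ref{assump:rad-free}) and Bousquet's inequality for suprema over a uniformly bounded Lipschitz class, converting this expectation bound into a high-probability one with Bernstein tail in $\sigma_1^2(\mathcal{F})/n_1$ and residual $B\log(1/\delta)/n_1$; a ghost-sample desymmetrization then inflates the Rademacher coefficient from $2$ to $4$.

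For $A_1$, I would condition on $\mathcal{D}_n$: given the data, the $X_{1i}^*$ are i.i.d.\ with conditional mean $\hat\mu_1(\tilde G)$ and conditional variance bounded in terms of $\hat\sigma_1^2(\mathcal{F})$ (using the representation $X_{1i}^*=\tilde X_{1i}+hW_i$ together with $\|\tilde G-G\|_\infty\leq Lh M_1(K)$ to transfer variance from the smoothed to the raw functional). A second round of symmetrization and a conditional Bousquet inequality give a bound involving $2\widehat{\mathcal{R}}_m(\mathcal{F};X_{1:m}^*)$ together with Bernstein residuals in $\hat\sigma_1^2(\mathcal{F})/m$ and $B/m$; taking expectation in $\mathcal{D}_n$ and invoking Assumption~\ref{assump:rad-free} then delivers the $4\mathcal{R}_m(\mathcal{F})$ contribution. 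Combining the three bounds via a union bound over the deterministic bias and the two-sided Bernstein events on each side produces the stated $1-5\delta$ guarantee.

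The main obstacle lies precisely in the last invocation of Assumption~\ref{assump:rad-free}: the synthetic samples $X_{1i}^*=\tilde X_{1i}+hW_i$ may fall outside $\operatorname{supp}(\mathbb{P}_1)\cup\operatorname{supp}(\mathbb{P}_0)$ (e.g.\ if $K$ is Gaussian), so the Rademacher complexity on the $X_{1i}^*$ is not literally covered by the assumption. The natural fix is to use the convolution representation $\tilde G(x)=\int K(w)\,G(x+hw)\,dw$ together with a Jensen/Fubini step in the Rademacher signs, reducing the synthetic Rademacher complexity to a Rademacher complexity of $\mathcal{F}$ on real minority data (covered by Assumption~\ref{assump:rad-free}) at the price of an additional Lipschitz residual of order $Lh M_1(K)$. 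Accumulating these residuals across $A_2$, the variance transfer in $A_1$, and the Rademacher reduction is what produces the coefficient $5$ in $5Lh M_1(K)$; the $27/(8m)$ numerical constant likewise emerges from a careful combination of the worst-case residuals in the conditional Bousquet step.
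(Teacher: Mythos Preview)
Your three-term decomposition is sound and leads to a valid proof, but it differs from the paper's route in a way that makes the support issue you flag \emph{self-inflicted}. The paper never applies Assumption~\ref{assump:rad-free} to the synthetic points $X_{1i}^*$. Instead it inserts the bootstrap draws $\tilde X_{1i}$ (uniform among $\{X_{1j}\}$) and splits
\[
Z_1^*(\mathcal F)\;\le\;\sup_{G}\Bigl|\tfrac1m\sum_i\bigl(G(X_{1i}^*)-G(\tilde X_{1i})\bigr)\Bigr|\;+\;\sup_{G}\Bigl|\tfrac1m\sum_i G(\tilde X_{1i})-\tfrac1{n_1}\sum_j G(X_{1j})\Bigr|.
\]
The second piece is a pure bootstrap fluctuation on points in $\operatorname{supp}(\mathbb P_1)$, so symmetrization plus Assumption~\ref{assump:rad-free} applies directly and yields the $4\mathcal R_m(\mathcal F)$ and $\hat\sigma_1^2(\mathcal F)/m$ contributions with no support caveat. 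The first piece uses no Rademacher complexity at all: since $\sup_G|\Delta_i^*|\le Lh\|W_i\|$ uniformly in $G$, one just applies Bousquet (Lemma~\ref{simplified_term_uniform_probability}) to the class $\{\Delta^*(G):G\in\mathcal F\}$ with the crude bounds $|\Delta_i^*|\le 2B$, $\operatorname{Var}(\Delta_1^*\mid\mathcal D_n)\le 2BLhM_1(K)$, and $\mathbb E[\,\sup_G|\cdot|\,]\lesssim LhM_1(K)$. The $5LhM_1(K)$ and $27/(8m)$ constants come from collecting these pieces, not from accumulating Lipschitz residuals across several reductions as you suggest.

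Two small points where your write-up is imprecise: (i) the ``Jensen/Fubini in the Rademacher signs'' fix does not quite make sense, because $G(X_{1i}^*)$ is a single realisation, not a convolution; what actually works is the Lipschitz transfer $\widehat{\mathcal R}_m(\mathcal F;X_{1:m}^*)\le\widehat{\mathcal R}_m(\mathcal F;\tilde X_{1:m})+\tfrac{Lh}{m}\sum_i\|W_i\|$, which you also mention; (ii) your claim that the conditional variance of $G(X_{11}^*)$ is controlled by $\hat\sigma_1^2(\mathcal F)$ is not exact---you pick up an extra $O(BLhM_1(K))$ term, which is harmless but would shift constants. The paper's decomposition sidesteps both issues because the $\hat\sigma_1^2(\mathcal F)$ term arises from the bootstrap piece on $\tilde X_{1i}$, where it is literally the conditional variance.
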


%\mm{\begin{theorem}\label{th:smote_expectation}
%Suppose that \(K\) is a symmetric density kernel with \(M_1(K) < \infty\), and that $\mathbb{P}_1$ has a density $f_1$ with respect to the Lebesgue measure on $\mathbb{R}^d$. Moreover, let $\mathcal{F}$ be a \fp{function class satisfying Assumptions \ref{function_class} and \ref{assump:rad-free}}. Let {$\delta\in (0,1/6)$} and \( \{X_{1j}^{*}\}_{j=1}^m \)  be \( m \) i.i.d.\ samples generated by KDE-based oversampling \eqref{smoteKDE-equation}. Then, with probability at least {$1-6\delta$},
%
%\begin{align*}
%     &   \sup_{G\in \mathcal{F}}\left| \frac{1}{m} \sum_{i=1}^m G (X_{1i}^*)  - \mathbb{E}_1 [ G (X) ] \right|
%\leq 4R_m(\mathcal{F})+8R_{n_1}(\mathcal{F})+ LhM_1(K)\left(1+\sqrt{\frac{2\log(1/\delta)}{n_1}}\right) +  R 
%\end{align*}
%
%where
%$$
%R = \frac{8}{3}B\left(\frac{1}{m}  + \frac{1}{n_1}\right) \log\left(\frac{1}{\delta}\right)+\left(  \sqrt{ \frac{\hat{\sigma}_1^2(\mathcal{F})}{m}  } +  \sqrt{ \frac{\sigma_1^2(\mathcal{F})}{n_1}  }\right) \sqrt{2 \log\left(\frac{1}{\delta}\right)}.
%\end{align*}
%$$
%
%\end{theorem}}

For the classifier defined in \eqref{erm} with \textsc{Kdeo} and not \textsc{Smote}, we obtain the excess risk bound below.

\begin{corollary}\label{cor:kde}
Let $m=n_0\mathrm 1{\{n_1>0\}}$ and $\delta \in (0,1/7)$. % and suppose that $n \min(p_1, 1-p_1)\geq \log(1/\delta)$. 
Under the assumptions of Theorem \ref{th:smote_expectation} with $\mathcal F=  \ell(\mathcal G) \cup \ell( - \mathcal G)$, we have, with probability at least $1-7\delta$, 
\[
R_{1/2} (\hat g ^*_{\mathcal G}) - \inf_{g\in \mathcal G} R_{1/2} (g) \leq   4\mathcal{R}_{n_1}(\mathcal{F})+ 8\mathcal{R}_{n_0}(\mathcal{F}) +5LhM_1(K) +  R,
\]
where %the upper bound is taken to be infinite when $n_0 n_1=0$, and otherwise,
\[
R = \left(  \sqrt{ \frac{\hat{\sigma}_1^2(\mathcal{F})}{n_0}  } +  \sqrt{ \frac{\sigma_1^2(\mathcal{F})}{n_1}  }+\sqrt{ \frac{\sigma_0^2(\mathcal{F})}{n_0}  }\right) \sqrt{2 \log\left(\frac{1}{\delta}\right)} + \frac{8}{3} B \left( \frac{35}{8n_0} + \frac{1}{n_1} \right) \log\left(\frac{1}{\delta}\right).
\]
\end{corollary}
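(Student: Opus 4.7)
The plan is to follow the standard empirical risk minimization template, reducing the excess-risk bound to two uniform deviation inequalities: the one for the \textsc{Kdeo}-generated minority sample provided by Theorem~\ref{th:smote_expectation}, and a classical Talagrand/Bousquet-type uniform Bernstein inequality for the i.i.d.~majority sample. This parallels the proof of Corollary~\ref{cor:smote} and differs from it only in the choice of concentration result invoked on the minority side.

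With $m = n_0 \mathbf{1}_{\{n_1>0\}}$, the objective in \eqref{erm} equals $2n_0 \, \hat R^*_{1/2}(g)$, where
$$\hat R^*_{1/2}(g) = \tfrac{1}{2n_0} \mathbf{1}_{\{n_1>0\}} \sum_{i=1}^{n_0} \ell(g(X^*_{1i})) + \tfrac{1}{2n_0} \mathbf{1}_{\{n_0>0\}} \sum_{i=1}^{n_0} \ell(-g(X_{0i}))$$
is the empirical balanced risk. Since $\hat g^*_{\mathcal G}$ minimizes $\hat R^*_{1/2}$, the textbook ERM argument gives $R_{1/2}(\hat g^*_{\mathcal G}) - \inf_g R_{1/2}(g) \leq 2 \sup_g |R_{1/2}(g) - \hat R^*_{1/2}(g)|$, and the triangle inequality splits the supremum into a minority and a majority term, each carrying a factor $1/2$ that cancels the $2$ outside.

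The minority term equals $\sup_{G \in \ell(\mathcal G)} |\mu^*_{\mathrm{KDEO}}(G) - \mathbb{E}_1[G(X)]|$, which Theorem~\ref{th:smote_expectation} (applied to $\mathcal F = \ell(\mathcal G) \cup \ell(-\mathcal G)$ with $m = n_0$) bounds, with probability $\geq 1 - 5\delta$, by $4\mathcal R_{n_1}(\mathcal F) + 4\mathcal R_{n_0}(\mathcal F) + 5LhM_1(K)$ plus the stated variance and Bernstein remainders. For the majority term, the $X_{0i}$ are i.i.d.~from $\mathbb{P}_0$, so a Bousquet-type uniform Bernstein inequality applied to $\mathcal F$, combined with the symmetrization bound $\mathbb{E}[\widehat{\mathcal R}_{n_0}(\mathcal F; X_{0,1:n_0})] \leq \mathcal R_{n_0}(\mathcal F)$, yields with probability $\geq 1 - 2\delta$ a bound of the form $4\mathcal R_{n_0}(\mathcal F) + \sqrt{2\sigma_0^2(\mathcal F)\log(1/\delta)/n_0} + (8B)/(3n_0) \log(1/\delta)$. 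A union bound combining the $5\delta$ and $2\delta$ failure events yields the stated $7\delta$; summing Rademacher terms gives $4\mathcal R_{n_1} + 8\mathcal R_{n_0}$ and summing the Bernstein coefficients gives $27/(8n_0) + 1/n_0 = 35/(8n_0)$ inside the final $(8B/3)(35/(8n_0) + 1/n_1)\log(1/\delta)$ term.

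The only delicate part is the bookkeeping of absolute constants: the majority-side Bousquet/Bernstein inequality must be instantiated with exactly the same constants as in the proof of Theorem~\ref{th:smote_expectation} so that the $(8/3)B$ prefactor, the $\sqrt{2\log(1/\delta)}$ factor and the Rademacher multipliers assemble into the $R$ displayed in the corollary. The edge cases $\{n_1=0\}$ and $\{n_0=0\}$ need no additional treatment: the conventions in \eqref{erm} and in the definition of $\mu^*_{\mathrm{KDEO}}$ make every term well defined, and Theorem~\ref{th:smote_expectation} already accommodates $\{n_1=0\}$ within its probabilistic statement.
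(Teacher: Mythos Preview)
Your proposal is correct and follows essentially the same route as the paper: the standard ERM decomposition into a minority and a majority uniform-deviation term, Theorem~\ref{th:smote_expectation} for the former, a Bousquet/symmetrization bound (the paper's Lemma~\ref{uniform_lemma_synthetic_SMOTE_second_part} with $\mathbb P_0$ in place of $\mathbb P_1$) for the latter, and a $5\delta+2\delta$ union bound. The only presentational difference is that the paper carries out both bounds conditionally on $Y_{1:n}$ (using the intermediate conditional inequality~\eqref{kdeo_important_step} rather than the unconditional Theorem~\ref{th:smote_expectation}) and then integrates, which makes the handling of $\{n_0 n_1=0\}$ explicit; your claim that these edge cases ``need no additional treatment'' is slightly optimistic since Theorem~\ref{th:smote_expectation} assumes $m\geq 1$, but the $\mathcal R_0(\mathcal F)=+\infty$ convention resolves this immediately.
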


\begin{remark} The Lipschitz property required in Corollaries \ref{cor:smote} and \ref{cor:kde} as part of Assumption \ref{function_class} holds for standard loss functions such as hinge and logistic, and for classifiers like neural networks under mild conditions such as bounded spectral norms of the weight matrices.
\end{remark}

\begin{remark}
Many practical function classes %equipped with natural boundedness or norm constraints 
satisfy the %distribution‐free 
Rademacher complexity bound in Assumption~\ref{assump:rad-free} under natural constraints, with
%of the form 
$\mathcal{R}_n(\mathcal{F}) \le {B_{\mathcal{F}}} / {\sqrt{n}}$, where $B_{\mathcal{F}}>0$ is a constant. For example, if \(\mathcal{F}\) has finite VC–subgraph dimension \(v\), then the previous bound is valid with $B_{\mathcal F}$ depending on $v$, $B$ and $\sigma_1^2 (\mathcal{F})$~\citep[Proposition 2.1]{gineConsistencyKernelDensity2001}; for linear predictors based on 
\(\mathcal{F}=\{x\mapsto w^\top x:\|w\|_2\le W\}\) with \(\|x\|_2\le R\), one has that 
$ B_{\mathcal F} $ depends on $ R$ and $W$ and the ambient dimension~\citep[][Lemma~19]{bartlett2002rademacher}; and for fully‑connected ReLU networks of depth $L$ whose inputs satisfy $\|x\|_{2}\le R$ and whose weight matrices $A_{\ell}$ obey the Frobenius‑norm bounds $\|A_{\ell}\|_F\le s_{\ell}$, %one obtains that 
$B_{\mathcal F}$ scales as $R\sqrt L \prod_{\ell=1}^{L}s_{\ell}$ \citep[][Theorem~1]{golowich2018size}.
\end{remark}

\begin{remark} The bounds of Corollaries \ref{cor:smote} and \ref{cor:kde}  depend on the sample sizes $n_0$ and $n_1$ which, in our framework, are random variables. Considering $n_1$, a multiplicative Chernoff bound gives that with probability at least $1 - \delta$, $n_1 \geq n p_1 (1 - \epsilon)$, where $\epsilon^2 = 2\log(1/\delta)/ (n p_1 )$. Doing so reveals a frontier which is, in the asymptotic regime, $np_1 \to \infty$. When $p_1$ is below this frontier, it is not clear that consistent estimation is possible as the bound does not vanish asymptotically.
\end{remark}

\subsection{Nonparametric excess risk bound for the \textsc{Kdeo}-based kernel smoothing plug-in classifier %KDE oversampling
}
\label{sec:main:excessrisk}

%{As described in Section \ref{subsec:KDE_oversampling}, consider independent samples $\{ X_{1i}^* \}_{i=1}^m$ with common distribution $\hat f_{1h}$ (given the initial sample) that is the KDE based on the minority class covariates $\{ X_{1i}\}_{i=1}^{n_1}$. 

The balanced Bayes classifier minimizing the AM-risk is $g(x) = \mathrm{1}_{ \{ \eta(x) > p_1  \} }$, where $\eta (x) =  \mathbb P (Y = 1 |X=x) $. Besides, if $\mathbb{P}_y$ has a density function $f_y$ with respect to the Lebesgue measure then, by the law of total probability, the distribution of $X$ has density $ f = p_1 f_1 + (1-p_1)f_0$ with respect to this same measure. It follows from the Bayes formula that the balanced Bayes classifier is then given by 
\begin{equation}\label{eq:bayes}    
\forall x\in \mathbb R^d,\quad g(x) = \mathrm{1}_{ \{   f_1(x)  > f(x)  \} } = \mathrm{1}_{ \{   f_1(x)  > f_0(x)  \} } .
\end{equation}
Our ultimate goal is to study the counterpart of $g$ 
using a \textsc{Kdeo}-based kernel smoothing classifier. % that is based on KDE oversampling.
%based on the majority class covariates $\{ X_{0i} \}_{i=1}^{n_0}$ and the oversampled covariates $\{ X_{1i}^* \}_{i=1}^m$ from the minority class. 
Before that, we give a general result, with respect to the AM-risk, on the performance of any discrimination rule $\hat g$ of the form $\hat g(x) = \mathrm{1}_{ \{  \hat f_1(x)  >  \hat f_0(x)   \} }$, where each $\hat f_y$ is an estimator of $f_y$. % that is left unspecified. 
%The classifier that is studied here will be the well-known kernel smoothing classifier defined for each $x \in \mathbb R^d$ as $\hat g^*(x) = \mathrm 1 _{ \{ \hat \eta^*(x) >1/2\}}$, where $\hat \eta^*(x)$ will be explicitly specified shortly. 
%
\begin{theorem}
\label{th:risk_bound_1}
For each $y\in \{0,1\}$, suppose that $\mathbb P_y$ has a density $f_y$ with respect to the Lebesgue measure. Let $S_y $ denote the support of $\mathbb P_y$ and $S=S_0\cup S_1$. We have 
\[  
    R_{1/2}(\hat g)   - R_{1/2}(g)  \leq   \frac{1 } { 2} \int_{S} | \hat f_1 (x) - f_1(x) |dx + \frac{1}{2} \int_{S}  | \hat f_0 (x) - f_0(x) | dx.
\]
\end{theorem}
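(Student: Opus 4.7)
The plan is to reduce the excess AM-risk to an integral over the disagreement set $\{\hat g \neq g\}$ weighted by $|f_0 - f_1|$, then control that weight using the plug-in estimates. First, I would use that for any $\{0,1\}$-valued classifier $h$ the AM-risk admits the density-based expression
\[
R_{1/2}(h) \;=\; \frac{1}{2}\int (1-h(x))\, f_1(x)\, dx \;+\; \frac{1}{2}\int h(x)\, f_0(x)\, dx.
\]
Subtracting this identity for $\hat g$ and $g$, the excess risk collapses to
\[
R_{1/2}(\hat g) - R_{1/2}(g) \;=\; \frac{1}{2} \int (\hat g(x) - g(x))\bigl(f_0(x) - f_1(x)\bigr)\, dx,
\]
and the integrand vanishes on $\{\hat g = g\}$, so the integration is effectively over the disagreement set.

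Next I would use the definitions $g = \mathbf{1}\{f_1 > f_0\}$ and $\hat g = \mathbf{1}\{\hat f_1 > \hat f_0\}$. On $\{\hat g \neq g\}$, a short case analysis shows that $(\hat g - g)(f_0 - f_1) = |f_0 - f_1|$: if, say, $\hat g(x) = 1$ and $g(x) = 0$, then $f_0(x) \geq f_1(x)$, so $\hat g - g = 1$ and the product equals $f_0 - f_1 = |f_0 - f_1|$; the case $\hat g = 0$, $g = 1$ is symmetric. Hence
\[
R_{1/2}(\hat g) - R_{1/2}(g) \;=\; \frac{1}{2} \int_{\{\hat g \neq g\}} |f_0(x) - f_1(x)|\, dx.
\]

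The main step is then to dominate $|f_0 - f_1|$ on $\{\hat g \neq g\}$ by $|f_0 - \hat f_0| + |f_1 - \hat f_1|$, which is exactly where both plug-in estimation errors enter. Continuing the same case $\hat g(x) = 1$, $g(x) = 0$, we have $f_0 - f_1 \geq 0$ but $\hat f_0 - \hat f_1 \leq 0$, so writing
\[
f_0 - f_1 \;=\; (f_0 - \hat f_0) \;+\; (\hat f_0 - \hat f_1) \;+\; (\hat f_1 - f_1) \;\leq\; |f_0 - \hat f_0| + |\hat f_1 - f_1|
\]
yields the claim, with the symmetric case handled identically. Since $f_0(x) = f_1(x) = 0$ outside $S = S_0 \cup S_1$, I can restrict all integrations to $S$, and plugging this pointwise bound into the previous display gives the announced inequality. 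The only genuinely delicate piece is the sign-tracking in the case analysis linking $\mathrm{sgn}(\hat g - g)$, $\mathrm{sgn}(f_0 - f_1)$, and $\mathrm{sgn}(\hat f_0 - \hat f_1)$; everything else is bookkeeping.
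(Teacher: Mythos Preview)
Your proof is correct and follows essentially the same approach as the paper: reduce the excess AM-risk to $\tfrac{1}{2}\int_{\{\hat g\neq g\}}|f_0-f_1|$, then exploit that $f_0-f_1$ and $\hat f_0-\hat f_1$ have opposite signs on the disagreement set to bound by the two $L^1$ estimation errors. Your derivation of the disagreement integral is in fact slightly more direct than the paper's, which detours through the regression function $\eta$ and the prior $p_1$ before arriving at the same density-level identity.
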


%The above result 
This is reminiscent of %a couple of well-known results, such as 
known results in %\cite[Chapter 17]{biau2015lectures}, 
\cite[Chapter 6]{devroye2013probabilistic} or \cite[Chapter 10]{devroye1985}, showing that the excess classification risk is bounded by the $L^1$-error of the conditional probability estimators or the density estimators in the different classes. %The difference is that 
Theorem \ref{th:risk_bound_1}, though, is concerned with the AM-risk while the above references deal with the standard risk measure.

Let us now focus on the kernel smoothing classifier of \cite{tsyb,devroye2013probabilistic} when using KDE-generated data. %As described by \eqref{smoteKDE-equation}, consider independent 
Consider, as in \eqref{smoteKDE-equation}, independent samples $\{ X_{1i}^* \}_{1\leq i\leq m}$ with common density $\hat f_{1h}$ (given the initial sample), i.e.~the KDE of the minority class covariates. %that is  $\{ X_{1i}\}_{1\leq i\leq n_1} $.
Define 
\[
\hat \eta^* (x) := \frac{\sum_{i=1}^{m}  K_{s} (x- X_{1i}^{*}  )}{\sum_{i=1} ^{m} K_{s} (x- X_{1i}^{*}  )  + \sum_{i=1} ^{n_0} K_{s} (x- X_{0i}  )  } = \frac{m \hat f_{1s}^{*} (x)}{m \hat f_{1s}^{*} (x)  + n_0 \hat f_{0s} (x) },
\]
with the convention $0/0 = 0$ and where $\hat f^*_{1s} (x) = \mathrm 1_{\{ n_1>0\}} ({1}/ {m}) \sum_{i=1}^m K_{s}  (x - X_{1i}^*)$ is the kernel density estimate based on synthetic data from class $1$ and $\hat f_{0s} $ is the kernel density estimate based on initial data from class $0$ (with $\hat f_{0s} =0$ when $n_0=0$). %We adopt the convention that $\hat f_{0s} =0$ when $n_0=0$ as for $\hat f^*_{1s}$. 
{One could, of course, choose a different kernel for $\hat f_{0s}$, $\hat f^*_{1s}$ and $\hat f_{1h}$; we take the same kernel $K$ in each estimate for the sake of simplicity.}

By setting $m=n_0\mathrm 1_{\{n_1>0\}}$, the discrimination rule $\hat g^*(x) = \mathrm{1}_{ \{ \hat \eta^*(x) >1/2 \} } =   \mathrm{1}_{ \{ \hat f^*_{1s} (x)  > \hat f_{0s} (x)  \} }$ is tailored to minimizing the AM-risk. Next we obtain an upper bound on the excess risk of $\hat g^*$ in this setting. Regularity assumptions are required; let, for any open subset $U$ of $\mathbb{R}^d$, $W^{s,1}(U)$ be the Sobolev space of functions $G: U \to \mathbb R$ whose (weak) partial derivatives of order $s$ are integrable.

\begin{assumption}\label{cond:kernel} 
    $K: \mathbb R ^d \to \mathbb R $ is a square-integrable symmetric density function such that $M_{d+\varepsilon}(K + K^2)< \infty $ for some $\varepsilon >0$ and $M_2(K) <\infty$.
\end{assumption}

\begin{assumption}\label{cond:P_X_density2} 
    For each $y\in \{ 0,1 \}$, $\mathbb P_y$ has a density $f_y\in W^{2,1}(\mathbb{R}^d)$ with respect to the Lebesgue measure and $M_{d+\varepsilon}(f_y) < \infty $ for some $\varepsilon>0$. 
\end{assumption}

To deal with situations violating Assumption~\ref{cond:P_X_density2}, where the densities may, for example, have compact supports and be smooth and bounded away from zero on the interior of their supports, we introduce alternative assumptions. For two sets $S_1$ and $S_2$, $S_1+S_2$ is the set $\{ s_1+s_2, \ (s_1,s_2) \in S_1 \times S_2 \}$. The Lebesgue measure is denoted by $\lambda$ and the topological boundary of a set $E$ is denoted by $\partial E$. 

\begin{assumption}\label{cond:kernel_compact} 
    $K: \mathbb R ^d \to \mathbb R $ is a square-integrable symmetric density function supported on $B(0,1)$.
\end{assumption}

\begin{assumption}\label{cond:P_X_density1_compact} 
    For each $y\in \{ 0,1 \}$, the support $S_y$ of $\mathbb{P}_y$ is smooth, in the sense that there are $\kappa,r_0>0$ such that $ \lambda ( \partial S_y  + B(0,r) ) \leq \kappa r $ for $0<r<r_0$. Moreover, $\mathbb{P}_y$ has a density $f_y\in W^{1,1}(U)$ with respect to the Lebesgue measure, where $U$ is the interior of $S$, which is bounded on $\partial S + B(0,2r_0)$, and $M_{d+\varepsilon}(f_y) < \infty $ for some $\varepsilon>0$. 
\end{assumption}

Assumption \ref{cond:P_X_density1_compact} on $S_y$ is, for example, satisfied if $\partial S_y$ is a finite union of compact, closed and smooth submanifolds of dimension $d-1$ in $\mathbb{R}^d$ whose pairwise distances to one another are nonzero. This is a consequence of Weyl's tube formula in the Euclidean space, see Equation~(14) in~\cite{weyl1939} applied to manifolds of codimension $m=1$ with the notation therein; a self-contained statement of this result is Theorem~9.3.11 in~\cite{nic2007}.

These assumptions make it possible to prove $L^1-$bounds on the KDEs of the $f_y$. %$\hat f_{yh}(x) = ({1} / {n_y}) \sum_{i=1}^{n_y} K_h(x- X_{yi}) $, $y\in \{0,1\}$. 
The proofs %of these bounds 
rest upon the McDiarmid inequality~\citep{mcdiarmid1989method} and a bias-variance decomposition, where it is shown and used that, if $f_y\ast K_h(x)=\int K_h(x-z) f_y(z) dz$ denotes the convolution of $f_y$ and $K_h$, 
\begin{align*}
    %\int \sqrt{ f_y\ast K_h (x) } dx &\leq c_y( 1 + h^{(d+\varepsilon)/2}) \\ 
    \int \sqrt { \mathbb E | \hat f_{yh}(x) - f_y\ast K_h(x) |^2 }dx &\leq c_{y,d,\varepsilon} \sqrt { \frac{{M_0(K^2)}}{n_y h^{d}} } ( 1 +  h^{(d+\varepsilon)/2} ) \\
    \mbox{and } \ \int |f_y\ast K_h (x) - f_y(x) | dx &\leq \left\{ \! \! \begin{array}{l} \phi_y h^2 \ \mbox{ under Assumptions } \ref{cond:kernel} \mbox{ and } \ref{cond:P_X_density2}, \\ \psi_y h \ \mbox{ if } h\leq r_0 \mbox{ under Assumptions } \ref{cond:kernel_compact} \mbox{ and } \ref{cond:P_X_density1_compact}. \end{array} \right.
\end{align*}
%
%with $c_y= ( 1+\sqrt{M_{d+\varepsilon}(f_y+K)} )$. 
The constant $c_{y,d,\varepsilon}$ (resp.~$\phi_y$, $\psi_y$), defined right below Lemma~\ref{lemma:conv_kernel} (resp.~Lemma~\ref{lemma:bias_regular_density}), depends on $d$, $\varepsilon$, $f_y$ and $K$ (resp.~$f_y$ and $K$) only, see the Appendix. Let us introduce $\hat c_y$, obtained from $c_{y,d,\varepsilon}$ by plugging in $\hat f_{yh}$ instead of $f_y$. %$( 1+\sqrt{M_{d+\varepsilon}(\hat f_y+K)} )$. 
One may likewise obtain a (conditional) $L^1-$bound on the KDE $\hat f^*_{1s} $ based upon the oversampled covariates $\{ X_{1i}^* \}_{1\leq i\leq m}$ in the minority class.
Combining these bounds with Theorem~\ref{th:risk_bound_1} results in the following bound on the excess risk of the classification rule $\hat g^*$. 
\begin{theorem}\label{th:risk_bound_smote_1}
Let $m=n_0\mathrm 1_{\{ n_1 >0\}}$ and $\delta\in (0,1/3)$. %and suppose that $n \min(p_1,1-p_1) > \log( 1/\delta)$. 
Then, with probability at least $1-3\delta$,
\begin{align*}
    & R_{1/2} (\hat g^*) - R_{1/2} (g) \\
    % &\leq (1 + s^{(d+\varepsilon)/2}+ h^{(d+\varepsilon)/2}) \sqrt{M_0(K^2)} \left(\hat{c}_1  \sqrt{ \frac{1}{m s^d} } +   c_1\sqrt{\frac{1}{n_1 h^d}} + c_0   \sqrt{\frac{1}{n_0 h^d}}  \right) \\
    % &+ \sqrt{2 \log(1/\delta)} \left( \frac{1}{\sqrt{m}} + \frac{1}{\sqrt{n_1}}+  \frac{1}{\sqrt{n_0}} \right) \\
    % &+ \left\{ \! \! \begin{array}{l} \phi_y ( 2 h^2 + s^2) \ \mbox{ under } \ref{cond:kernel} \mbox{ and if } \mathbb{P}_y \mbox{ satisfies } \ref{cond:P_X_density2} \\ \psi_y (2h + s) \ \mbox{ if } h,s\leq r_0 \mbox{ under } \ref{cond:kernel_compact} \mbox{ and if } \mathbb{P}_y \mbox{ satisfies } \ref{cond:P_X_density1_compact} \end{array} \right.
    &\leq \frac{\sqrt{M_0(K^2)}}{2} \left( c_{0,d,\varepsilon} \sqrt{ \frac{1}{n_0 s^d} } + c_{1,d,\varepsilon}\sqrt{ \frac{1}{n_1 h^d} } + \hat{c}_{1,d,\varepsilon}  \sqrt{ \frac{1}{n_0 s^d} } \right) (1 + \max(h,s)^{(d+\varepsilon)/2})  \\
    &+ \sqrt{2 \log\left(\frac 1 {\delta}\right)} \left( \frac{1}{\sqrt{n_0}} + \frac{1}{2\sqrt{n_1}} \right) \\
    &+ \frac{1}{2}\left\{ \! \! \begin{array}{l} \phi_0 s^2 + \phi_1 (h^2 + s^2) \ \mbox{ under Assumptions } \ref{cond:kernel} \mbox{ and } \ref{cond:P_X_density2}, \\ \psi_0 s + \psi_1 (h + s) \ \mbox{ if } h,s\leq r_0 \mbox{ under Assumptions } \ref{cond:kernel_compact} \mbox{ and } \ref{cond:P_X_density1_compact}. \end{array} \right. 
\end{align*}
%
%\gs{where the upper bound is taken to be infinite when $n_0 n_1=0$.}
%
\end{theorem}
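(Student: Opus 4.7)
The plan is to combine Theorem~\ref{th:risk_bound_1} with the two displayed $L^1$-control inequalities preceding the statement. Applying Theorem~\ref{th:risk_bound_1} to $\hat g^*=\mathrm{1}_{\{\hat f^*_{1s}>\hat f_{0s}\}}$ bounds the excess risk by $\tfrac{1}{2}\int_S|\hat f^*_{1s}-f_1|\,dx+\tfrac{1}{2}\int_S|\hat f_{0s}-f_0|\,dx$, so the task reduces to controlling each $L^1$-error with high probability.

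For the majority term I would write $|\hat f_{0s}-f_0|\leq |\hat f_{0s}-f_0\ast K_s|+|f_0\ast K_s-f_0|$. A standard McDiarmid argument on the map $(X_{0i})_i\mapsto \int|\hat f_{0s}-f_0\ast K_s|$ (bounded differences of size $2/n_0$, using $\int K_s=1$) gives that this integral is at most its expectation plus $\sqrt{2\log(1/\delta)/n_0}$ with probability at least $1-\delta$. Jensen's inequality together with the first displayed variance bound controls the expectation by $c_{0,d,\varepsilon}\sqrt{M_0(K^2)/(n_0 s^d)}(1+s^{(d+\varepsilon)/2})$. The deterministic bias $\int|f_0\ast K_s-f_0|$ is bounded by $\phi_0 s^2$ or $\psi_0 s$ through the second displayed inequality.

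For the minority term I would use the three-piece decomposition
\[
|\hat f^*_{1s}-f_1|\leq |\hat f^*_{1s}-\hat f_{1h}\ast K_s|+|\hat f_{1h}\ast K_s-f_1\ast K_s|+|f_1\ast K_s-f_1|,
\]
bounding the last summand by $\phi_1 s^2$ or $\psi_1 s$. For the middle summand, Young's inequality (convolution by a probability density is an $L^1$-contraction) yields $\int|\hat f_{1h}\ast K_s-f_1\ast K_s|\leq \int|\hat f_{1h}-f_1|$, after which a further bias-variance split, treated exactly as in the majority case but with $(n_1,h)$ in place of $(n_0,s)$, produces the variance contribution $c_{1,d,\varepsilon}\sqrt{M_0(K^2)/(n_1 h^d)}(1+h^{(d+\varepsilon)/2})$, a McDiarmid slack $\sqrt{2\log(1/\delta)/n_1}$, and a bias $\phi_1 h^2$ or $\psi_1 h$. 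The first summand is the key subtlety: conditional on $\mathcal D_n$, the $X^*_{1i}$ are i.i.d.\ with density $\hat f_{1h}$, so a conditional McDiarmid gives $\int|\hat f^*_{1s}-\hat f_{1h}\ast K_s|\leq \mathbb E[\,\cdot\,|\mathcal D_n]+\sqrt{2\log(1/\delta)/m}$ with conditional probability $1-\delta$ (hence unconditionally too, by tower property), and the variance lemma applied conditionally, with the random density $\hat f_{1h}$ in the role of the target, upper bounds the conditional expectation by $\hat c_{1,d,\varepsilon}\sqrt{M_0(K^2)/(m s^d)}(1+s^{(d+\varepsilon)/2})$.

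To conclude, a union bound over the three McDiarmid events yields the probability $1-3\delta$ announced in the statement. Substituting $m=n_0\mathrm{1}_{\{n_1>0\}}$ combines the two $n_0$-McDiarmid terms, and multiplying by the $1/2$ of Theorem~\ref{th:risk_bound_1} produces the claimed coefficient $1/\sqrt{n_0}+1/(2\sqrt{n_1})$ in front of $\sqrt{2\log(1/\delta)}$; majorizing the three factors $(1+s^{(d+\varepsilon)/2})$, $(1+h^{(d+\varepsilon)/2})$, $(1+s^{(d+\varepsilon)/2})$ by the common $(1+\max(h,s)^{(d+\varepsilon)/2})$ collects the variance expression exactly as stated, and the bias terms assemble according to the assumption in force. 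I expect the principal technical difficulty to be the conditional argument for $\hat f^*_{1s}$: one must legitimately invoke the variance lemma with the random density $\hat f_{1h}$ (yielding the constant $\hat c_{1,d,\varepsilon}$) and correctly glue the conditional concentration to the unconditional events, which is bookkeeping-heavy but conceptually routine once the decomposition above is in place.
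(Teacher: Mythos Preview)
Your proposal is correct and follows essentially the same route as the paper's proof: apply Theorem~\ref{th:risk_bound_1}, control the majority $L^1$-error via a McDiarmid/bias--variance split (this is the paper's Proposition~\ref{th:kde_L1_rate2}), and control the minority $L^1$-error by the same three-piece decomposition with a conditional McDiarmid for the synthetic-sample piece and Young's inequality plus another McDiarmid/bias split for the remaining pieces (this is the paper's Proposition~\ref{th:smote_kernel_smooth}). The only cosmetic difference is that the paper applies McDiarmid to the full $L^1$-error $\int|\hat f_{yh}-f_y|$ and then splits the resulting expectation, whereas you split first and apply McDiarmid to the centered part; since the bias is deterministic and the bounded-difference constants are identical, the two orderings give the same bound.
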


In Theorem \ref{th:risk_bound_smote_1} the leading term %in the upper bound 
(in case of second order regularity) is scaling as $ (n_0 s^d)^{-1/2}  + s^2 + (n_1h^d)^{-1/2}    + h^2$.
The optimal value of the second bandwidth $s$, involved in the error of $\hat f_{0s}$ and $\hat f_{1s}^{*}$ (based on $n_0$ observations), is then $s = n_0 ^{-1/(d+4)}$, while the optimal value for the first bandwidth $h$, involved in the data generation step (based on $n_1$ observations) is $h = n_1 ^{-1/(d+4)}$. In highly imbalanced scenarios where $ n_1 / n_0 \to 0$, the two bandwidths should thus be set differently to optimize the upper bound and reach the optimal convergence rate $n_0^{-2/(d+4)} + n_1^{-2/(d+4)}$. 

\begin{remark}[Comparison with \textsc{Smote}] 
The default value in \textsc{Smote} is $k = 5$ \citep{chawla2002smote} which, viewing $k$NN as a kernel estimator with data-driven bandwidth (see {\it e.g.}~\cite{portier2024nearest2}, Lemma~3), would correspond to a choice of $h \simeq (1/n_1)^{1/d}$ (omitting constants). This is different from the optimal choice above, and it would not (based on our upper bound) guarantee consistency, as the upper bound in Theorem~\ref{th:risk_bound_smote_1} would not converge to $0$.
\end{remark}

\begin{remark}[Comparison with the kernel smoothing plug-in rule]
\label{rmk:comparison}
Our method of proof allows to establish an excess AM-risk bound for the kernel smoothing rule $\hat g_h(x) = \mathrm{1}_{ \{  \hat f_{1h}(x)  >  \hat f_{0h}(x)   \} }$ based on the initial data only (see Proposition~\ref{th:kbc_L1} in the Appendix). The bound scales as 
$ (n_1h^d)^{-1/2} + h^2 + (n_0h^d)^{-1/2}  + h^2 $, the optimal value for $h$ being $h = (1 / n_1 + 1/n_2 ) ^{1/(d+2) }$. The bound obtained in Theorem \ref{th:risk_bound_smote_1} is similar but different, as the two bandwidth parameters, $h$ and $s$, might be set differently. 
\end{remark}

These remarks suggest that the synthetic oversampling rule $\hat g^*$ (i) performs no worse than the rule $\hat g_h$ based on the initial data only, (ii) should be expected to perform better than a rule using \textsc{Smote} synthetic generation with default parameters, and (iii) improvements may be observed in practice by choosing carefully $s$. In the numerical experiments, by considering the $K$-NN classifier, which can be seen as a practical modification of the kernel smoothing classifier, we investigate a cross-validation procedure to choose $K$ (having similar role as $s$), while $h$ is chosen following Scott's rule. 

Investigating the $K$NN plug-in rule (instead of kernel smoothing) in Theorem \ref{th:risk_bound_smote_1} as well as fast convergence rates under the noise condition of \cite{tsyb,tong2013plug}  remain open problems. Finally, Theorem \ref{th:risk_bound_smote_1} is only valid for KDE-based sampling and might be extended to \textsc{Smote}. None of these problems are direct consequences of this work.

\section{Numerical experiments}\label{numerical}

%\fp{IDEA. Re-organise this way: Start with comparison between KDE classification as introduced in Section 3.2 and Logistic as is studied in Section 3.1. Try to illustrate the results we have found that is $k$ in SMOTE or $h$ in KDE based oversampling does not really matter when using logistic but should be significant when using KDE classification. Then introduce CV based KNN and CV based lasso.}

\subsection{Methods in competition}\label{Methods+variants}

\noindent\textbf{Oversampling techniques.}
The %two considered methods, 
\textsc{Smote} and \textsc{Kdeo} %are introduced in Section~\ref{subsec:smote-detail} (see also the Supplement, where pseudo-algorithms are given). Both algorithms will be 
methods are applied to imbalanced data with $n_1 < n_0$. After oversampling, the synthetic data will contain $ m = n_0$ observations with label $1$ and $ n_0 $ observations with label $0$. For \textsc{Smote}, we consider the default choice $k = 5$ neighbors.
%to as \textsc{Smote(S)} with S for \emph{small}, and $k = n_1^{4/(d+4) }$, referred to as \textsc{Smote(L)} with L for \emph{large}.
%While the choice $k= 5$ is the default value, 
%One can also choose $k = n_1^{4/(d+4) }$ follows from optimizing the excess risk bound $ k^{-1/2}  + (k/n_1) ^{2/d}$ as suggested in Theorem \ref{th:risk_bound_smote_1} when replacing $ h $ with $(k/n_1) ^{1/d}$. 
For \textsc{Kdeo}, we consider %two choices for the matrix-valued bandwidth $H_1$, which here encodes the covariance of the kernel. The choice follows  Scott's rule of thumb:
the matrix-valued bandwidth $H_1$ such that
\(
H_1^2=  n_1^{-2/(d+4)} C_1,
\)
following from Scott's rule, where \( C_1 \) is the covariance matrix computed from the minority class samples. 
%This configuration is referred to as KDE(L), where L stands for \emph{large} bandwidth. The second choice uses a smaller covariance of the kernel:
% \(
% H = (1/10) n_1^{-2/(d+4)} C,
% \)
%which is referred to as KDE(S), where S denotes \emph{small} bandwidth. While the definition of KDE(L) parallels that of \textsc{Smote}(L), the behavior of KDE(S) is more akin to \textsc{Smote}(S). 

\noindent\textbf{Classification methods.}
We consider %two nonparametric approaches: 
the kernel smoothing (KS) discrimination rule studied in Section \ref{sec:main:excessrisk} and the $K$NN classification rule as follows. First, apply the concerned oversampling technique, either \textsc{Smote} or \textsc{Kdeo}, and then employ the $K$NN (resp.~KS) algorithm with parameter $K=\sqrt{n}$ (resp.~$s= S_j$ obtained from %multivariate Scott's rule of thumb 
Scott's rule $S_j^2 =  {n_0}^{-2/(d+4)} C_j $, $j = 0,1$, where $ C_j $ is the covariance matrix of class $j$). This choice of $S_j$ (as well as $H_1$ for \textsc{Kdeo}) corresponds to the optimal scaling recommended by Theorem \ref{th:risk_bound_smote_1}. Note that both $K$NN and KS are based on the local averaging principle. 
%In particular, the parameter $K$ of $K$NN might be interpreted similarly as the bandwidth $s$ in KS (see Section \ref{sec:main:excessrisk}).  
%In contrast, 
This is compared to the logistic regression (LR) method, employed to incorporate a parametric classification approach and to illustrate the theory developed in Section \ref{sec:main:concentration}. Finally, we also consider the $K$-NN balanced Bayes classifier (BBC), which does not involve oversampling but rather reweighting. It is defined as the classifier $ \mathrm{1}_{\{ \hat \eta_{NN}(x) > \hat p \}} $ where $\hat \eta_{NN}(x)$ is the $K$NN estimator of $\eta(x) $ with hyperparameter $K = \sqrt{n}$ and \( 
\hat p = n_1/n
\). Under class imbalance, the threshold adjustment allows to minimize the AM-risk \citep{aghbalou2024sharp}. Similar results with a Random Forest classifier applied after \textsc{Smote} and \textsc{Kdeo} are given in the Appendix. 

%Note that all the methods considered aim to minimize the AM risk. 

%\noindent\textbf{%\textsc{Smote(S)}, 
%\textsc{smote} and \textsc{kdeo}
%with $K$NN and KS. }

%s = 1.06 \,\hat{S}\, n^{-1/5},$
% %where
% %$
% \hat{S_r} = \min\!\left(\hat{S}, \tfrac{\mathrm{IQR}}{1.34}\right),
% $
%with $\hat{S}$ denoting the sample standard deviation.
%and $\mathrm{IQR}$ the interquartile range.
   %\fp{How do you compute $\hat S$? Is is a matrix or a vecor? Does not seem evident for a vector.} \ta{The value I obtained is a scalar. I get a s for each variable individually. }

%dmnorm is  function that evaluates the density of normal distribution and you can specify the mean and the covariance matrix

%f = 0
%for( i in number train data label 1){
%f = f + dmvnorm( newdata , mean = x_train_label = 1 [i] , cov = bandwidth matrix (dtimes d))   
%}
%f = f / n_1

%In the end we have
%$(1/n) * \sum_i  K_H ( x - X_i) $ 
%with bandwidth for $H$

\subsection{Simulated data}

%\noindent\textbf{Data generating process.} 
%We evaluate the %oversampling 
%competing methods on $4$ synthetic datasets, each designed to test specific aspects of method performance. 

Four models are considered. In all cases, $\{(X_i, Y_i)\}_{1\leq i\leq n}$ are $n=1000$ i.i.d.~training samples from the distribution of $(X, Y)$. %where across all examples, $X\sim \mathcal N ( 0,I_d)$. 
Let $\boldsymbol{e}_i$ be the $i$th vector in the canonical basis of $\mathbb{R}^d$.  %with $X \sim \mathcal{N}(0, I_p)$ in Examples~1--3 (below) and $X \sim \mathcal{N}(0, 3I_p)$ in Example 4 to mix the clusters structures.  
\begin{itemize}[leftmargin=*]
    \item \textbf{Example 1:} Let $X\sim \mathcal N ( 0,I_d)$ and $Y \sim \text{Bernoulli}(\expit(X^\top \boldsymbol{e}_1 + \alpha))$, with $\alpha$ tuning class imbalance and $\expit(u) = \exp(u)/(1+\exp(u))$.
    \item\textbf{Example 2:} Let $X\sim \mathcal N ( 0,I_d)$ and $Z \sim \text{GPD}(\sigma(X), \xi = 0.5)$ have a Generalized Pareto distribution, where $\sigma(X) = \exp(X^\top \boldsymbol{e}_1)$. Define $Y = \mathrm{1} _{\{ Z> t \} }$ with $t$ tuning class imbalance.
    \item\textbf{Example 3:} Let $Z = \mathcal{B} \sin(X^\top \boldsymbol{e}_1 /2) Y_1 + (1 - \mathcal{B}) \sin(X^\top \boldsymbol{e_2} /2) Y_2$, with $\mathcal{B} \sim \text{Bernoulli}(0.5)$, $X\sim \mathcal N (0,I_d)$, $Y_1 \sim \text{GPD}(1, 0.5)$, $Y_2 \sim \text{Exp}(10)$. Set $Y = \mathrm{1}_{\{ Z> t \} }$ with $t$ tuning class imbalance.
    % \item \textbf{Example 4:} Let $Y = \mathcal{B} Y_1 + (1 - \mathcal{B}) Y_2$, with $\mathcal{B} \sim \text{Bernoulli}(0.5)$, $Y_1$ is extended generalized Pareto random variable, i.e., $Y_1 \sim \text{EGPD}(\kappa(X), \sigma(X), 0.5)$ and $Y_2 \sim \text{Exp}(10\gamma(X))$. Here, $\kappa$, $\sigma$, and $\gamma$ are exponential functions of $X$ with distinct sparse coefficient vectors $\boldsymbol{\beta} = (1, 0, \dots, 0)^\top$, $\boldsymbol{\beta} = (0, 1, \dots, 0)^\top$ and $\boldsymbol{\beta} = (0, 0, 1, \dots, 0)^\top$, respectively.
    \item\textbf{Example 4:} Let $d=2$ and $\mu_1 = (0,0)^\top$, $\mu_2 = (10,10)^\top $, $\mu_3 = (10,0)^\top$, $\mu_4 = (0,10)^\top$. Let $Z$ be $\{1,2,3,4 \}$-valued with $\mathbb P (Z = c)= w_c$, then $X \sim \sum_{c=1} ^4 \mathcal{N}(\mu_c, 6I_p) \mathrm{1} _ {\{Z=c\}}$ and $Y = \mathrm{1} _ {\{Z\geq 3\} }$.
\end{itemize}

%The training samples are simulated using a sample size of $n = 1000$. The dimension is $d = 4$ for Examples . 
Take $d=4$ in Examples $1$, $2$ and $3$. %For each example, 
In each case, the validation sample is created by generating $10000$ observations and then undersampling the majority class so as to obtain a %perfectly 
balanced data set.

\noindent\textbf{Results when varying $k$ in \textsc{Smote} and $H$ in \textsc{Kdeo}.} In Figure~\ref{fig:bandwidth-check1}, different values of $k$ and $H$ are considered when dealing with Examples 2 and 3 for which the binary response \( Y \) was constructed by thresholding at the probability $1-p_1=0.90$ (for Examples 1 and 4, see the Appendix). We consider 
%\textsc{smote} with $k=5$, and 
\textsc{Smote}($k$) with varying $k \in (7, 65)$, %\textsc{kdeo}, 
and \textsc{Kdeo}($H$) with varying \( H=cH_1 \), for $c$ ranging in \( (1/20, 3)\), where $H_1$ follows from Scott’s rule. The KS, $K$NN and LR classification methods are considered. 
%The $K$NN classifier is trained with CV and without CV ($K=\sqrt n$).
Figure~\ref{fig:bandwidth-check1} displays the average (over $50$ replications) AM-risk over the validation set. % (each time the AM-risk is evaluated using the balanced validation set as introduced before). 

In Figure~\ref{fig:bandwidth-check1}, we see that varying $k$ in \textsc{Smote}($k$) or the bandwidth $H$ in \textsc{Kdeo}($H$) leads to noticeable changes in the AM-risk values. In contrast, when using LR, changing $H$ or $k$ has almost no effect. %on the AM risk. 
This indicates that LR is considerably less sensitive to the choice of oversampling parameters $k$ or $H$ compared to nonparametric classifiers. % such as KS and $K$NN. 
This %insensitivity 
was already suggested by Corollaries \ref{cor:smote} and \ref{cor:kde}, while the impact of a real-valued bandwidth %$H$, denoted $h$, 
was formally analyzed in Theorem \ref{th:risk_bound_smote_1} when using KS. %Another interesting observation is that
Observe also that, in Figure~\ref{fig:bandwidth-check1} (left panel), the optimal value of $k$ for \textsc{Smote}(k) when using $K$NN lies between 45 and 60. %, as indicated by the minimum AM-risk.
%, but still does not differ much from SMOTE(S). 
This differs substantially from the %commonly used 
default $k=5$. %On the other hand, 
Figure~\ref{fig:bandwidth-check1} (right panel) shows that varying the bandwidth $H$ does not yield improvements over \textsc{Kdeo}, which relies on Scott’s rule of thumb, that in fact turns out to be optimal. %in this setting. 
Finally, note that \textsc{Kdeo}($H$) with small $H$ %, especially in $K$NN, 
produces results that closely resemble those of \textsc{Smote}, which is consistent with the intuition that $k$ and $H$ play a similar role. 

%However, when cross-validation is used for \textsc{Smote(S)} and varying the \(k\) in \textsc{Smote}, the effect of \(k\) becomes minimal, suggesting that cross-validation reduces sensitivity. 
% The choice of $k$ appears less critical than that of $H$, especially when the KS classifier is used. This will not be retained in case of the Logistic classifier, as varying $k$ and $H$ are not playing any role. %It is observed that KDE($H$) closely approximates KDE(L) of both $K$NN and KS classifiers  when $H$ is set to $(H_0/3,H_0/1.5)$, but diverges again with larger $H$ such as 
%$(1.5H_0,2H_0,3H_0)$. 

%t is also notable that varying 
%$k$ in SMOTE produces results almost identical to \textsc{Smote(S)} in the Logistic Regression classifier and shows a distinct behavior, indicating that its performance is less sensitive to changes in $k$.   

%This is because \textsc{Smote} always generates new samples by interpolating between existing minority class samples, thereby preventing the generation of samples too far from the support of the minority class.

\noindent\textbf{Results under different degrees of class imbalance.} %All methods under comparison 
The methods are evaluated on Examples 2 and 3 (for Examples 1 and 4, see the Appendix), where the parameters \( t \), $\alpha$ and $w_c$ are adjusted to achieve different probability levels (\( 1-p_1 = 0.60, 0.80, 0.85, 0.90, 0.92, 0.95 \)). 
The AM-risk %performances 
of the $K$NN and KS classifiers for Examples 2 and 3 is reported in Figure~\ref{fig:sim-models1} under \textsc{Kdeo} and \textsc{Smote}. 

Across all examples, both classifiers exhibit similar patterns. A clear performance gain is observed for \textsc{Kdeo} and BBC compared with \textsc{Smote}.
%, \textsc{Smote}(L), and KDE(S),
%except in Example 4, where all methods outperform due to easy classification.
In particular, the AM risk of the KS classifier under \textsc{Kdeo} sampling is consistently smaller than that of $K$NN when trained under \textsc{Smote}. %Unlike KS that relies on Scott's matrix rule for $S_j$, the $K$NN classifier is tuned with the simple choice $K=\sqrt n $. Similarly, \textsc{Kdeo} uses Scott’s matrix rule for the bandwidth, whereas \textsc{Smote} relies on a single scalar parameter for $k$. 
These results suggest that fine-tuning the bandwidth $H_1$ in \textsc{Kdeo}, together with the use of the KS classifier based on well-chosen matrices $S_j$, provides clear benefits. This, in turn, opens the perspective of tuning $K$NN using cross-validation, as investigated below. By contrast, the LR classifier performs uniformly across all resampling methods (see the Appendix). This is in line with the results given in Theorems~\ref{theorem-smote-concentration_uniform} and \ref{th:smote_expectation} as they suggest that estimating the risk with \textsc{Smote} or \textsc{Kdeo} gives better results when $K$ and $H$ are small, underlining that oversampling may not be critical for such a parametric classifier.

% The AM-risk performances are reported in Figure~\ref{fig:sim-models}, where for all cases, we observe that CV plays a crucial role in tuning the hyperparameter $K$ of $K$NN. A noticeable performance improvement can be observed in the KDE(L)-CV, KDE(S)-CV, \textsc{Smote}(S)-CV, and \textsc{Smote}(L)-CV results, {but %yields poorer performance when applied cross-validation to tune hyperparameter $K$ in the BBC. 
% cross-validation on $K$ yields poorer performance for the BBC.} %This suggests that BBC and cross-validation may not interact beneficially in this context. 
% Moreover, \textsc{Smote(S)} and KDE(S) show very similar performance, indicating that both methods enhance minority class representation comparably when used with $K$NN, which is consistent with their theoretical justification.

\noindent\textbf{Cross validation (CV) selection of \( K \) in $K$NN: \textsc{Smote-CV},
%\textsc{Smote(L)-CV}, 
%KDE(S)-CV,
\textsc{Kdeo}-CV, BBC-CV.}
% \paragraph{Cross validation (CV) selection of \( K \) in $K$NN: KDE(S)-CV, KDE(L)-CV}
To improve the \( K \)NN classifier's performance, %across %\textsc{Smote} and KDE oversampling strategies as well as for the BBC approach, the above mentioned approaches, 
we consider selecting %the hyperparameter 
\( K \) via cross-validation. Specifically, we use 5-fold cross-validation, where in each fold, the data is first balanced using the chosen oversampling method (\textsc{Smote}/\textsc{Kdeo}), and then \( K \) is selected (from a %predefined 
grid) to minimize the validation error. For the BBC, the procedure differs: we apply the BBC classifier obtained from training folds and choose the value of \(K\) that minimizes an AM-risk estimate obtained via under-sampling of the testing fold. 

The AM-risk across BBC, \textsc{Smote}, and \textsc{Kdeo}, and the CV choice of $K$
are reported for Examples 2 and 3 in Figure~\ref{fig:sim-models} (resp.~Examples 1 and 4 given in the Appendix). 
%The AM-risk for SMOTE, along with their CV for choice of  and BBC with CV choice of $K$, are also provided in the supplement file. 
%For all Examples, % in Figure~\ref{fig:sim-models}, we observe that CV plays a crucial role in tuning the hyperparameter $K$ of $K$NN. 
A noticeable %performance 
improvement can be observed in the results for \textsc{Kdeo}-CV and \textsc{Smote}-CV. The %yields poorer performance when applied cross-validation to tune hyperparameter $K$ in the BBC. 
CV on $K$ yields poorer performance for the BBC, suggesting that the BBC and CV may not interact effectively in this context. 
Moreover, \textsc{Kdeo} and \textsc{Kdeo}-CV consistently outperform \textsc{Smote}, \textsc{Smote}-CV, and BBC, even in scenarios where the probability %of $p_1=1-p$, corresponding to the minority class, 
$p_1$ is smaller. %As we observed, the 
%\textsc{Kdeo} outperforms \textsc{Smote} with the justification that it uses kernel density estimation to generate synthetic samples that may better approximate the true minority distribution, unlike \textsc{Smote}, which may create noisy boundary points. Thus, \textsc{Kdeo} leads to more realistic and well-spread samples, preserving class structure. %The use of CV in \textsc{Kdeo} and \textsc{Smote} adaptively tunes $K$, leading to greater stability and improved accuracy of $K$NN.
The intuition is that \textsc{Kdeo}, which generates synthetic samples using kernel density estimates, may better approximate the true minority distribution than \textsc{Smote}.

%representation comparably when used with $K$NN, which is consistent with their theoretical justification.}

%that the linear nature of logistic regression may limit its capacity to benefit from more nuanced data augmentation strategies.

%\subsection{Real data}
%All the oversampling methods are now compared across five datasets: Abalone, California, MagicTel, Phoneme, and House\_16H  (BBC is not included due to poor performance).
\subsection{Real data analysis: Abalone, California, MagicTel, Phoneme, and House\_16H datasets}
Each dataset is split into training and validation sets using a $70:30$ ratio. California and MagicTel are %originally 
balanced datasets; to make the training set imbalanced, we subsample the minority class in these two datasets, adjusting the imbalance ratio to $20\%$, $10\%$, and $5\%$. %This process reproduces the high imbalancedness in the classes that is frequently observed in practical applications (such as
%fraud detection or rare disease diagnosis).
%Evaluation of the methods is based on AM-risk; therefore, the 
The validation sets are balanced %(in the cases where they are imbalanced) 
by undersampling the majority class to ensure fair assessment aligned with the evaluation metric.

The AM-risk results for the $K$NN classifier are presented in Figure~\ref{fig:realdata} %while results for alternative base-classifiers such as KS, Random Forest, LR and LR-Lasso classifiers are given in 
(for alternative classifiers, see the Appendix). \textsc{Smote} and \textsc{Kdeo} generally perform similarly %across most datasets 
except for Abalone, where \textsc{Smote} consistently achieves a smaller risk. This suggests that the Abalone features might have a specific structure that is better synthesized by nearest neighbors %(\textsc{Smote}) 
rather than by the (more arbitrary) Gaussian kernel. One particularly relevant conclusion is that using CV for choosing $K$ compared to the choice $K= \sqrt n$ (almost) continuously improves the performance across all the different cases. A similar pattern is observed for Random Forest, LR, and LR-Lasso, see the Appendix.

\begin{figure}[h]
	\begin{subfigure}{.5\textwidth}
		\centering
		% include first image
\includegraphics[width=1\linewidth, height=0.20\textheight]{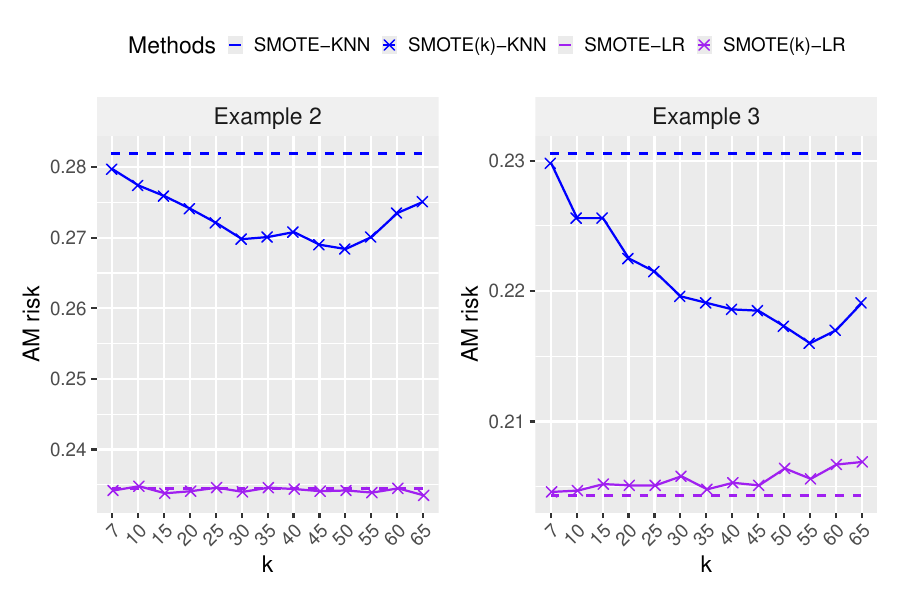}  
		% \caption{Example 1 with $p=4$}
		%\label{fig:sub-first}
	\end{subfigure}
	\begin{subfigure}{.5\textwidth}
		\centering
		% include second image
		\includegraphics[width=1\linewidth, height=0.20\textheight]{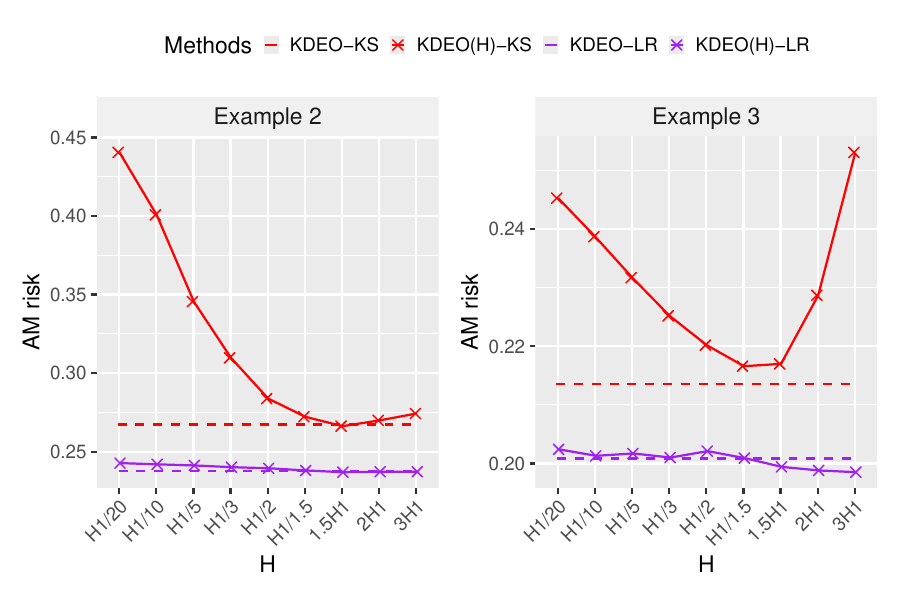}  
		% \caption{Example 2 with $p=4$}
		%\label{fig:sub-second}
	\end{subfigure}
	\caption{Average AM-risk of KNN, KS, and LR classifiers on balanced data over 50 replications. %\( R = 50 \). 
    \textit{Left:} using \textsc{Smote} and \textsc{Smote($k$)} with $k \in (7, 65)$. 
    \textit{Right:} using \textsc{Kdeo} and  \textsc{Kdeo}\((H)\), with \( H=cH_1 \) and $c$ ranging in \( (1/20, 3) \) where $H_1$ follows from Scott’s rule.
    }
	\label{fig:bandwidth-check1}
\end{figure}

\begin{figure}[h]
% 	\begin{subfigure}{.5\textwidth}
% 		\centering
% 		% include first image
% \includegraphics[width=1\linewidth, height=0.25\textheight]{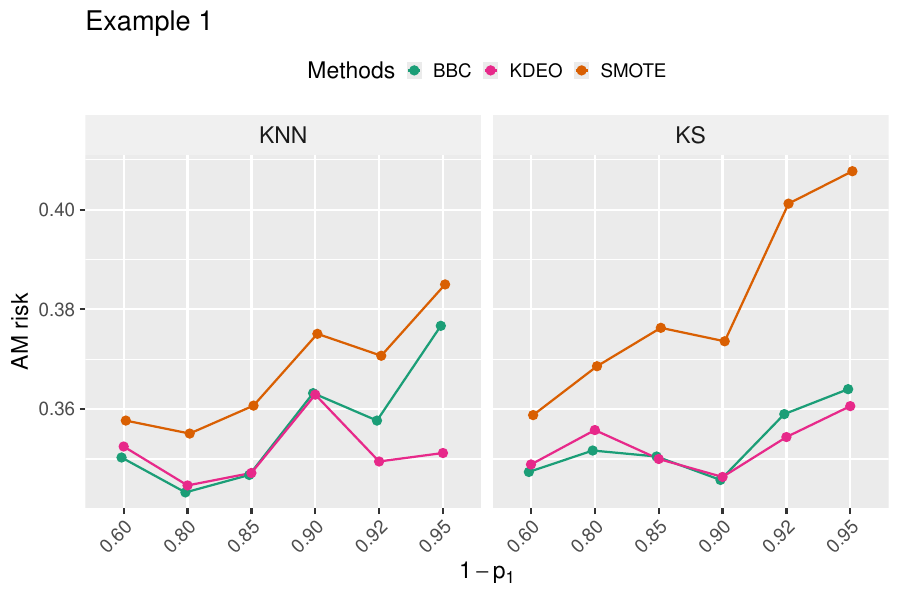}  
% 		% \caption{Example 1 with $p=4$}
% 		%\label{fig:sub-first}
% 	\end{subfigure}
	\begin{subfigure}{.5\textwidth}
		\centering
		% include second image
		\includegraphics[width=1\linewidth, height=0.20\textheight]{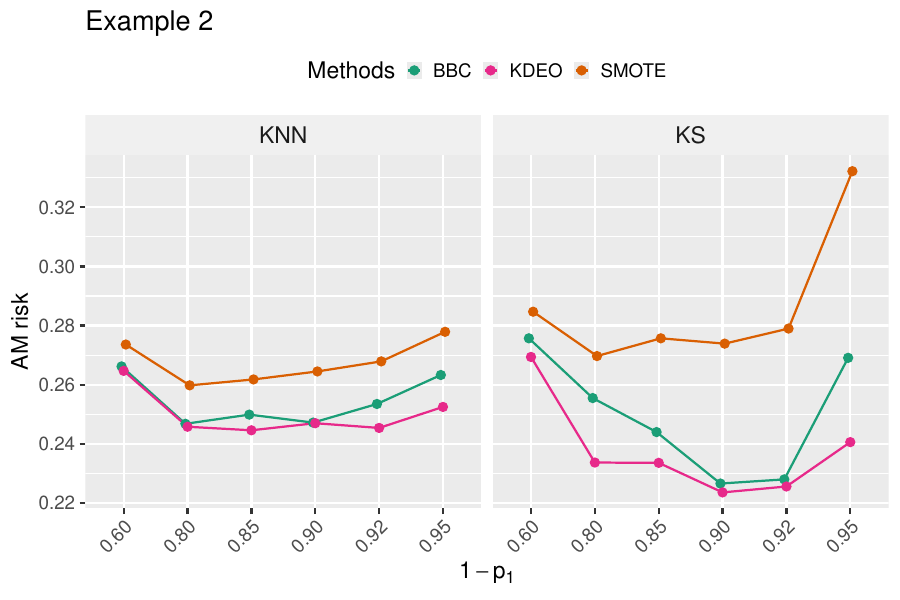}  
		% \caption{Example 2 with $p=4$}
		%\label{fig:sub-second}
	\end{subfigure}
       \begin{subfigure}{.5\textwidth}
		\centering
		% include first image
\includegraphics[width=1\linewidth, height=0.20\textheight]{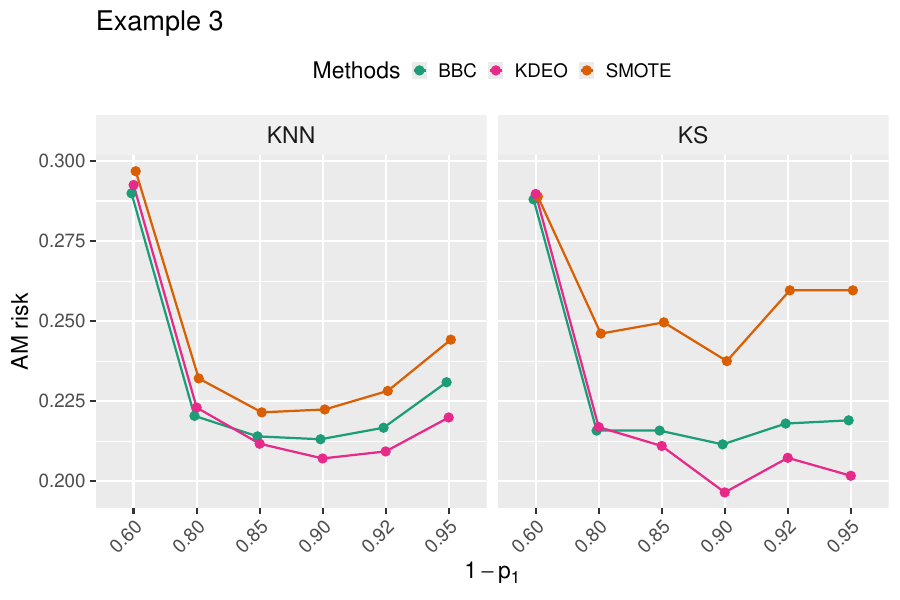}  
		% \caption{Example 3 with $p=4$}
		%\label{fig:sub-first}
	\end{subfigure}
	% \begin{subfigure}{.5\textwidth}
	% 	\centering
	% 	% include second image
	% 	\includegraphics[width=1\linewidth, height=0.25\textheight]{Figures/AM-Ex444-CV.pdf}  
	% \end{subfigure}
%     \begin{subfigure}{.5\textwidth}
% 		\centering
% \includegraphics[width=1\linewidth, height=0.25\textheight]{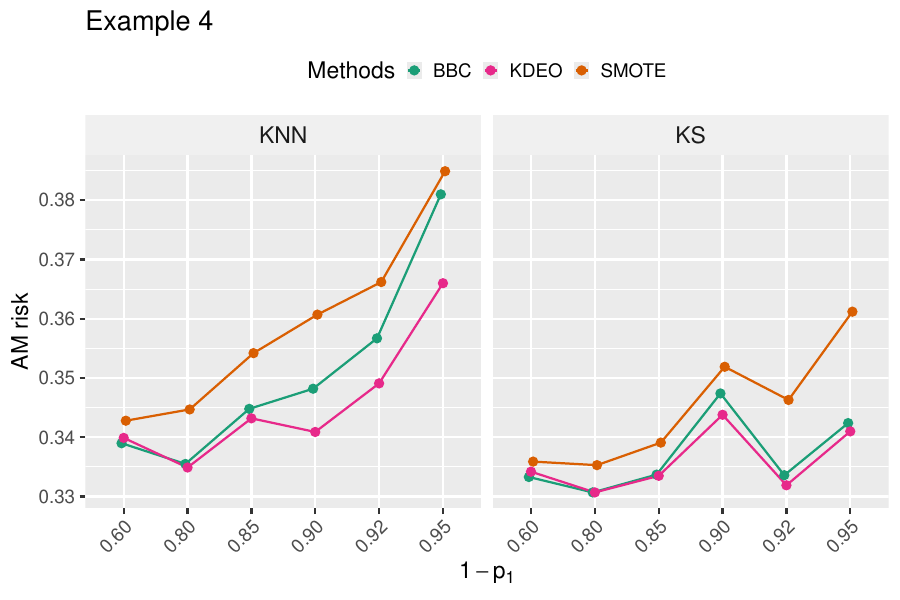}  
% 	\end{subfigure}
%  \begin{subfigure}{0.5\textwidth}
% 		\centering
% 		% include second image
% \includegraphics[width=1\linewidth, height=0.25\textheight]{Figures/AM-Ex55-logis-CV.pdf}  
	% \end{subfigure}
	\caption{Average AM-risk across different data imbalance regimes for the KS~(described in Section~\ref{sec:main:excessrisk}) and $K$NN classification rules computed over %$R=50$ 
    50 replications.
    %described in Sections~\ref{sec:main:excessrisk} and~\ref{Methods+variants}, 
    %KDE(L) denotes the KDE-based oversampling with bandwidth set through Scott's rule of thumb, KDE(S) denotes the KDE-based oversampling with a smaller bandwidth set $1/10$ of Scott's rule of thumb. \textsc{Smote(S)} represents the \textsc{Smote} with $k=5$ nearest neighbors while \textsc{Smote(L)} represents the \textsc{Smote} with \(k = n_1^{4/(d+4)}\) where \(n_1\) is the number of minority class instances and \(d\) is the feature dimension.
    }
	\label{fig:sim-models1}
\end{figure}

\begin{figure}[h]
%  	\begin{subfigure}{.5\textwidth}
%  		\centering
% % 		% include first image
%  \includegraphics[width=1\linewidth, height=0.30\textheight]{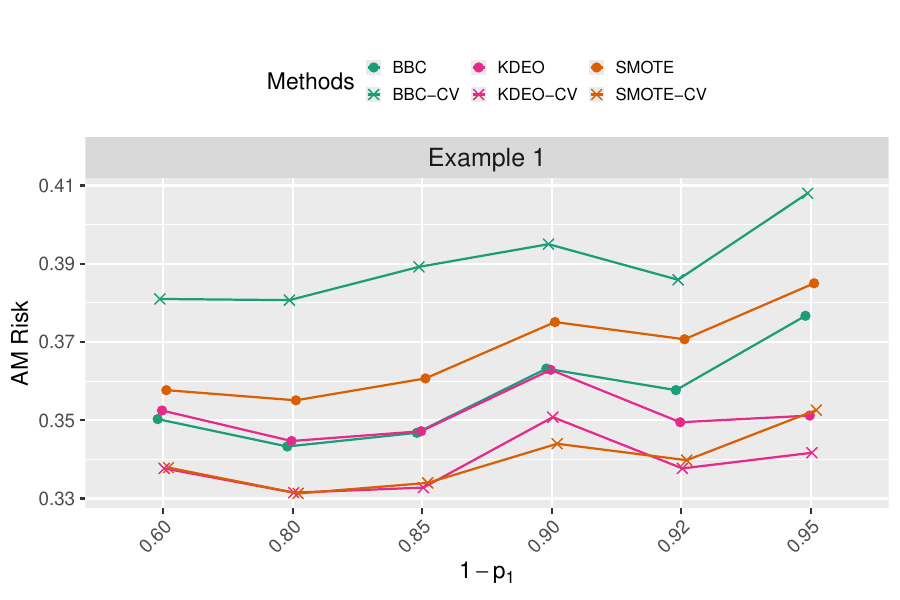}  
% % 		% \caption{Example 1 with $p=4$}
% % 		%\label{fig:sub-first}
% 	\end{subfigure}
	\begin{subfigure}{.5\textwidth}
		\centering
		% include second image
		\includegraphics[width=1\linewidth, height=0.25\textheight]{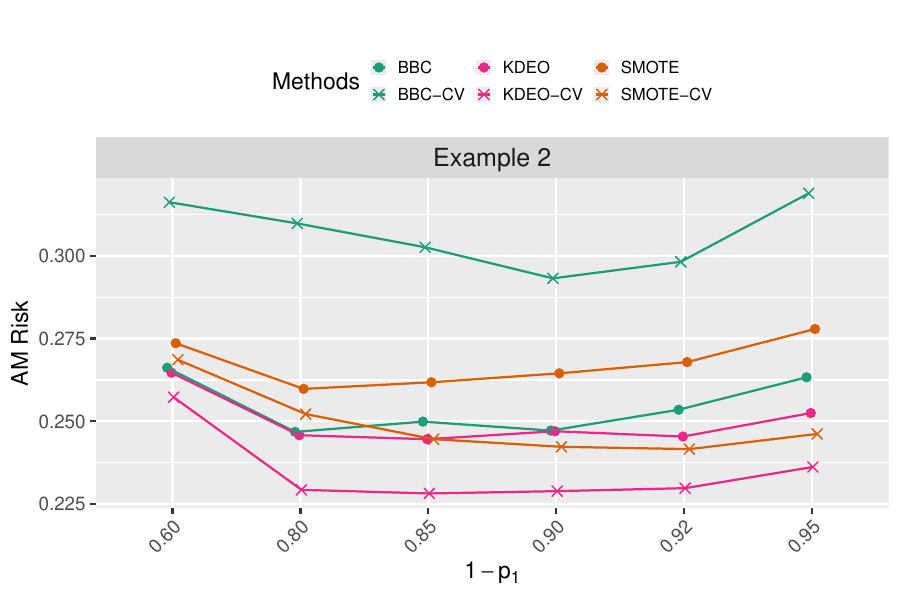}  
		% \caption{Example 2 with $p=4$}
		%\label{fig:sub-second}
	\end{subfigure}
       % \newline
       \begin{subfigure}{.5\textwidth}
		\centering
		% include first image
\includegraphics[width=1\linewidth, height=0.25\textheight]{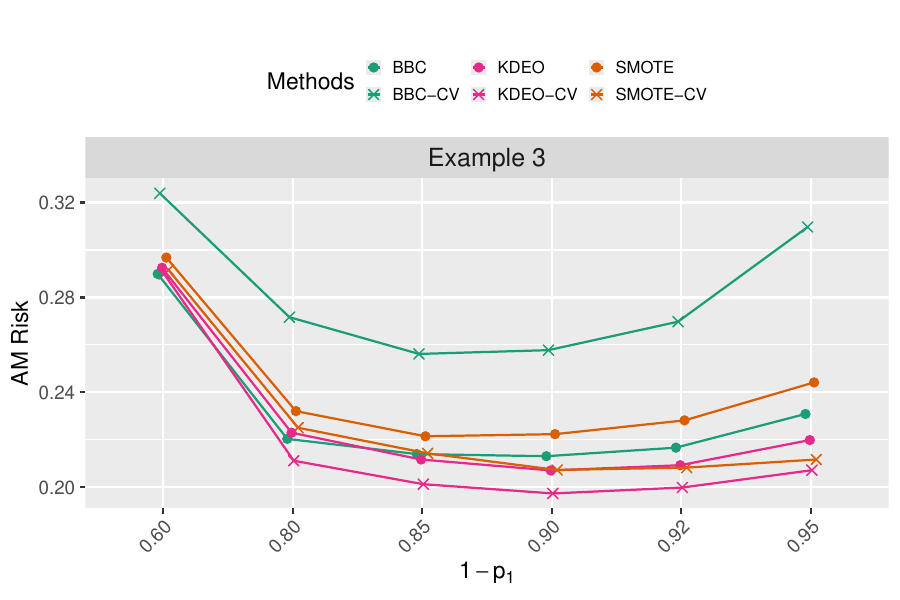}  
		% \caption{Example 3 with $p=4$}
		%\label{fig:sub-first}
	\end{subfigure}
	%  \begin{subfigure}{.5\textwidth}
	%  	\centering
	% % 	% include second image
	%  	\includegraphics[width=1\linewidth, height=0.30\textheight]{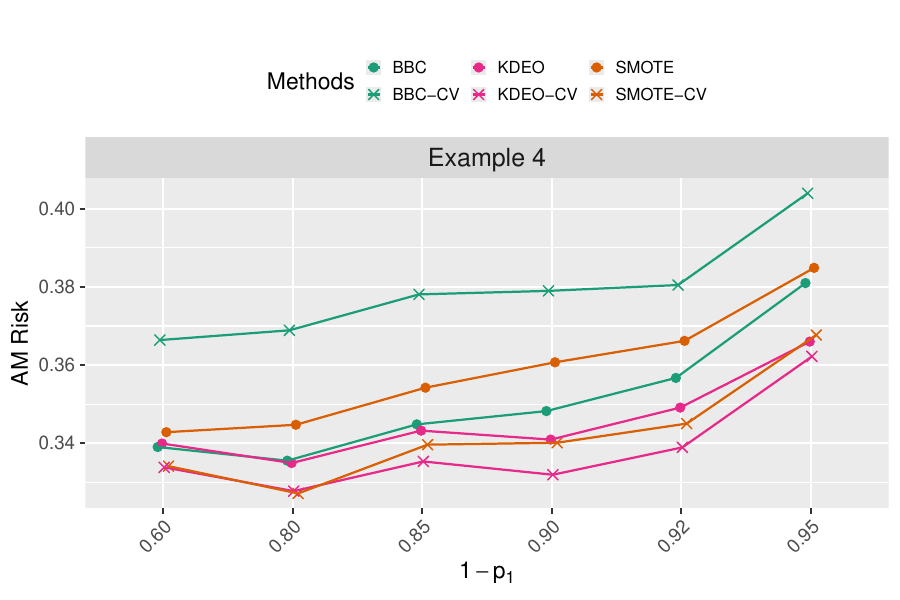}  
 % \end{subfigure}
%     \begin{subfigure}{.5\textwidth}
% 		\centering
% \includegraphics[width=1\linewidth, height=0.30\textheight]{Figures/AM-Exm5-CV.pdf}  
% 	\end{subfigure}
%  \begin{subfigure}{0.5\textwidth}
% 		\centering
% 		% include second image
% \includegraphics[width=1\linewidth, height=0.25\textheight]{Figures/AM-Ex55-logis-CV.pdf}  
	% \end{subfigure}
	\caption{Average AM-risk across different data imbalance regimes for the $K$NN methods described in Section~\ref{Methods+variants}, computed over
    %$R=50$ 
    50 replications.
    %KDE(L) denotes the KDE-based oversampling with bandwidth set through Scott's rule of thumb, KDE(S) denotes the KDE-based oversampling with a smaller bandwidth set $1/10$ of Scott's rule of thumb. \textsc{Smote(S)} represents the \textsc{Smote} with $k=5$ nearest neighbors while \textsc{Smote(L)} represents the \textsc{Smote} with \(k = n_1^{4/(d+4)}\) where \(n_1\) is the number of minority class instances and \(d\) is the feature dimension.
    }
	\label{fig:sim-models}
\end{figure}

\begin{figure}[h]
		\centering
		% include first image
\includegraphics[width=1\linewidth, height=0.28\textheight]{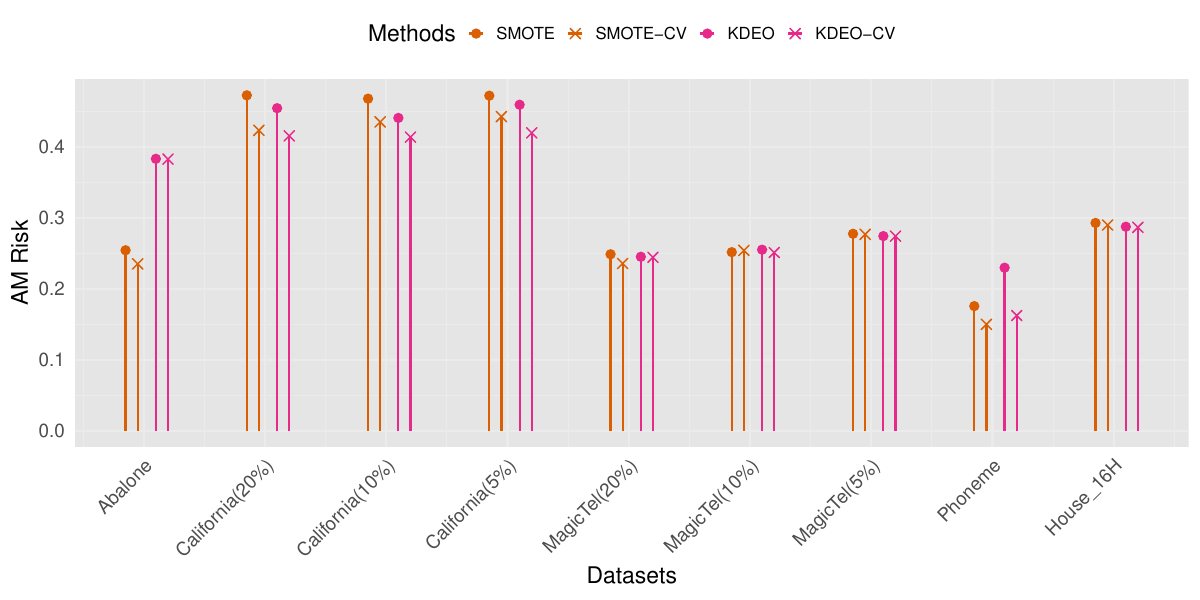}  
% \includegraphics[width=0.6\linewidth, height=0.3\textheight]{Figures/Real-data-plot_RF1.pdf}
% \includegraphics[width=0.6\linewidth, height=0.3\textheight]{Figures/Real-data-plot_logistic.pdf}
		% \caption{Example 1 with $p=4$}
		%\label{fig:sub-first}
	\caption{AM-risk corresponding to different rebalancing methods %to different 
    and datasets when using the $K$NN classifier.}
	\label{fig:realdata}
\end{figure}

\clearpage

\section*{Acknowledgments}
{T.~Ahmad acknowledges support from the R\'egion Bretagne through project SAD-2021-MaEVa. G.~Stupfler acknowledges support from grants ANR-19-CE40-0013 (ExtremReg project), ANR-23-CE40-0009 (EXSTA project) and ANR-11-LABX-0020-01 (Centre Henri Lebesgue), the TSE-HEC ACPR Chair ``Regulation and systemic risks'', and the Chair Stress Test, RISK Management and Financial Steering of the Foundation Ecole Polytechnique.}

\bibliographystyle{plainnat}
\bibliography{references.bib}

\begin{thebibliography}{50}
\providecommand{\natexlab}[1]{#1}
\providecommand{\url}[1]{\texttt{#1}}
\expandafter\ifx\csname urlstyle\endcsname\relax
  \providecommand{\doi}[1]{doi: #1}\else
  \providecommand{\doi}{doi: \begingroup \urlstyle{rm}\Url}\fi

\bibitem[Aghbalou et~al.(2024)Aghbalou, Sabourin, and Portier]{aghbalou2024sharp}
Anass Aghbalou, Anne Sabourin, and Fran{\c{c}}ois Portier.
\newblock Sharp error bounds for imbalanced classification: how many examples in the minority class?
\newblock In \emph{International Conference on Artificial Intelligence and Statistics}, pages 838--846. PMLR, 2024.

\bibitem[Audibert and Tsybakov(2007)]{tsyb}
Jean-Yves Audibert and Alexandre~B. Tsybakov.
\newblock {Fast learning rates for plug-in classifiers}.
\newblock \emph{The Annals of Statistics}, 35\penalty0 (2):\penalty0 608 -- 633, 2007.

\bibitem[Barandela et~al.(2004)Barandela, Valdovinos, S{\'a}nchez, and Ferri]{Barandela}
Ricardo Barandela, Rosa~M. Valdovinos, J.~Salvador S{\'a}nchez, and Francesc~J. Ferri.
\newblock The imbalanced training sample problem: Under or over sampling?
\newblock In Ana Fred, Terry~M. Caelli, Robert P.~W. Duin, Aur{\'e}lio~C. Campilho, and Dick de~Ridder, editors, \emph{Structural, Syntactic, and Statistical Pattern Recognition}, pages 806--814, Berlin, Heidelberg, 2004. Springer Berlin Heidelberg.

\bibitem[Bartlett and Mendelson(2002)]{bartlett2002rademacher}
Peter~L Bartlett and Shahar Mendelson.
\newblock Rademacher and {G}aussian complexities: Risk bounds and structural results.
\newblock \emph{Journal of Machine Learning Research}, 3\penalty0 (Nov):\penalty0 463--482, 2002.

\bibitem[Bej et~al.(2021)Bej, Davtyan, Wolfien, Nassar, and Wolkenhauer]{bej2021loras}
Saptarshi Bej, Narek Davtyan, Markus Wolfien, Mariam Nassar, and Olaf Wolkenhauer.
\newblock Loras: An oversampling approach for imbalanced datasets.
\newblock \emph{Machine Learning}, 110:\penalty0 279--301, 2021.

\bibitem[Boucheron et~al.(2013)Boucheron, Lugosi, and Massart]{10.1093/acprof:oso/9780199535255.001.0001}
Stéphane Boucheron, Gábor Lugosi, and Pascal Massart.
\newblock \emph{Concentration Inequalities: A Nonasymptotic Theory of Independence}.
\newblock Oxford University Press, 2013.

\bibitem[Bousquet(2002)]{bousquet2002bennett}
Olivier Bousquet.
\newblock A {B}ennett concentration inequality and its application to suprema of empirical processes.
\newblock \emph{Comptes Rendus Mathematique}, 334\penalty0 (6):\penalty0 495--500, 2002.

\bibitem[Byrd and Lipton(2019)]{pmlr-v97-byrd19a}
Jonathon Byrd and Zachary Lipton.
\newblock What is the effect of importance weighting in deep learning?
\newblock In Kamalika Chaudhuri and Ruslan Salakhutdinov, editors, \emph{Proceedings of the 36th International Conference on Machine Learning}, volume~97 of \emph{Proceedings of Machine Learning Research}, pages 872--881. PMLR, 09--15 Jun 2019.

\bibitem[Cao et~al.(2019)Cao, Wei, Gaidon, Arechiga, and Ma]{cao2019learning}
Kaidi Cao, Colin Wei, Adrien Gaidon, Nikos Arechiga, and Tengyu Ma.
\newblock Learning imbalanced datasets with label-distribution-aware margin loss.
\newblock \emph{Advances in Neural Information Processing Systems}, 32, 2019.

\bibitem[Chawla et~al.(2002)Chawla, Bowyer, Hall, and Kegelmeyer]{chawla2002smote}
Nitesh~V Chawla, Kevin~W Bowyer, Lawrence~O Hall, and W~Philip Kegelmeyer.
\newblock Smote: synthetic minority over-sampling technique.
\newblock \emph{Journal of Artificial Intelligence Research}, 16:\penalty0 321--357, 2002.

\bibitem[Chawla et~al.(2004)Chawla, Japkowicz, and Kotcz]{chawla2004special}
Nitesh~V Chawla, Nathalie Japkowicz, and Aleksander Kotcz.
\newblock Special issue on learning from imbalanced data sets.
\newblock \emph{ACM SIGKDD Explorations Newsletter}, 6\penalty0 (1):\penalty0 1--6, 2004.

\bibitem[Chen et~al.(2025)Chen, Blanchet, Dembczynski, Nern, and Flores]{chen2025optimal}
Yan Chen, Jose Blanchet, Krzysztof Dembczynski, Laura~Fee Nern, and Aaron~Eliasib Flores.
\newblock Optimal downsampling for imbalanced classification with generalized linear models.
\newblock In \emph{The 28th International Conference on Artificial Intelligence and Statistics}, 2025.

\bibitem[Delyon and Portier(2016)]{delyon2016integral}
Benrard Delyon and Fran{\c{c}}ois Portier.
\newblock Integral approximation by kernel smoothing.
\newblock \emph{Bernoulli}, 22\penalty0 (4):\penalty0 2177--2208, 2016.

\bibitem[Devroye(1991)]{Devroye1991}
Luc Devroye.
\newblock \emph{Exponential Inequalities in Nonparametric Estimation}, pages 31--44.
\newblock Springer Netherlands, 1991.

\bibitem[Devroye and Gy{\"o}rfi(1985)]{devroye1985}
Luc Devroye and L{\'a}szl{\'o} Gy{\"o}rfi.
\newblock \emph{Nonparametric Density Estimation: The L1 View.}
\newblock John Wiley Sons, New York, 1985.

\bibitem[Devroye et~al.(2013)Devroye, Gy{\"o}rfi, and Lugosi]{devroye2013probabilistic}
Luc Devroye, L{\'a}szl{\'o} Gy{\"o}rfi, and G{\'a}bor Lugosi.
\newblock \emph{A probabilistic theory of pattern recognition}, volume~31.
\newblock Springer Science \& Business Media, 2013.

\bibitem[Elreedy and Atiya(2019)]{elreedy2019comprehensive}
Dina Elreedy and Amir~F Atiya.
\newblock A comprehensive analysis of synthetic minority oversampling technique ({SMOTE}) for handling class imbalance.
\newblock \emph{Information Sciences}, 505:\penalty0 32--64, 2019.

\bibitem[Elreedy et~al.(2024)Elreedy, Atiya, and Kamalov]{elreedy2024theoretical}
Dina Elreedy, Amir~F Atiya, and Firuz Kamalov.
\newblock A theoretical distribution analysis of synthetic minority oversampling technique ({SMOTE}) for imbalanced learning.
\newblock \emph{Machine Learning}, 113\penalty0 (7):\penalty0 4903--4923, 2024.

\bibitem[Feng et~al.(2021)Feng, Zhou, and Tong]{feng2021imbalanced}
Yang Feng, Min Zhou, and Xin Tong.
\newblock Imbalanced classification: A paradigm-based review.
\newblock \emph{Statistical Analysis and Data Mining: The ASA Data Science Journal}, 14\penalty0 (5):\penalty0 383--406, 2021.

\bibitem[Gadat et~al.(2016)Gadat, Klein, and Marteau]{Gadat}
S{\'e}bastien Gadat, Thierry Klein, and Cl{\'e}ment Marteau.
\newblock {Classification in general finite dimensional spaces with the k-nearest neighbor rule}.
\newblock \emph{The Annals of Statistics}, 44\penalty0 (3):\penalty0 982--1009, 2016.

\bibitem[Gao et~al.(2012)Gao, Hong, Chen, and Harris]{gao2012probability}
Ming Gao, Xia Hong, Sheng Chen, and Chris~J Harris.
\newblock Probability density function estimation based over-sampling for imbalanced two-class problems.
\newblock In \emph{The 2012 International Joint Conference on Neural Networks (IJCNN)}, pages 1--8. IEEE, 2012.

\bibitem[Gin{\'e} and Nickl(2008)]{gine2008uniform}
Evarist Gin{\'e} and Richard Nickl.
\newblock Uniform central limit theorems for kernel density estimators.
\newblock \emph{Probability Theory and Related Fields}, 141\penalty0 (3):\penalty0 333--387, 2008.

\bibitem[Giné and Guillou(2001)]{gineConsistencyKernelDensity2001}
Evarist Giné and Armelle Guillou.
\newblock On consistency of kernel density estimators for randomly censored data: Rates holding uniformly over adaptive intervals.
\newblock \emph{Annales de l'Institut Henri Poincar\'e (B) Probability and Statistics}, 37\penalty0 (4):\penalty0 503--522, 2001.

\bibitem[Golowich et~al.(2018)Golowich, Rakhlin, and Shamir]{golowich2018size}
Noah Golowich, Alexander Rakhlin, and Ohad Shamir.
\newblock Size-independent sample complexity of neural networks.
\newblock In \emph{Conference On Learning Theory}, pages 297--299. PMLR, 2018.

\bibitem[Han et~al.(2005)Han, Wang, and Mao]{han2005borderline}
Hui Han, Wen-Yuan Wang, and Bing-Huan Mao.
\newblock Borderline-{SMOTE}: a new over-sampling method in imbalanced data sets learning.
\newblock In \emph{International Conference on Intelligent Computing}, pages 878--887. Springer, 2005.

\bibitem[He and Garcia(2009)]{5128907}
Haibo He and Edwardo~A. Garcia.
\newblock Learning from imbalanced data.
\newblock \emph{IEEE Transactions on Knowledge and Data Engineering}, 21\penalty0 (9):\penalty0 1263--1284, 2009.

\bibitem[He et~al.(2008)He, Bai, Garcia, and Li]{he2008adasyn}
Haibo He, Yang Bai, Edwardo~A Garcia, and Shutao Li.
\newblock Adasyn: Adaptive synthetic sampling approach for imbalanced learning.
\newblock In \emph{2008 IEEE international joint conference on neural networks (IEEE world congress on computational intelligence)}, pages 1322--1328. Ieee, 2008.

\bibitem[Holmström and Klemelä(1992)]{HOLMSTROM1992245}
Lasse Holmström and Jussi Klemelä.
\newblock Asymptotic bounds for the expected ${L}^1$ error of a multivariate kernel density estimator.
\newblock \emph{Journal of Multivariate Analysis}, 42\penalty0 (2):\penalty0 245--266, 1992.

\bibitem[Jiang(2019)]{jiang2019non}
Heinrich Jiang.
\newblock Non-asymptotic uniform rates of consistency for k-{NN} regression.
\newblock In \emph{Proceedings of the AAAI Conference on Artificial Intelligence}, volume~33, pages 3999--4006, 2019.

\bibitem[Kalan and Kpotufe(2024)]{kalan2024distribution}
Mohammadreza~M Kalan and Samory Kpotufe.
\newblock {Distribution-Free Rates in Neyman-Pearson Classification}.
\newblock \emph{arXiv preprint arXiv:2402.09560}, 2024.

\bibitem[Kamalov(2020)]{kamalov2020kernel}
Firuz Kamalov.
\newblock Kernel density estimation based sampling for imbalanced class distribution.
\newblock \emph{Information Sciences}, 512:\penalty0 1192--1201, 2020.

\bibitem[Kubat and Matwin(1997)]{kubat1997addressing}
Miroslav Kubat and Stan Matwin.
\newblock Addressing the curse of imbalanced training sets: one-sided selection.
\newblock In \emph{Proceedings of the 14th International Conference on Machine Learning}, pages 179--186. Morgan Kaufmann, 1997.

\bibitem[Lemaitre et~al.(2017)Lemaitre, Nogueira, and Aridas]{lemaavztre2017imbalanced}
Guillaume Lemaitre, Fernando Nogueira, and Christos~K Aridas.
\newblock Imbalanced-learn: A {P}ython toolbox to tackle the curse of imbalanced datasets in machine learning.
\newblock \emph{Journal of Machine Learning Research}, 18\penalty0 (17):\penalty0 1--5, 2017.

\bibitem[Mani and Zhang(2003)]{mani2003knn}
Inderjeet Mani and I~Zhang.
\newblock k{NN} approach to unbalanced data distributions: a case study involving information extraction.
\newblock In \emph{Proceedings of International Conference on Machine Learning (ICML 2003), Workshop on Learning from Imbalanced Data Sets}, pages 1--7. ICML United States, 2003.

\bibitem[McDiarmid(1989)]{mcdiarmid1989method}
Colin McDiarmid.
\newblock On the method of bounded differences.
\newblock \emph{Surveys in Combinatorics}, 141\penalty0 (1):\penalty0 148--188, 1989.

\bibitem[Menon et~al.(2013)Menon, Narasimhan, Agarwal, and Chawla]{menon2013statistical}
Aditya Menon, Harikrishna Narasimhan, Shivani Agarwal, and Sanjay Chawla.
\newblock On the statistical consistency of algorithms for binary classification under class imbalance.
\newblock In \emph{International Conference on Machine Learning}, pages 603--611. PMLR, 2013.

\bibitem[Meyers and Serrin(1964)]{meyser1964}
N.~G. Meyers and J.~Serrin.
\newblock ${H}={W}$.
\newblock \emph{Proceedings of the National Academy of Sciences}, 51\penalty0 (6):\penalty0 1055--1056, 1964.

\bibitem[Nicolaescu(2007)]{nic2007}
Liviu~I Nicolaescu.
\newblock \emph{Lectures on the Geometry of Manifolds (second edition)}.
\newblock World Scientific, 2007.

\bibitem[Portier(2025)]{portier2025nearest}
Fran{\c{c}}ois Portier.
\newblock Nearest neighbor empirical processes.
\newblock \emph{Bernoulli}, 31\penalty0 (1):\penalty0 312--332, 2025.

\bibitem[Portier et~al.(2024)Portier, Truquet, and Yamane]{portier2024nearest2}
Fran{\c{c}}ois Portier, Lionel Truquet, and Ikko Yamane.
\newblock Nearest neighbor sampling for covariate shift adaptation.
\newblock \emph{Journal of Machine Learning Research}, 25\penalty0 (410):\penalty0 1--42, 2024.

\bibitem[Rigollet and Tong(2011)]{JMLR:v12:rigollet11a}
Philippe Rigollet and Xin Tong.
\newblock Neyman-{P}earson classification, convexity and stochastic constraints.
\newblock \emph{Journal of Machine Learning Research}, 12\penalty0 (86):\penalty0 2831--2855, 2011.

\bibitem[Sakho et~al.(2024)Sakho, Malherbe, and Scornet]{Scornet2024}
Abdoulaye Sakho, Emmanuel Malherbe, and Erwan Scornet.
\newblock Do we need rebalancing strategies? a theoretical and empirical study around smote and its variants, 2024.
\newblock URL \url{https://arxiv.org/abs/2402.03819}.

\bibitem[Scott(2015)]{scott2015multivariate}
David~W Scott.
\newblock \emph{Multivariate density estimation: theory, practice, and visualization}.
\newblock John Wiley \& Sons, 2015.

\bibitem[Spelmen and Porkodi(2018)]{spelmen2018review}
Vimalraj~S Spelmen and R~Porkodi.
\newblock A review on handling imbalanced data.
\newblock In \emph{2018 International Conference on Current Trends towards Converging Technologies (ICCTCT)}, pages 1--11. IEEE, 2018.

\bibitem[Talagrand(1996)]{talagrand1996new}
Michel Talagrand.
\newblock New concentration inequalities in product spaces.
\newblock \emph{Inventiones mathematicae}, 126\penalty0 (3):\penalty0 505--563, 1996.

\bibitem[Tong(2013)]{tong2013plug}
Xin Tong.
\newblock A plug-in approach to {N}eyman-{P}earson classification.
\newblock \emph{Journal of Machine Learning Research}, 14\penalty0 (1):\penalty0 3011--3040, 2013.

\bibitem[Wang(2020)]{pmlr-v119-wang20a}
Haiying Wang.
\newblock Logistic regression for massive data with rare events.
\newblock In Hal~Daumé III and Aarti Singh, editors, \emph{Proceedings of the 37th International Conference on Machine Learning}, volume 119 of \emph{Proceedings of Machine Learning Research}, pages 9829--9836. PMLR, 13--18 Jul 2020.

\bibitem[Weyl(1939)]{weyl1939}
Hermann Weyl.
\newblock On the volume of tubes.
\newblock \emph{American Journal of Mathematics}, 61\penalty0 (2):\penalty0 461--472, 1939.

\bibitem[Xu et~al.(2020)Xu, Dan, Khim, and Ravikumar]{xu2020class}
Ziyu Xu, Chen Dan, Justin Khim, and Pradeep Ravikumar.
\newblock Class-weighted classification: Trade-offs and robust approaches.
\newblock In \emph{International Conference on Machine Learning}, pages 10544--10554. PMLR, 2020.

\bibitem[Xue and Kpotufe(2018)]{xue2018achieving}
Lirong Xue and Samory Kpotufe.
\newblock Achieving the time of 1-{NN}, but the accuracy of k-{NN}.
\newblock In \emph{Proceedings of the 21st International Conference on Artificial Intelligence and Statistics}, pages 1628--1636. PMLR, 2018.

\end{thebibliography}

%\bibliographystyle{unsrt}  
%\bibliography{references}

\clearpage

\appendix

%%%%%%%%%%%%%%%%%%%%%%%%%%%%%%%%%%%%%%%%%%%%%%%%%%%%%%%%%%%%

\clearpage

\appendix

\begin{center}
{\Large Appendix to ``Concentration and excess risk bounds for imbalanced classification with synthetic oversampling''} \\[2ex]
{\large Touqeer Ahmad, Mohammadreza M. Kalan, François Portier, Gilles Stupfler}
\end{center}

This appendix is organized as follows. We provide auxiliary theoretical results in Appendix~\ref{app:aux}, among which classical concentration inequalities and inequalities for convolution of density functions. We then prove Theorem \ref{theorem-smote-concentration_uniform} in Appendix~\ref{app:proofsmote} and Theorem~\ref{th:smote_expectation} in Appendix~\ref{app:proofKDEO}. A unified proof of Corollaries \ref{cor:smote} and \ref{cor:kde} is given in Appendix~\ref{app:proofcoro}. Appendix~\ref{app:riskbound} and Appendix~\ref{app:riskboundsmote} are dedicated to the proofs of Theorems~\ref{th:risk_bound_1} and~\ref{th:risk_bound_smote_1}, respectively. Appendix~\ref{app:remarksmoothing} gives the statement of Proposition~\ref{th:kbc_L1} which is used in Remark~\ref{rmk:comparison} of the paper. Appendix~\ref{app:numerical} gives a further set of numerical results complementing those of Section~\ref{numerical}.

\section{Auxiliary results}
\label{app:aux}

\subsection{Concentration inequalities} 
\label{app:aux:concentration}

The following lemma provides a uniform bound on the distance between a point and its $k$-th nearest neighbor.
\begin{lemma}[\cite{xue2018achieving}, Lemma~1]
\label{lemma_uniform_bound}
    Fix \( {n_1>0} \) and let $\{X_{1i}\}_{1\leq i\leq n_1}$ be i.i.d.~samples from a distribution $\mathbb{P}_1$ %on $\mathbb{R}^d$. Suppose that $\mathbb{P}_1$ satisfies Assumption \ref{assump_regularity}. 
    satisfying Assumption~\ref{assump_regularity}. Define $r_k(x)$ as the Euclidean distance from a point $x\in \text{supp}(\mathbb{P}_1)$ to its $k$-th nearest neighbor {in $\{X_{1i}\}_{1\leq i\leq n_1}$.} Then, for all $\delta\in (0,1)$, with probability at least $1-\delta$, it holds that
    \begin{align*}
    \forall k\in \{1,\ldots,n_1\}, \quad \sup_{x\in \mathrm{supp}(\mathbb{P}_1)} r_k(x)\leq \left(\frac{3}{C_d}\right)^{1/d} \left(\frac{\max(k,(d+1)\log(2n_1) + \log(8/\delta))}{n_1}\right)^{1/d}.
    \end{align*}
\end{lemma}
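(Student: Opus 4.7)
The plan is to reformulate the statement in terms of empirical masses of Euclidean balls and then apply a uniform concentration inequality exploiting the fact that balls in $\mathbb{R}^d$ form a Vapnik--Chervonenkis class of dimension $d+1$. Let $P_{n_1}$ denote the empirical measure of $\{X_{1i}\}_{1\leq i\leq n_1}$ and observe that $r_k(x)\leq r$ holds exactly when $n_1 P_{n_1}(B(x,r))\geq k$. Setting $k_\delta := \max(k,(d+1)\log(2n_1)+\log(8/\delta))$ and $r_k^\star := (3k_\delta/(C_d n_1))^{1/d}$, the goal reduces to showing that, on a good event of probability at least $1-\delta$, every ball $B(x,r_k^\star)$ with $x \in \mathrm{supp}(\mathbb{P}_1)$ contains at least $k$ sample points, simultaneously for all $k \in \{1,\dots,n_1\}$.

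First I would invoke a one-sided relative Vapnik--Chervonenkis inequality for the class $\mathcal{B}$ of Euclidean balls: with probability at least $1-\delta$, every $B \in \mathcal{B}$ satisfies an inequality of the form
\[
\mathbb{P}_1(B)-P_{n_1}(B) \leq c\sqrt{\mathbb{P}_1(B)\,\frac{(d+1)\log(2n_1)+\log(8/\delta)}{n_1}} + c'\,\frac{(d+1)\log(2n_1)+\log(8/\delta)}{n_1}
\]
for universal constants $c,c'>0$. This bound is the standard output of Talagrand-type inequalities applied to the bounded empirical process indexed by the $0$--$1$ functions associated with $\mathcal{B}$, combined with Sauer's lemma applied to the VC class of balls (of dimension $d+1$).

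Next I would use Assumption~\ref{assump_regularity}, which gives $\mathbb{P}_1(B(x,r_k^\star)) \geq \min\{C_d (r_k^\star)^d,1\}=\min\{3k_\delta/n_1,1\}$ for every $x\in \mathrm{supp}(\mathbb{P}_1)$. Plugging this into the VC bound and using $n_1\mathbb{P}_1(B(x,r_k^\star)) \geq 3k_\delta \geq 3((d+1)\log(2n_1)+\log(8/\delta))$, the stochastic fluctuation on the right-hand side is dominated by a fixed fraction of $\mathbb{P}_1(B(x,r_k^\star))$, yielding $n_1 P_{n_1}(B(x,r_k^\star)) \geq k_\delta \geq k$, which is exactly $r_k(x)\leq r_k^\star$. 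The uniform-in-$k$ conclusion does not cost any extra union bound: the right-hand side of the target inequality is constant for $k$ below the logarithmic threshold (while $r_k(x)$ is nondecreasing in $k$), and for $k$ above the threshold the conclusion follows on the same single good event, which controls empirical masses of all balls simultaneously and hence all radii $r_k^\star$ at once.

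The main obstacle is calibrating the constants so that the prefactor $3/C_d$ emerges cleanly. The multiplicative factor $3$ in the definition of $r_k^\star$ must absorb the universal constants $c,c'$ from the relative VC inequality and still leave $n_1 P_{n_1}(B(x,r_k^\star))\geq k$; this is achieved by ensuring that the variance-type term on the VC bound is at most $\tfrac{1}{3}\mathbb{P}_1(B(x,r_k^\star))$ and the additive term at most $\tfrac{1}{3}k_\delta$ on the good event, which is exactly what pins down the logarithmic threshold in $k_\delta$. A more elementary alternative would combine a multiplicative Chernoff bound with an $\varepsilon$-net over $\mathrm{supp}(\mathbb{P}_1)$, but the covering argument loses a geometric factor when passing from net centers to arbitrary centers in $\mathrm{supp}(\mathbb{P}_1)$, so the VC route is preferable for recovering exactly the constant $3/C_d$.
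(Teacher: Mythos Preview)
The paper does not give its own proof of this lemma: it is stated in Appendix~\ref{app:aux:concentration} as an auxiliary result borrowed directly from \cite{xue2018achieving} and is invoked without argument. So there is nothing in the paper to compare your proposal against, and the relevant question is simply whether your outline is correct.

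It is. Your reduction---$r_k(x)\le r$ iff $n_1 P_{n_1}(B(x,r))\ge k$, followed by a one-sided relative deviation inequality over the VC class of Euclidean balls (VC dimension $d+1$), combined with the mass lower bound from Assumption~\ref{assump_regularity}---is exactly the mechanism behind such uniform $k$NN radius bounds in the literature (Chaudhuri--Dasgupta, Jiang, Xue--Kpotufe, etc.), and your observation that a single VC event handles all $k$ simultaneously is the right way to avoid an extra union bound. Two minor points you should make explicit in a written-out version: (i) when $3k_\delta/n_1\ge 1$, Assumption~\ref{assump_regularity} forces $\mathbb{P}_1(B(x,r_k^\star))=1$, so every sample point lies in the ball and the conclusion is trivial---your argument as stated only covers the regime $3k_\delta<n_1$; (ii) recovering the exact constant $3$ and the precise form of $k_\delta$ requires pinning down a specific version of the relative VC inequality (the one in the cited reference), so you should cite or state that version rather than leave $c,c'$ unspecified. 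Neither of these is a gap in the strategy, only in the bookkeeping.
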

We next recall, without proof, the classical %Bernstein, 
McDiarmid and Talagrand–Bousquet inequalities. %We start with a version of the Bernstein inequality on the moment generating function.
% %
% \begin{lemma}[Bernstein MGF inequality~\citep{10.1093/acprof:oso/9780199535255.001.0001}, Theorem 2.10]\label{Bernstein_lemma}
% Let \(X_{1},\dots,X_{n}\) be independent, mean-zero random variables satisfying
% \(\lvert X_{i}\rvert\le U\) almost surely for some constant \(U>0\).
% Define
% \[
% v\;:=\;\sum_{i=1}^{n}\operatorname{Var}(X_{i}), 
% \qquad 
% S_{n}\;:=\;\sum_{i=1}^{n}X_{i}.
% \]
% Then, for every \(\lambda\in(0,\,3/U)\),
% \[
% \mathbb{E}\left(e^{\lambda S_{n}}\right)
% \le
% \exp\left(
% \frac{\lambda^{2}v}{2\bigl(1-\lambda U/3\bigr)}
% \right).
% \]
% \end{lemma}
% %
\begin{lemma}[McDiarmid inequality, see~\cite{mcdiarmid1989method}] \label{lemma:mcdiarmid}
Let \( Z_1, Z_2, \dots, Z_n \) be independent random variables taking values in some set \( \mathcal{X} \). Let 
\(T: \mathcal{X}^n \rightarrow \mathbb{R}\)
be a function satisfying the \textbf{bounded differences condition}: for each \( i \in \{1, 2, \dots, n\} \), there exists a constant \( C_i \geq 0 \) such that for all \( z_1, \dots, z_n, z_i' \in \mathcal{X} \),
\[
\left| T(z_1, \dots, z_{i-1}, z_i, z_{i+1}, \dots, z_n) - T(z_1, \dots, z_{i-1}, z_i', z_{i+1}, \dots, z_n) \right| \leq C_i.
\]
Then, for all \( t > 0 \),
\begin{align*}
\mathbb{P} ( T(Z_1, \dots, Z_n) - \mathbb{E}[T(Z_1, \dots,Z_n)] \geq t )
&\leq \exp\left( -\frac{2t^2}{\sum_{i=1}^n C_i^2} \right) \\
\mbox{and\quad } \mathbb{P} ( T(Z_1, \dots, Z_n) - \mathbb{E}[T(Z_1, \dots, Z_n)] \leq -t )
&\leq \exp\left( -\frac{2t^2}{\sum_{i=1}^n C_i^2} \right).
\end{align*}
In particular, for any $\delta\in (0,1/2)$, it holds with probability at least $1-2\delta$ that 
\[
|T(Z_1, \dots, Z_n) - \mathbb{E}[T(Z_1, \dots,Z_n)]| \leq \sqrt{\frac{\log(1/\delta)}{2} \sum_{i=1}^n C_i^2}.
\]
\end{lemma}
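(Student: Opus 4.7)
The plan is to prove the McDiarmid inequality via the standard Doob martingale plus Azuma--Hoeffding approach. First, I would introduce the natural filtration $\mathcal{F}_i = \sigma(Z_1, \ldots, Z_i)$ with the convention $\mathcal{F}_0 = \{\emptyset, \Omega\}$, and define the Doob martingale
\[
M_i := \mathbb{E}[T(Z_1, \ldots, Z_n) \mid \mathcal{F}_i], \quad i = 0, 1, \ldots, n,
\]
so that $M_0 = \mathbb{E}[T(Z_1, \ldots, Z_n)]$ and $M_n = T(Z_1, \ldots, Z_n)$. The goal then becomes controlling $M_n - M_0 = \sum_{i=1}^n (M_i - M_{i-1})$.

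Next, I would bound the martingale differences $D_i := M_i - M_{i-1}$ using the bounded-differences hypothesis. By independence of the $Z_j$, conditional on $\mathcal{F}_{i-1}$ the map $z \mapsto \mathbb{E}[T(Z_1, \ldots, Z_{i-1}, z, Z_{i+1}, \ldots, Z_n) \mid \mathcal{F}_{i-1}]$ takes values in an interval of length at most $C_i$; this is the key step and the one requiring the most care, because one must verify that averaging over $Z_{i+1}, \ldots, Z_n$ preserves the pointwise $C_i$-Lipschitz bound in $z_i$. Once established, $D_i$ lies almost surely in an interval of length $C_i$ whose endpoints are $\mathcal{F}_{i-1}$-measurable, so $|D_i| \leq C_i$ a.s.\ and $\mathbb{E}[D_i \mid \mathcal{F}_{i-1}] = 0$.

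The third step is a conditional Hoeffding lemma: for a centered random variable $D$ with $\mathbb{E}[D \mid \mathcal{F}_{i-1}] = 0$ lying in an $\mathcal{F}_{i-1}$-measurable interval of length $C_i$, one has $\mathbb{E}[e^{\lambda D} \mid \mathcal{F}_{i-1}] \leq \exp(\lambda^2 C_i^2 / 8)$. Iterating the tower property on the product $\mathbb{E}[\exp(\lambda \sum_{i=1}^n D_i)] = \mathbb{E}[\mathbb{E}[e^{\lambda D_n} \mid \mathcal{F}_{n-1}] \cdot \exp(\lambda \sum_{i=1}^{n-1} D_i)]$ yields
\[
\mathbb{E}\!\left[\exp\!\left(\lambda \sum_{i=1}^n D_i\right)\right] \leq \exp\!\left(\frac{\lambda^2}{8} \sum_{i=1}^n C_i^2\right).
\]

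Finally, I would apply the Chernoff trick: $\mathbb{P}(M_n - M_0 \geq t) \leq e^{-\lambda t} \mathbb{E}[e^{\lambda(M_n - M_0)}]$, optimize over $\lambda > 0$ by taking $\lambda = 4t / \sum_i C_i^2$, and obtain $\exp(-2t^2 / \sum_i C_i^2)$. The lower-tail bound follows by applying the upper-tail argument to $-T$, which satisfies the same bounded-differences condition with the same constants $C_i$. The ``in particular'' two-sided statement is then an immediate union bound: choose $t = \sqrt{(1/2)\log(1/\delta) \sum_i C_i^2}$ so that each one-sided probability is at most $\delta$, giving the two-sided conclusion with probability at least $1 - 2\delta$.
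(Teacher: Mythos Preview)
Your proof proposal is correct and follows the standard Doob martingale plus Azuma--Hoeffding route, which is exactly the argument in~\cite{mcdiarmid1989method}. Note, however, that the paper itself does not prove this lemma: it explicitly states that it ``recall[s], without proof, the classical McDiarmid and Talagrand--Bousquet inequalities,'' so there is no paper-proof to compare against---your argument simply fills in the omitted classical derivation.
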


\begin{lemma}[\cite{talagrand1996new}, Theorem~2.3 in~\cite{bousquet2002bennett}; with separability assumptions,~\cite{10.1093/acprof:oso/9780199535255.001.0001} p.315]\label{lem:bousquet}
Let $Z_1,\dots,Z_n$ be independent and identically distributed random variables with values in a
measurable space $\mathcal X$. Let $\mathcal F$ be a separable class of measurable
functions $f\colon\mathcal X\to\mathbb R$ that satisfy, for some $U>0$, 
\[
      \mathbb E f(Z_i)=0
      \quad\text{and}\quad
      |f(x)|\le U
      \quad\text{for all }f\in\mathcal F,\;x\in\mathcal X .
\]
Let \(
   \sigma^2 = 
\sup_{f\in\mathcal F}        \operatorname{Var}\bigl(f(Z_1)\bigr)\) . Then for every $t\ge 0$,
\begin{align}\label{uniform_probability}
   \mathbb{P}\left(
        S_n \ge
        \mathbb E(S_n)
        + \sqrt{2(n\sigma^2 + 2U \mathbb E(S_n))t}
        + \frac{U}{3} t
       \right)
      \le e^{-t},
\end{align}
where  $S_n$ can be either equal to $ \sup_{f\in\mathcal F}\;\sum_{i=1}^{n}f(Z_i)$ or $    \sup_{f\in\mathcal F}\; |\sum_{i=1}^{n}f(Z_i)|$. %Consequently, with $0<\delta<1$ and
%$t=\log(1/\delta)$,
%\begin{align*}
%   \mathbb{P}\left(
%        |S_n-\mathbb E(S_n)|
%        \le
%\sqrt{2\fp{n\sigma^2 + 2U\,\mathbb E(S_n)}\log(1/\delta)}
%        +\frac{U}{3}\log(1/\delta)
%       \right)
%       \ge 1-2\delta .
%\end{align*}
%
%\gsnote{Why would we not recall the version with absolute value directly? This is used in the proof of Theorem 3 (see blue chunk of math), and it would probably be sufficient in the proof of Theorem 1 also.} \fp{I thin it should be stated indeed to ease Theorem 3's proof understanding. However sometimes (perhaps often) the Rademacher complexity is given without absolute value. This is why Mohammad has done it this way.}
%
\end{lemma}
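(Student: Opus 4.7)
The plan is to invoke Bousquet's version of Talagrand's concentration inequality for the supremum of an empirical process, exactly as stated in Theorem~2.3 of \cite{bousquet2002bennett}, after handling the measurability of $S_n$ via the separability assumption as in \cite{10.1093/acprof:oso/9780199535255.001.0001}, p.315. The two forms of $S_n$ (the signed supremum and the absolute supremum) can be unified by observing that the absolute case reduces to the signed one applied to the symmetrized class $\mathcal{F} \cup (-\mathcal{F})$, which inherits boundedness, separability, and the same $\sigma^2$. Separability lets one realize the supremum as a countable maximum, so that the functional is measurable and one may argue by monotone approximation on finite subclasses.

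First I would reduce to the finite case, then proceed via the entropy method of Ledoux--Massart as refined by Bousquet. Setting $Z = S_n - \mathbb{E}[S_n]$ and $\varphi(\lambda) = \log \mathbb{E}[e^{\lambda Z}]$ for $\lambda \in [0, 3/U)$, the strategy is to derive the sub-gamma bound
\[
\varphi(\lambda) \le \frac{v \lambda^2}{2\,(1 - U\lambda/3)}, \qquad v := n\sigma^2 + 2U\,\mathbb{E}[S_n].
\]
The key analytic device is the tensorization of entropy: writing $\mathrm{Ent}(e^{\lambda Z}) \le \sum_{i=1}^n \mathbb{E}[\mathrm{Ent}_i(e^{\lambda Z})]$, where $\mathrm{Ent}_i$ is the conditional entropy given $(Z_j)_{j\neq i}$. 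Each conditional entropy is controlled by introducing an independent copy $Z_i'$ and using the elementary inequality $(x-y)(\log x - \log y) \le (x-y)^2/\min(x,y)$ applied to $e^{\lambda Z}$ and $e^{\lambda Z^{(i)}}$, where $Z^{(i)}$ is obtained from $Z$ by replacing $Z_i$ with $Z_i'$. This produces a pointwise bound in terms of $(f^*(Z_i) - f^*(Z_i'))^2$, where $f^*$ is an (almost) maximizing function; taking expectation and using boundedness by $U$ and the definition of $\sigma^2$ yields the crucial variance proxy $v = n\sigma^2 + 2U \mathbb{E}[S_n]$, the second summand being the contribution of the positive part that cannot be absorbed into $n\sigma^2$ alone.

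The resulting entropy-type inequality translates into a differential inequality $\lambda \varphi'(\lambda) - \varphi(\lambda) \le $ (explicit polynomial in $\varphi'$ and $\lambda$), which is then integrated on $[0,\lambda]$ to obtain the sub-gamma bound above. The tail inequality \eqref{uniform_probability} follows by Markov's inequality $\mathbb{P}(Z \ge r) \le \exp(\varphi(\lambda) - \lambda r)$ and the standard Chernoff--Legendre inversion of sub-gamma moment generating functions, which converts the bound $\varphi(\lambda) \le v\lambda^2/(2(1-U\lambda/3))$ into the right-hand deviation $r = \sqrt{2vt} + Ut/3$ at confidence level $e^{-t}$.

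The main obstacle is the entropy tensorization step combined with the identification of the correct variance proxy $v$: a naive bounded-differences (McDiarmid) approach would only yield $v = nU^2$, missing the improvement driven by the weak variance $\sigma^2$ and the expectation $\mathbb{E}[S_n]$. Correctly carrying out the symmetrization trick on the maximizer $f^*$ and showing that the extra $2U\mathbb{E}[S_n]$ term is unavoidable (and sufficient) is the delicate part; everything afterwards is standard sub-gamma calculus. Since this is a classical result, in the paper I would simply cite \cite{bousquet2002bennett} for the probabilistic content and \cite{10.1093/acprof:oso/9780199535255.001.0001}, p.315, for the measurability/separability reduction.
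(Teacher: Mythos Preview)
Your proposal is correct and aligns with the paper's approach: the paper explicitly states this lemma ``without proof'' and simply cites \cite{talagrand1996new}, \cite{bousquet2002bennett}, and \cite{10.1093/acprof:oso/9780199535255.001.0001} for the result, exactly as you suggest in your final sentence. Your sketch of the entropy method behind Bousquet's inequality is accurate, but the paper does not include any such sketch---it treats the lemma purely as a black-box citation.
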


The next lemma simplifies the bound in the conclusion \eqref{uniform_probability} of Lemma \ref{lem:bousquet} to a form that is more convenient for our purposes. 
\begin{lemma}\label{simplified_term_uniform_probability}
    Assume the setting of Lemma~\ref{lem:bousquet} and retain the notation therein. Then, for all $\delta\in (0,1)$, 
    \begin{align*}
        \mathbb{P}\left(S_n\leq 2\mathbb{E}(S_n)+\sqrt{2n\sigma^2\log(1/\delta)}+\frac{4U}{3}\log(1/\delta) \right)\geq 1-\delta,
    \end{align*}
where  $S_n$ can be either equal to $ \sup_{f\in\mathcal F}\;\sum_{i=1}^{n}f(Z_i)$ or $    \sup_{f\in\mathcal F}\; |\sum_{i=1}^{n}f(Z_i)|$.
\end{lemma}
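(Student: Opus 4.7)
The strategy is to start from the Talagrand–Bousquet bound recalled in Lemma~\ref{lem:bousquet} and replace the inconvenient square root $\sqrt{2(n\sigma^2 + 2U\,\mathbb{E}(S_n))t}$, in which $\mathbb{E}(S_n)$ appears under the radical, with a cleaner sum of three terms of the form (expectation) $+$ (sub-Gaussian term) $+$ (sub-exponential term). I would then set $t=\log(1/\delta)$ to translate the resulting tail inequality into the high-probability form stated in the lemma.

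\textbf{Key steps.} First, I would apply the subadditivity of the square root, namely $\sqrt{a+b}\le \sqrt a+\sqrt b$ for $a,b\ge 0$, to get
\[
\sqrt{2(n\sigma^2 + 2U\,\mathbb{E}(S_n))\,t}\;\le\;\sqrt{2n\sigma^2 t}\;+\;\sqrt{4U\,\mathbb{E}(S_n)\,t}.
\]
Second, I would linearize the second term on the right using the AM--GM inequality $2\sqrt{xy}\le x+y$ with $x=\mathbb{E}(S_n)$ and $y=Ut$, obtaining
\[
\sqrt{4U\,\mathbb{E}(S_n)\,t}\;=\;2\sqrt{U\,\mathbb{E}(S_n)\,t}\;\le\;\mathbb{E}(S_n)+Ut.
\]
Combining these, the Bousquet deviation term is bounded by $\mathbb{E}(S_n)+\sqrt{2n\sigma^2 t}+Ut+(U/3)t = \mathbb{E}(S_n)+\sqrt{2n\sigma^2 t}+(4U/3)t$. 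Third, I would plug this into~\eqref{uniform_probability} of Lemma~\ref{lem:bousquet}, yielding
\[
\mathbb{P}\!\left(S_n\;\le\;2\mathbb{E}(S_n)+\sqrt{2n\sigma^2 t}+\tfrac{4U}{3}t\right)\;\ge\;1-e^{-t}.
\]
Setting $t=\log(1/\delta)$ gives exactly the conclusion, valid verbatim for both variants $S_n=\sup_f \sum f(Z_i)$ and $S_n=\sup_f|\sum f(Z_i)|$ since Lemma~\ref{lem:bousquet} already covers both.

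\textbf{Main obstacle.} There is essentially no substantive difficulty here; this is a purely algebraic simplification of Bousquet's inequality. The only point meriting a moment of care is to ensure the two elementary inequalities applied ($\sqrt{a+b}\le\sqrt a+\sqrt b$ and $2\sqrt{xy}\le x+y$) are used with nonnegative arguments, which is guaranteed since $\mathbb{E}(S_n)\ge 0$ (as the supremum of a class containing the identically-zero function, or simply by Jensen if we work with the absolute-value variant). The proof should therefore fit in a few lines.
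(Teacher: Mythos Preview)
Your proof is correct and follows exactly the same route as the paper: split the square root via $\sqrt{a+b}\le\sqrt a+\sqrt b$, linearize $2\sqrt{U\mathbb{E}(S_n)t}\le \mathbb{E}(S_n)+Ut$, and set $t=\log(1/\delta)$. One small refinement: your justification that $\mathbb{E}(S_n)\ge 0$ via ``the class contains the zero function'' is not quite the right argument here (nothing in Lemma~\ref{lem:bousquet} assumes $0\in\mathcal F$); the clean reason is that $\mathbb{E}[\sup_f \sum_i f(Z_i)]\ge \sup_f \mathbb{E}[\sum_i f(Z_i)]=0$ since each $f$ is centered.
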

\begin{proof}
We take $t=\log(1/\delta)$ and bound the square-root term in \eqref{uniform_probability} as follows:
\begin{align*}
    \sqrt{2  (n\sigma^2 + 2U \mathbb{E}(S_n)) \log(1/\delta)} 
    &\leq \sqrt{2n\sigma^2 \log(1/\delta)} 
      + \sqrt{4U \, \mathbb{E}(S_n) \log(1/\delta)}\\
    &\leq \sqrt{2n\sigma^2 \log(1/\delta)} 
      + \mathbb{E}(S_n) + U\log(1/\delta).
\end{align*}
Apply \eqref{uniform_probability} in Lemma~\ref{lem:bousquet} with $t=\log(1/\delta)$ to complete the proof.
\end{proof}
\subsection{Inequalities for the convolution of density functions}
\label{app:aux:convolution}

The first lemma is the key ingredient in order to control variance terms in integrated $L^1-$deviations of kernel density estimators. It holds under simple conditions on the tails of $f$ and $K$; in particular, it holds if $f$ and $K$ have compact support, and it holds under either Assumptions~\ref{cond:kernel}-\ref{cond:P_X_density2} or under Assumptions \ref{cond:kernel_compact}-\ref{cond:P_X_density1_compact}. Recall the notation $M_{q}(g): = \int \|z\|_2 ^{q} g(z) dz$.

\begin{lemma}\label{lemma:conv_kernel}
    Let $f,K: \mathbb R^d \to \mathbb R$ be two density functions such that $M_{d+\varepsilon}(f+K) < \infty$ for some $\varepsilon>0$. Then it holds that 
    \begin{align*}
    \int \sqrt{ f\ast K_h (x) } dx &\leq \sqrt{V_d} \left( 1 + \sqrt{\frac{d}{\varepsilon} 2^{d+\varepsilon-1} ( M_{d+\varepsilon}(f)+h^{d+\varepsilon} M_{d+\varepsilon}(K) ) } \right) \\
    &\leq c( 1 + h^{(d+\varepsilon)/2})   %+  C_{d,\varepsilon} {h^{d+\varepsilon}} M_{d+\varepsilon}(K)  
    \end{align*}
    where $c= C_{d,\varepsilon}  ( 1 +\sqrt{M_{d+\varepsilon}(f+ K)})$ and $C_{d,\varepsilon}$ is a constant that depends on $d$ and $\varepsilon$ only.
\end{lemma}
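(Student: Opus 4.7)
The plan is to split the integral at the unit ball and estimate each piece separately. Write $I_1 = \int_{B(0,1)} \sqrt{f\ast K_h(x)}\,dx$ and $I_2 = \int_{B(0,1)^c} \sqrt{f\ast K_h(x)}\,dx$. For $I_1$, apply Cauchy--Schwarz with the indicator of $B(0,1)$: since $f\ast K_h$ is a density and the Lebesgue measure of $B(0,1)$ equals $V_d$, we get $I_1 \leq \sqrt{V_d}\cdot\sqrt{\int f\ast K_h(x)\,dx}=\sqrt{V_d}$.

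For $I_2$, the idea is to introduce a weight $\|x\|^{-(d+\varepsilon)}$ that is integrable on $B(0,1)^c$. Writing $\sqrt{f\ast K_h(x)} = \|x\|^{-(d+\varepsilon)/2}\cdot \|x\|^{(d+\varepsilon)/2}\sqrt{f\ast K_h(x)}$ and applying Cauchy--Schwarz gives
\[
I_2 \leq \sqrt{\int_{B(0,1)^c} \|x\|^{-(d+\varepsilon)}\,dx}\cdot\sqrt{\int_{B(0,1)^c}\|x\|^{d+\varepsilon}\,f\ast K_h(x)\,dx}.
\]
The first factor is handled by polar coordinates: $\int_{B(0,1)^c}\|x\|^{-(d+\varepsilon)}\,dx = dV_d\int_1^\infty r^{-1-\varepsilon}\,dr = dV_d/\varepsilon$. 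For the second factor, interpret $f\ast K_h$ as the distribution of $X+hZ$ with $X\sim f$, $Z\sim K$ independent, and invoke the elementary inequality $\|a+b\|^q\leq 2^{q-1}(\|a\|^q+\|b\|^q)$ for $q=d+\varepsilon\geq 1$:
\[
\int \|x\|^{d+\varepsilon}\,f\ast K_h(x)\,dx = \mathbb{E}\|X+hZ\|^{d+\varepsilon}\leq 2^{d+\varepsilon-1}\bigl(M_{d+\varepsilon}(f)+h^{d+\varepsilon}M_{d+\varepsilon}(K)\bigr).
\]
Bounding the domain of the outer integral by $\mathbb{R}^d$ in this step and combining with the estimate for $I_1$ yields the first inequality of the lemma.

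For the second inequality, I would use $\sqrt{a+b}\leq \sqrt{a}+\sqrt{b}$ applied to the two moment terms inside the square root, then factor out constants depending only on $d$ and $\varepsilon$. The $h^{d+\varepsilon}M_{d+\varepsilon}(K)$ contribution produces the $h^{(d+\varepsilon)/2}$ factor, while $M_{d+\varepsilon}(f)$ contributes the constant, and the remaining $1$ inside the parenthesis absorbs the contribution of $I_1$. The constant $C_{d,\varepsilon}$ emerges from $\sqrt{V_d}$ together with $\sqrt{dV_d\cdot 2^{d+\varepsilon-1}/\varepsilon}$.

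There is essentially no hard part here; the only non-routine choice is the insertion of the weight $\|x\|^{-(d+\varepsilon)}$ to leverage the tail moment assumption $M_{d+\varepsilon}(f+K)<\infty$ via Cauchy--Schwarz on the exterior of the unit ball. Everything else is bookkeeping, including the $2^{d+\varepsilon-1}$ factor coming from the standard convexity bound on $\|X+hZ\|^{d+\varepsilon}$.
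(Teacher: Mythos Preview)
Your proposal is correct and matches the paper's proof essentially step for step: the same split at the unit ball, the same Cauchy--Schwarz with weight $\|x\|^{-(d+\varepsilon)}$ on the exterior (yielding $dV_d/\varepsilon$ via polar coordinates), and the same convexity bound $\|a+b\|^{d+\varepsilon}\le 2^{d+\varepsilon-1}(\|a\|^{d+\varepsilon}+\|b\|^{d+\varepsilon})$ for the moment of the convolution. The only cosmetic differences are that the paper phrases the inner-ball bound via Jensen's inequality and writes the moment computation as a double integral rather than probabilistically.
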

With this lemma at our disposal and the notation of our paper, we define $c_{y,d,\varepsilon}=C_{d,\varepsilon} (1 +\sqrt{M_{d+\varepsilon}(f_y+K)})$, which is a constant that appears in Theorem~\ref{th:risk_bound_smote_1} and that depends on $d$, $\varepsilon$, $f_y$ and $K$ only. Extensions of this lemma may be found in \citet[Lemma 7 and Proposition 8]{HOLMSTROM1992245}, where an $L^1$-rate of convergence is established for the kernel density estimator; the above version is sufficient for our purposes.
\begin{proof} Let $g$ be a density function on $\mathbb{R}^d$. By the Cauchy-Schwarz inequality, it holds
\[
\int_{\|x\|_2>1} \sqrt{g(x)} dx \leq \sqrt{ \int_{\|x\|_2>1} \|x\|_2 ^{d+\varepsilon}  g(x) dx  } \sqrt{ \int_{\|x\|_2>1} \|x\|_2 ^{-(d+\varepsilon)}  dx   }.
\]
% %
% \begin{align*}
% \int_{\|x\|_2>1} \sqrt{f(x)} dx &= \int_{\|x\|_2>1} ( \|x\|_2 ^ {d+\varepsilon}  f(x) ) ^{1/2} ( \|x\|_2 ^{d+\varepsilon}   ) ^{-1/2}dx \\
%     &\leq \sqrt{ \int_{\|x\|_2>1} \|x\|_2 ^{d+\varepsilon}  f (x) dx  } \sqrt{ \int_{\|x\|_2>1} \|x\|_2 ^{-(d+\varepsilon)}  dx   }.
% \end{align*}
% %
The second integral in the right-hand side can be calculated using polar coordinates:
\[
\int _{\|x\|_2>1} \|x\|_2 ^{-(d+\varepsilon)}  dx   = dV_d  \int_{\rho>1} \rho ^{-(d+\varepsilon)}  \rho^{d-1} d\rho =  \frac{dV_d}{\varepsilon}
\]
where $V_d$ is the volume of the unit ball in $\mathbb{R}^d$. Besides, by the Jensen inequality, 
\[
\int_{\|x\|_2\leq 1} \sqrt{g(x)} \frac{dx}{V_d} \leq  \sqrt{ \int_{\|x\|_2\leq 1} g(x) \frac{dx}{V_d}} \leq \sqrt{\frac{1}{V_d}}.
\]
As a consequence
\[
\int  \sqrt{g(x)} dx = \int_{\|x\|_2>1} \sqrt{g(x)} dx + \int_{\|x\|_2\leq 1} \sqrt{g(x)} dx \leq  \sqrt{V_d} \left( 1 + \sqrt{ \frac{d}{\varepsilon} \int \|x\|_2 ^{d+\varepsilon} g(x) dx  }  \right)
\]
It suffices to apply the above inequality to the density $g:x\mapsto f\ast K_h (x) = \int f(x-z) K_h(z) dz$ and to notice that
\begin{align*}
\int \|x\|_2 ^{d+\varepsilon} f\ast K_h (x) dx &= \iint \|x\|_2 ^{d+\varepsilon}  f(x-z) K_h(z) dx \, dz \\
&= \iint \|s+z\|_2 ^{d+\varepsilon}  f(s) K_h(z) ds \, dz \\
&\leq {2^{d+\varepsilon-1} \left( \iint \|s\|_2^{d+\varepsilon}  f(s) K_h(z) ds \, dz + \iint \| z\|_2 ^{d+\varepsilon}  f(s) K_h(z) ds \, dz \right)} \\
&\leq 2^{d+\varepsilon-1}  \left( {M_{d+\varepsilon}(f) +  h^{d+\varepsilon} M_{d+\varepsilon}(K)} \right)
%&\leq 2^{d+\varepsilon-1} (1+h^{d+\varepsilon})  M_{d+\varepsilon}(f+ K)
\end{align*}
to complete the proof.
\end{proof}

We turn to two lemmas dedicated to the control of bias terms in integrated $L^1-$deviations of kernel density estimators. These lemmas are in the same spirit as Lemma 3 in \cite{gine2008uniform} as well as Lemma 6 in \cite{delyon2016integral}, where similar quantities (these articles work with respect to a probability measure instead of the Lebesgue measure) are analyzed using high-order kernels to benefit from the smoothness of $f$. Our approach is somewhat different as we rely on standard kernels {under twice differentiability at most}. The rate of convergence obtained is therefore slower but the results have wider scope. %since they require only the function $f$ \gs{to belong to a Sobolev space.} 
Note also that Proposition 3 and Corollary 1 in \cite{gine2008uniform} provide (without rate) the convergence to $0$ of similar quantities. One of the two statements below, which shall be used under Assumptions~\ref{cond:kernel} and~\ref{cond:P_X_density2}, is taken from~\citet[Proposition 4]{HOLMSTROM1992245}. %namely statement (ii); statement (i) is an analogous result in the supremum norm under stronger conditions, that will be used in the proof of Theorem~\ref{theorem-smote-concentration_uniform} and~\ref{th:smote_expectation}.
%
% \begin{lemma}\label{lemma:bias_regular_density}
% Suppose that $K$ is a \gs{density} function on $\mathbb R^d$. 
% %
% \begin{enumerate}[label={(\roman*)}, wide=0.5em, leftmargin=*]
% \item Suppose $M_1(K) < \infty$ and $f$ is $L$-Lipschitz. %, i.e., for all $x, y \in \mathbb{R}^d$, we have $|f(x) - f(y)| \leq L \|x - y\|_2$. 
% Then
% \[
% \sup_{x \in \mathbb{R}^d} \left| f \ast K_h(x) - f(x) \right| \leq L h M_1(K).
% \]
% \item If \gs{$K$ is symmetric}, $M_2(K) < \infty$ and $f\in W^{2,1}(\mathbb{R}^d)$ then
% \[
% \int |f\ast K_h (x) - f(x) | dx  \leq h^2 \phi 
% \]
% where 
% \[
% \phi = \frac{1}{2} \left( \sum_{i=1}^d \int |\partial_{ii}^2 f(x)| dx \int x_i^2 K(x) dx + 2 \sum_{1\leq i<j\leq d} \int |\partial_{ij}^2 f(x)| dx \int |x_i x_j| K(x) dx \right)
% \]
% is a finite constant that depends on the $L^1$-norm of the second weak partial derivatives $\partial_{ij}^2 f$ ($1\leq i,j\leq d$) of $f$ and on $K$ only.
% \end{enumerate}
% %
% \end{lemma}
%
\begin{lemma}[Proposition 4 in~\cite{HOLMSTROM1992245}] 
\label{lemma:bias_regular_density}
Suppose that $K$ is a symmetric density function on $\mathbb R^d$, that $M_2(K) < \infty$ and $f\in W^{2,1}(\mathbb{R}^d)$. Then
\[
\int |f\ast K_h (x) - f(x) | dx  \leq h^2 \phi 
\]
where 
\[
\phi = \frac{1}{2} \left( \sum_{i=1}^d \int |\partial_{ii}^2 f(x)| dx \int x_i^2 K(x) dx + 2 \sum_{1\leq i<j\leq d} \int |\partial_{ij}^2 f(x)| dx \int |x_i x_j| K(x) dx \right)
\]
is a finite constant that depends on the $L^1$-norm of the second weak partial derivatives $\partial_{ij}^2 f$ ($1\leq i,j\leq d$) of $f$ and on $K$ only.
\end{lemma}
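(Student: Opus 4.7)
The plan is to establish the result by a second–order Taylor expansion combined with the symmetry of $K$, and then exchange the order of integration. The fact that $f$ has only weak second derivatives rather than classical ones will be handled by an approximation argument, which I expect to be the only real subtle point.

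First I would use the substitution $y=hz$ and the fact that $K$ is a probability density to rewrite, for any $x \in \mathbb{R}^d$,
\[
f\ast K_h(x) - f(x) \;=\; \int K_h(y)\bigl(f(x-y)-f(x)\bigr)\,dy \;=\; \int K(z)\bigl(f(x-hz)-f(x)\bigr)\,dz.
\]
If $f$ were $C^2$, the integral form of Taylor's theorem would give
\[
f(x-hz) - f(x) \;=\; -h\,\nabla f(x)\!\cdot\! z \;+\; h^2 \int_0^1 (1-t)\, z^\top \nabla^2 f(x-thz)\, z\, dt.
\]
Integrating against $K(z)$, the first–order term vanishes because $K$ is symmetric, so $\int K(z)\,z\,dz = 0$. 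This yields
\[
f\ast K_h(x) - f(x) \;=\; h^2 \int_0^1 (1-t)\! \int K(z) \sum_{i,j=1}^d z_i z_j\, \partial^2_{ij} f(x-thz)\, dz\, dt.
\]

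Next I would take absolute values, integrate over $x$, and apply Fubini together with the translation invariance of the Lebesgue measure to move the integral in $x$ past the integrals in $z$ and $t$:
\[
\int \bigl|f\ast K_h(x) - f(x)\bigr|\, dx \;\le\; h^2 \sum_{i,j=1}^d \int |\partial^2_{ij} f(u)|\,du \;\cdot\; \int K(z)\,|z_i z_j|\,dz \;\cdot\; \int_0^1 (1-t)\, dt.
\]
The $t$-integral gives $1/2$; the diagonal and off-diagonal terms can then be regrouped to match the expression for $\phi$ stated in the lemma (noting that $z_i z_j = z_i^2$ requires no absolute value when $i=j$, which accounts for the factor $2$ in front of the off-diagonal sum). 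Finite-ness of these integrals follows from $f \in W^{2,1}(\mathbb{R}^d)$ together with $M_2(K) < \infty$ and the Cauchy--Schwarz bound $|z_i z_j|\le \tfrac12(z_i^2+z_j^2)$.

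The hard (and only nontrivial) step is justifying the Taylor expansion for $f$ having only weak second derivatives. I would handle this by mollification: let $f_\eta = f \ast \rho_\eta$ where $\rho_\eta$ is a standard smooth mollifier. Then $f_\eta \in C^\infty \cap W^{2,1}(\mathbb{R}^d)$, $f_\eta \to f$ in $L^1$, and $\partial^2_{ij} f_\eta \to \partial^2_{ij} f$ in $L^1$ by standard properties of mollification on Sobolev spaces. Applying the classical Taylor expansion and the above integrated bound to $f_\eta$ yields the inequality for $f_\eta$ with the same structural constant $\phi_\eta$ built from $\|\partial^2_{ij} f_\eta\|_{L^1}$. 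Passing to the limit $\eta \to 0$, the left-hand side converges to $\int |f\ast K_h - f|\,dx$ (since convolution with $K_h$ is continuous on $L^1$) and $\phi_\eta \to \phi$, giving the stated bound. This completes the proof.
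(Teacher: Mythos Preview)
The paper does not provide its own proof of this lemma; it is stated as a citation to Proposition~4 of \cite{HOLMSTROM1992245} and used without proof. Your argument is correct and is precisely the standard route: rewrite the bias as $\int K(z)(f(x-hz)-f(x))\,dz$, apply the integral-remainder Taylor expansion, use symmetry of $K$ to eliminate the linear term, then integrate in $x$ and invoke translation invariance to reduce to $L^1$-norms of second partials; the mollification step to pass from $C^2$ to $W^{2,1}$ is the natural way to justify the expansion under weak differentiability. One very small clarification: the factor $2$ in front of the off-diagonal sum comes from the symmetry $\partial^2_{ij}f=\partial^2_{ji}f$ (so that $\sum_{i\neq j}=2\sum_{i<j}$), not from the absolute value observation; otherwise the argument is complete.
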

With this lemma at our disposal and the notation of our paper, we define $\phi_y$ as being the constant $\phi$ appearing in (ii) with $f=f_y$, which is a constant that appears in Theorem~\ref{th:risk_bound_smote_1} and that depends on $f_y$ and $K$ only.
%
% \begin{proof} We only prove (i) since (ii) is Proposition 4 in~\cite{HOLMSTROM1992245}. Fix \(x\in\mathbb{R}^d\). \gs{Then % By symmetry of \(K\),
% \[
% %    f*K_h(x)-f(x) \;=\; \int K_h(x-y)\, (f(y)-f(x)) dy .
%   f*K_h(x)-f(x) \;=\; \int K_h(t)\, (f(x-t)-f(x)) dt.
% \]
% Applying the Lipschitz condition and writing %\(u=(x-y)/h\),
% $u=t/h$, 
% \[
% \begin{aligned}
%     \bigl|f*K_h(x)-f(x)\bigr|
%     %\le \int K_h(x-y)\,L\|x-y\|_2\,dy = Lh \int \|u\|_2\,K(u)\,du= Lh\,M_1(K).
%     \le \int K_h(t)\,L\|t\|_2\,dt = Lh \int \|u\|_2\,K(u)\,du= Lh\,M_1(K).
% \end{aligned}
% \]
% Taking} the supremum with respect to $x$ completes the proof.
% %
% \end{proof}

Our next and final lemma is analogous to Lemma~\ref{lemma:bias_regular_density} but only requires the existence and integrability of the weak gradient of $f$, so that it can be applied under Assumptions \ref{cond:kernel_compact} and \ref{cond:P_X_density1_compact}. It should be clear that this result is not a straightforward consequence of the results of~\cite{HOLMSTROM1992245}, because it is not assumed below that $f$ is smooth on the whole of $\mathbb{R}^d$.
\begin{lemma}\label{lemma:technical_bias_general}
Suppose that %\ref{cond:kernel_compact} and \ref{cond:P_X_density1_compact} are fulfilled. 
\begin{itemize}
\item $K$ is a symmetric density function supported on $B(0,1)$. 
\item $f$ has a support $S$ such that there are $\kappa,r_0>0$ with $\lambda( \partial S  + B(0,r) ) \leq \kappa r $ for all $r\in (0,r_0)$. 
\item $f\in W^{1,1}(U)$, where $U$ denotes the interior of $S$. 
\item $f$ is bounded on $\partial S + B(0,2r_0)$.
\end{itemize}
If $h\leq r_0$, then it holds that
\[
\int |f\ast K_h (x) - f(x) | dx \leq h \psi,
\]
where, letting $\nabla f$ denote the weak gradient of $f$,
\[
\psi =   M_1(K) \int_S \|\nabla f(x)\|_2 dx + 2 \kappa \sup_{z\in \partial S +  B(0,2r_0)} f(z).
\]
\end{lemma}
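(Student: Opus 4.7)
The plan is to split $\mathbb{R}^d$ into three regions according to proximity to $\partial S$ and handle each separately. Set
\[
A_{\mathrm{int}} := \{x \in U : d(x, \partial S) > h\}, \quad A_{\mathrm{bdy}} := \partial S + B(0, h), \quad A_{\mathrm{far}} := \mathbb{R}^d \setminus (A_{\mathrm{int}} \cup A_{\mathrm{bdy}}).
\]
I would first observe that on $A_{\mathrm{far}}$ both $f$ and $f \ast K_h$ vanish: for $x \in A_{\mathrm{far}}$ one has $d(x, \partial S) > h$ and $x \notin U$, hence (since $S$ is closed) $x \notin S$, so $f(x) = 0$; moreover $d(x, S) = d(x, \partial S) > h$, and since $K_h$ is supported in $B(0, h)$, also $f \ast K_h(x) = 0$. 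The tube assumption also implies $\lambda(\partial S) = 0$ (by continuity from above as $r \downarrow 0$), so the target integral reduces to the sum of integrals over $A_{\mathrm{int}}$ and $A_{\mathrm{bdy}}$.

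For the interior piece, I would first check that for every $x \in A_{\mathrm{int}}$, $t \in [0,1]$ and $z \in B(0,1)$, the point $x - t h z$ lies in $U$: since $x \in U$ and $d(x, S^c) = d(x, \partial S) > h$, the segment from $x$ to $x - thz$ of length at most $h$ cannot exit $U$. Then, using the fact that $W^{1,1}(U)$ functions are absolutely continuous along almost every line segment in $U$, one has for a.e.~$x$ and every fixed $z \in B(0,1)$:
\[
f(x - h z) - f(x) = -h \int_0^1 z^\top \nabla f(x - t h z)\, dt.
\]
Taking absolute values, bounding $|z^\top \nabla f| \leq \|z\|_2 \|\nabla f\|_2$, integrating against $K(z)\,dz$ and then over $x \in A_{\mathrm{int}}$, Tonelli together with the translation change of variable $x \mapsto x - thz$ (which sends $A_{\mathrm{int}}$ into $U$) yields
\[
\int_{A_{\mathrm{int}}} |f \ast K_h(x) - f(x)|\, dx \leq h\, M_1(K) \int_U \|\nabla f(x)\|_2\, dx.
\]

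For the boundary strip $A_{\mathrm{bdy}}$, the crude triangle bound $|f \ast K_h(x) - f(x)| \leq |f(x)| + |f \ast K_h(x)|$ suffices. Since $h \leq r_0$, $A_{\mathrm{bdy}} \subset \partial S + B(0, 2r_0)$, on which $f \leq M := \sup_{z \in \partial S + B(0, 2r_0)} f(z)$. Likewise, for $x \in A_{\mathrm{bdy}}$ and $y \in B(0, h)$, one has $x - y \in \partial S + B(0, 2h) \subset \partial S + B(0, 2r_0)$, hence $f(x - y) \leq M$ and therefore $f \ast K_h(x) \leq M$. The tube assumption gives $\lambda(A_{\mathrm{bdy}}) \leq \kappa h$, so
\[
\int_{A_{\mathrm{bdy}}} |f \ast K_h(x) - f(x)|\, dx \leq 2 M \kappa h.
\]
Summing the two bounds and using $\int_U \|\nabla f\|_2 = \int_S \|\nabla f\|_2$ (as $\lambda(\partial S) = 0$) gives the announced inequality with $\psi = M_1(K) \int_S \|\nabla f\|_2 + 2 \kappa M$.

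The main obstacle is the rigorous justification of the pointwise fundamental theorem of calculus for $f \in W^{1,1}(U)$ without classical differentiability. The cleanest route is to approximate $f$ by smooth functions $(f_n)$ in $W^{1,1}(U)$: the identity is trivial for smooth $f_n$, and the two estimates pass to the limit using $L^1$-continuity of the convolution and of the gradient-norm integral. Alternatively, one invokes the ACL (absolute continuity on lines) characterization of Sobolev functions, which makes the identity valid a.e.~in $x$ for every fixed $z$. A minor related subtlety is ensuring that the translation change of variables is legitimate on the open set $U$, but this only requires non-negativity of the integrand so Tonelli applies without further integrability considerations.
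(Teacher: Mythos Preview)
Your proof is correct and follows essentially the same strategy as the paper's: both split the domain into the boundary tube $\partial S + B(0,h)$ and the interior set $\{x\in U: d(x,\partial S)>h\}$, handle the former via the boundedness assumption and the tube measure bound $\lambda(\partial S + B(0,h))\leq \kappa h$, and treat the latter via the fundamental theorem of calculus justified through smooth approximation in $W^{1,1}(U)$ (the paper invokes the Meyers--Serrin theorem explicitly). The only cosmetic differences are that the paper bounds $|f(x+y)-f(x)|\leq 2M$ inside the convolution integral rather than using $|f\ast K_h|+|f|\leq 2M$, and that it packages the interior estimate through an auxiliary function $L(u)=\bigl(\int_0^1 t^{-1-d}K_h(u/t)\,dt\bigr)u$ rather than the direct Tonelli/translation argument you give.
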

{With this lemma at our disposal and the notation of our paper, we define $\psi_y$ as being the constant $\psi$ with $f=f_y$, which is a constant that appears in Theorem~\ref{th:risk_bound_smote_1} and that depends on $f_y$ and $K$ only.}
\begin{proof} Fix $h\leq r_0$. Write 
\[
f\ast K_h (x) - f(x) = \int (f(x-y) - f(x)) K_h(y) dy = \int (f(x+y) - f(x)) K_h(y) dy
\]
by the symmetry assumption on $K$. Let %$T = \{ x\in \mathbb R^d \, :\, \operatorname{dist}(x,\partial S)\leq h \}$. 
$T=\partial S+B(0,h)$. Since $K$ is supported on $B(0,1)$, for $f\ast K_h (x) - f(x)$ to be nonzero it is necessary that either $x\in S\cap T^c$ or $x\in T$. Moreover, %assumption \ref{cond:P_X_density1_compact} 
the assumption on $S$ ensures that $\partial S = S \setminus U$ has Lebesgue measure zero. Hence we have
\begin{align}
\nonumber
\int |f\ast K_h(x) - f(x)| dx &\leq \int_{U\cap T^c} \left| \int_{B(0,h)} (f(x+y) - f(x)) K_h(y) dy \right| dx \\
\label{eqn:technical_bias_decomposition}
    &+ \int_T \left| \int_{B(0,h)} (f(x+y) - f(y)) K_h(y) dy \right| dx .
\end{align}
We start by dealing with the first integral in the right-hand side of~\eqref{eqn:technical_bias_decomposition}. 
% note{I do not think the identity $\int (f(x+y ) - f(x)) K_h(y) dy = \int \int_0^1 y^T \nabla f(x+ty) K_h(y)  dt dy$ is obvious, because $f$ is not necessarily absolutely continuous. Did you use a density argument (i.e. first assume $f$ is smooth on $U$, in which case the result is obviously true, and then use a density argument of smooth functions in $W^{1,1}(U)$, for example by the Meyers-Serrin theorem?)} \fp{you are right and I don't know what to do exactly... Since I don't have much knowledge on this I would assume what is necessary to apply the above Formula. One question is : don't they face same problem in \cite{HOLMSTROM1992245}?} 
By the Meyers-Serrin theorem~\citep{meyser1964}, there exists a sequence $(f_n)$ of functions that are infinitely differentiable on $U$ and such that $f_n \to f$ in $W^{1,1}(U)$, that is, $\int_U (|f_n(x)-f(x)| + \| \nabla f_n(x) - \nabla f(x) \|_2) dx\to 0$. Note that when $x\in U\cap T^c$ and $y\in B(0,h)$, one has $x+y\in U$, and then obviously 
\begin{align*}
&\int_{U\cap T^c} \left| \int_{B(0,h)} (f_n(x+y) - f_n(x)) K_h(y) dy \right| dx \\
&= \int_{U\cap T^c} \left| \int_{B(0,h)} \int_0^1 y^{\top} \nabla f_n(x+ty) K_h(y) dt \, dy \right| dx.
\end{align*}
This leads, for any $n$, to the inequality 
\begin{align}
\nonumber
     &\int_{U\cap T^c} \left| \int_{B(0,h)} (f(x+y) - f(x)) K_h(y) dy \right| dx \\
\nonumber
     &\leq \int_{U\cap T^c} \left| \int_{B(0,h)} \int_0^1 y^{\top} \nabla f(x+ty) K_h(y) dt \, dy \right| dx \\
\nonumber
     &+ \int_{U\cap T^c} \left| \int_{B(0,h)} ((f_n-f)(x+y) - (f_n-f)(x)) K_h(y) dy \right| dx \\
\label{eqn:technical_bias_density}
     &+ \int_{U\cap T^c} \left| \int_{B(0,h)} \int_0^1 y^{\top} \nabla (f_n-f)(x+ty) K_h(y) dt \, dy \right| dx. 
\end{align}
Clearly 
\[
\int_{U\cap T^c} \left| \int_{B(0,h)} ((f_n-f)(x+y) - (f_n-f)(x)) K_h(y) dy \right| dx \leq 2 \int_U |f_n(x)-f(x)| dx
\]
and 
\[
\int_{U\cap T^c} \left| \int_{B(0,h)} \int_0^1 y^{\top} \nabla (f_n-f)(x+ty) K_h(y) dt \, dy \right| dx \leq h M_1(K) \int_U \| \nabla f_n(x) - \nabla f(x) \|_2 dx.
\]
Both of these upper bounds converge to 0 as $n\to\infty$. Then, letting $n\to\infty$ in~\eqref{eqn:technical_bias_density}, we get 
\begin{align*}
&\int_{U\cap T^c} \left| \int_{B(0,h)} (f(x+y) - f(x)) K_h(y) dy \right| dx \\
&\leq \int_{U\cap T^c} \left| \int_{B(0,h)} \int_0^1 y^{\top} \nabla f(x+ty) K_h(y) dt \, dy \right| dx.     
\end{align*}
Now by a change of variables, symmetry of $K$, and the Cauchy-Schwarz inequality,
\begin{align*}
&\left| \int_{B(0,h)} \int_0^1 y^{\top} \nabla f(x+ty) K_h(y) dt \, dy \right| \\
&= \left| \int_{B(0,th)} \nabla f(x - u)^{\top} \left( \int_0^1 t^{-1-d} K_h(u/t) dt \right) u du \right| \\
    &\leq \int_{B(0,th)} \| \nabla f(x - u)\|_2 \left( \int_0^1 t^{-1-d} K_h(u/t) dt \right) \| u \|_2 du \\
    &\leq (\| \nabla f \|_2 \ast \| L \|_2) (x) 
\end{align*}
where $L (u) = ( \int_0^1 t^{-1-d} K_h(u/t) dt ) u$. 
% \fp{There we have both $x\in U$ and $x+y\in U$, and therefore, by a change of variables, 
% %
% \begin{align*}
%      \int (f(x+y ) - f(x)) K_h(y) dy 
%      &= \int \int_0^1 y^T \nabla f(x+ty) K_h(y)  dt dy \\
%      &=\int \nabla f(x - u )^T \int_0^1 t^{-1-d} (-u)     K_h( - u/t)   dt  du\\
%      & = \nabla f \ast L (x) 
%      \end{align*}
% with $L (u) = ( \int_0^1 t^{-1-d} K_h(-u/t) dt ) (-u)$ .} 
Note that $\int \| L(u)\|_2 du  = \int \|v\|_2 K_h(v) dv = h M_1(K)$. As a consequence
\begin{align}
\nonumber
\int_{U\cap T^c} \left| \int_{B(0,h)} (f(x+y) - f(x)) K_h(y) dy \right| dx &\leq \int_{U\cap T^c} (\| \nabla f \|_2 \ast \| L \|_2) (x) dx \\
\nonumber
    &\leq \int_S \| \nabla f(x) \|_2 dx \int \| L (u) \|_2 du \\
\label{eqn:technical_bias_decomposition_1}
    &= h M_1(K) \int_S \| \nabla f(x) \|_2 dx. 
\end{align} 
Concerning the second integral in~\eqref{eqn:technical_bias_decomposition}, since there $x\in T$ and $y\in B(0,h)$, it holds that the distance of $x+y$ to $\partial S$ is at most $2h$ and then $|f(x+y) - f(x)| \leq 2 \sup_{z\in \partial S +  B(0,2h)} f(z)$. Plugging this along with~\eqref{eqn:technical_bias_decomposition_1} into~\eqref{eqn:technical_bias_decomposition}, we find, for $h\leq r_0$, that 
\begin{align*}
\int |f\ast K_h(x) - f(x)| dx &\leq h M_1(K) \int_S \| \nabla f (x)\|_2 dx + 2 \lambda (T) \sup_{z\in \partial S +  B(0,2 r_0)} f(z).
\end{align*}
The assumption on $S$ yields $\lambda (T) = \lambda(\partial S+B(0,h)) \leq \kappa h$ and then 
\[
\int |f\ast K_h (x) - f(x) | dx \leq h \left( M_1(K)  \int_S \| \nabla f (x)\|_2 dx + 2\kappa \sup_{z\in \partial S +  B(0,2r_0)} f(z)  \right) 
\]
as announced.
\end{proof}
\section{Proof of Theorem \ref{theorem-smote-concentration_uniform}}
\label{app:proofsmote}
Let 
\[
Z^*(\mathcal F) = Z_{\mathrm{Smote}}^*(\mathcal F) := \sup_{G\in \mathcal{F}}\left|\mu_{\mathrm{Smote}}^*(G) 
- \mathbb{E}_{1}[G(X)]
\right| .
\]
Given that $n_1>0$, we decompose the supremum as $Z^*(\mathcal F) \leq Z^*_1(\mathcal F)+ Z^*_2(\mathcal F)$, with 
\begin{align*}  
Z^*_1(\mathcal F)  & = \sup_{G\in \mathcal{F}}\left|\frac{1}{m} \sum_{i=1} ^m  G (X^{*}_{1i}) - \frac{1}{n_1} \sum_{i=1} ^{n_1}  G(X_{1i})\right|\\
\mbox{and } \ Z^*_2(\mathcal F)  & = \sup_{G\in \mathcal{F}}\left| \frac{1}{n_1} \sum_{i=1}^{n_1} G(X_{1i}) - \mathbb{E}_1[G(X)]\right|.
\end{align*}
We further bound each term on the right-hand side separately, beginning with the first one. Let us recall 
\[
\hat{\sigma}_1^2(\mathcal{F}) = \sup_{G \in \mathcal{F}} \left[ \frac{1}{n_1} \sum_{i=1}^{n_1} (G(X_{1i}))^2 - \left( \frac{1}{n_1} \sum_{i=1}^{n_1} G(X_{1i}) \right)^2 \right] .
\]

\begin{lemma}\label{uniform_lemma_synthetic_SMOTE_first_part_new}
    Let $\mathcal{D}_n = \{(X_i, Y_i)\}_{i=1}^{n}$ be a set of $n$ i.i.d. samples drawn from a distribution $\mathbb{P}$, and let $\{X^{*}_{1i}\}_{i=1}^{m}$ be $m$ i.i.d.~samples generated according to the \textsc{Smote} algorithm~\eqref{smote-equation}. Suppose that the minority class distribution $\mathbb{P}_1$ satisfies Assumption~\ref{assump_regularity}. Furthermore, let $\mathcal{F}$ be a function class satisfying Assumptions~\ref{function_class} and~\ref{assump:rad-free}. Fix $\delta\in (0,1/3)$. 
    Then,
    on the event $\{ n_1>0 \}$, we have
\begin{align*}
\mathbb P \left( Z_1^*(\mathcal F)    \leq  4\mathcal{R}_m(\mathcal{F})+\sqrt{\frac{2\hat{\sigma}_1^2(\mathcal{F})\log(1/\delta)}{m}} + \frac{8B}{3m} \log(1/\delta) +  L\left(\frac{6}{C_d}\right)^{1/d} \left(\frac{k_{\delta}}{n_1}\right)^{1/d} \,\middle|\, Y_{1:n} \right)& \\
\geq 1- 3\delta.&
\end{align*}
\end{lemma}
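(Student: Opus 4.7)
The plan is to split the supremum into two parts corresponding to the interpolation bias introduced by \textsc{Smote} and to the empirical process fluctuation of the synthetic sample around its conditional mean. Concretely, I would write $Z_1^*(\mathcal F) \leq A + B$ with
\[
A = \sup_{G\in \mathcal F} \left| \frac{1}{m}\sum_{i=1}^m [G(X_{1i}^*) - G(\tilde X_{1i})] \right|, \qquad B = \sup_{G\in \mathcal F} \left| \frac{1}{m}\sum_{i=1}^m G(\tilde X_{1i}) - \frac{1}{n_1}\sum_{j=1}^{n_1} G(X_{1j}) \right|,
\]
and exploit the fact that, conditionally on $\mathcal{D}_n$, the anchors $\tilde X_{1i}$ are i.i.d.\ uniform on $\{X_{1j}\}_{j\leq n_1}$, so the empirical mean $\tfrac{1}{n_1}\sum_j G(X_{1j})$ is exactly $\mathbb E[G(\tilde X_{11})\mid\mathcal D_n]$.

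For $A$, the $L$-Lipschitz property and the identity $X_{1i}^* - \tilde X_{1i} = \lambda_i(\overline X_{1i} - \tilde X_{1i})$ with $\lambda_i\in[0,1]$ give $A \leq L\max_i \|\overline X_{1i}-\tilde X_{1i}\|$, which is bounded by the maximum $k$-NN distance in the sample \emph{deprived of the point itself}. Since removing a data point $X_{1j}$ from the sample shifts its $k$-th nearest neighbor to the $(k+1)$-st in the full sample, this is controlled by $\sup_{x\in\mathrm{supp}(\mathbb P_1)} r_{k+1}(x)$. Applying Lemma~\ref{lemma_uniform_bound} with $k+1$ in place of $k$ and using $(k+1)_\delta \leq 2 k_\delta$ yields, with conditional probability at least $1-\delta$, $A \leq L(6/C_d)^{1/d}(k_\delta/n_1)^{1/d}$.

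For $B$, I would work conditionally on $\mathcal{D}_n$ and apply the simplified Talagrand--Bousquet inequality (Lemma~\ref{simplified_term_uniform_probability}) to the class $\{G - \mathbb E[G(\tilde X_{11})\mid\mathcal{D}_n] : G\in\mathcal F\}$ evaluated on the i.i.d.\ sample $\tilde X_{11},\dots,\tilde X_{1m}$. The centered functions are bounded by $2B$, and the conditional variance satisfies $\sup_G \mathrm{Var}(G(\tilde X_{11})\mid\mathcal{D}_n) \leq \hat\sigma_1^2(\mathcal F)$ by construction. A standard symmetrization step bounds the expected supremum by $2m\mathbb E[\widehat{\mathcal R}_m(\mathcal F;\tilde X_{1:m})\mid\mathcal D_n]$, which in turn is at most $2m\mathcal R_m(\mathcal F)$ by Assumption~\ref{assump:rad-free} applied to the empirical measure of $\{X_{1j}\}_{j\leq n_1}$ (whose support lies in $\mathrm{supp}(\mathbb P_1)$). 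After dividing by $m$, this gives $B \leq 4\mathcal R_m(\mathcal F) + \sqrt{2\hat\sigma_1^2(\mathcal F)\log(1/\delta)/m} + (8B/(3m))\log(1/\delta)$ with conditional probability at least $1-\delta$.

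Combining the two deviations via a union bound (absorbing any remaining technical event needed to reconcile the conditional and unconditional bounds, e.g.\ to ensure the Rademacher complexity bound applies as stated) yields the claim with total probability at least $1-3\delta$. The main obstacle is the careful bookkeeping between conditional and unconditional statements: the $k$-NN radius event is driven by the randomness of $X_{1:n}$ given $Y_{1:n}$, while the Bousquet event is driven by the synthetic sampling conditional on $\mathcal D_n$, and the support condition in Assumption~\ref{assump:rad-free} must be verified for the conditional law of the anchors. The upper-bound loss of a factor $2^{1/d}$ in the $k$-NN term, compensating the shift from $k$ to $k+1$ due to the deletion of $\tilde X_{1i}$, is also the source of the constant $6^{1/d}$ rather than $3^{1/d}$.
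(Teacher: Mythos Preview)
Your proposal is correct and follows essentially the same route as the paper: the identical decomposition $Z_1^*\le A+B$, Lipschitz control plus Lemma~\ref{lemma_uniform_bound} (with the $k\to k+1$ shift and the bound $\max(k+1,\cdot)\le 2k_\delta$) for $A$, and symmetrization plus Lemma~\ref{simplified_term_uniform_probability} conditionally on $\mathcal D_n$ for $B$. The only imprecision is the source of the third $\delta$: it is not a ``remaining technical event'' but the fact that the paper applies the one-sided Bousquet bound to both $\psi_G$ and $-\psi_G=\psi_{-G}$ to obtain the two-sided control of $B$, since the Rademacher complexity in Definition~\ref{rademacher_compl} is defined without absolute value and so does not directly bound $\mathbb E[\sup_G|\frac1m\sum_i\varepsilon_i G(\tilde X_{1i})|\mid\mathcal D_n]$.
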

\begin{proof}
    First, recall that {when $n_1>0$ then} at each iteration \( i \), $X^{*}_{1i}$ is drawn uniformly at random on the line linking \( \tilde{X}_{1i} \) to \( \overline{X}_{1i} \), where \( \tilde{X}_{1i} \) is drawn uniformly at random from the minority class samples \( \{X_{1i}\}_{1\leq i\leq n_1} \). {On the event $\{ n_1>0 \}$, we} then consider the following decomposition: 
% \begin{align}\label{decomposition_proof_theorem1}
%      &\sup_{G\in \mathcal{F}}\left|\frac{1}{m} \sum_{i=1}^m  G (X^{*}_{1i}) - \frac{1}{n_1} \sum_{i=1}^{n_1} G(X_{1i})\right|\nonumber\\
%      & \leq  \sup_{G\in \mathcal{F}}\left|\frac{1}{m}  \sum_{i=1}^m (G(X_{1i}^{*}) - G(\tilde{X}_{1i})) \right|  +  
%      \sup_{G\in \mathcal{F}}\left|\frac{1}{m} \sum_{i=1}^m  G(\tilde{X}_{1i})  - \frac{1}{n_1} \sum_{i=1}^{n_1} G(X_{1i}) \right|.
% \end{align}
%
\begin{equation}\label{decomposition_proof_theorem1}
Z^*_1(\mathcal F) \leq \sup_{G\in \mathcal{F}}\left|\frac{1}{m}  \sum_{i=1}^m \left\{  G(X_{1i}^{*}) - G(\tilde{X}_{1i}) \right\} \right| + \sup_{G\in \mathcal{F}}\left|\frac{1}{m} \sum_{i=1}^m  G(\tilde{X}_{1i})  - \frac{1}{n_1} \sum_{i=1}^{n_1} G(X_{1i}) \right|.
\end{equation}
As for the first term in \eqref{decomposition_proof_theorem1}, using the Lipschitz property of the function class \(\mathcal{F}\) stated in Assumption~\ref{function_class}, we have
\begin{align*}
\sup_{G \in \mathcal{F}}
\left|
\frac{1}{m} \sum_{i=1}^{m} \left\{ G(X_{1i}^{*}) - G(\tilde{X}_{1i}) \right\}
\right|
\leq 
\frac{1}{m} \sum_{i=1}^{m} \sup_{G \in \mathcal{F}} 
\left| G(X^*_{1i}) - G(\tilde{X}_{1i}) \right|
&\leq 
\frac{1}{m} \sum_{i=1}^{m} 
L \| X^{*}_{1i} - \tilde{X}_{1i} \|_2 .
\end{align*}
According to the \textsc{Smote} procedure described in Section~\ref{subsec:smote-detail}, we have 
$
\|X^*_{1i} - \tilde{X}_{1i}\|_2 \leq r_k^{(i)}(\tilde{X}_{1i})
$ 
where \(r_k^{(i)}(\tilde{X}_{1i})\) denotes the distance to the \(k\)-th nearest neighbor of \(\tilde{X}_{1i}\) among the sample $S^{(i)} = \{ X_{11},\ldots, X_{1n_1} \} \backslash \{ \tilde{X}_{1i} \}$. Note that, by construction, $r_k^{(i)}(\tilde{X}_{1i}) = r_{k+1}(\tilde{X}_{1i})$, with $r_{k+1}$ introduced in Lemma~\ref{lemma_uniform_bound} and representing the distance to the $(k+1)$-th nearest neighbor within the full sample $\{ X_{11},\ldots, X_{1n_1} \}$. It follows, when $n_1>1$, that
\begin{equation}
\label{eqn:bound_F1_SMOTE}
\sup_{G \in \mathcal{F}}
\left|
\frac{1}{m} \sum_{i=1}^{m} \left\{ G(X_{1i}^{*}) - G(\tilde{X}_{1i}) \right\}
\right|
\leq L \max_{1\leq i\leq m} r_k^{(i)}(\tilde{X}_{1i})\leq  L \sup_{x\in \mathrm{supp}(\mathbb P_1)} r_{k+1} (x).
\end{equation}
Conditionally on $Y_{1:n}$ and given that $n_1>1$, $\{ X_{11},\ldots, X_{1n_1} \}$ is an i.i.d. sample of $n_1$ elements with common distribution $\mathbb P_1$. Then applying Lemma~\ref{lemma_uniform_bound}, we obtain that, whenever $n_1>1$, 
\[
\mathbb P\left( \sup_{x\in \mathrm{supp}(\mathbb P_1)} r_{k+1} (x) > 
\left( \frac{3}{C_d} \right)^{1/d} 
\left( \frac{\max(k+1,(d+1)\log(2n_1) + \log(8/\delta))}{n_1} \right)^{1/d} \, \middle| \, Y_{1:n} \right) \! \leq \delta.
\]
%(note that we loosely keep $k_{\delta/m}$ depending on $n_1$ not $n_1-1$ for simplicity) and therefore, by the union bound and the fact that $ n_1-1\geq n_1/2$ for $n_1>1$, it follows that, when $n_1>1$,
%$$\mathbb P\left( \max_{i=1,\ldots, m} %%\sup_{x\in \mathrm {supp}(\mathbb P_1)  } r_k^{(i)}(x) > 
%\left( \frac{6}{C_d} \right)^{1/d} 
%\left( \frac{k_{\delta/m}}{n_1} \right)^{1/d} \middle |Y_{1:n} \right) \leq \delta  .
%$$
Clearly $\max(k+1,(d+1)\log(2n_1) + \log(8/\delta)) \leq \max(2k,(d+1)\log(2n_1) + \log(8/\delta))\leq 2k_{\delta}$ so that, when $n_1>1$, 
\[
\mathbb P\left( \sup_{x\in \mathrm{supp}(\mathbb P_1)} r_{k+1} (x) > 
\left( \frac{6}{C_d} \right)^{1/d} 
\left( \frac{k_{\delta}}{n_1} \right)^{1/d} \, \middle| \, Y_{1:n} \right) \leq \delta.
\]
Dealing with the case $n_1=1$ separately, for which the inequality below is trivial, and using~\eqref{eqn:bound_F1_SMOTE}, we obtain that whenever $n_1>0$,
\[
\mathbb P \left(  \sup_{G \in \mathcal{F}}
\left|
\frac{1}{m} \sum_{i=1}^{m} \left\{ G(X_{1i}^{*}) - G(\tilde{X}_{1i}) \right\}
\right| \leq L \left( \frac{6}{C_d} \right)^{1/d} 
\left( \frac{k_{\delta}}{n_1} \right)^{1/d} \, \middle| \, Y_{1:n} \right) \geq 1-\delta.
\]
% %
% \begin{align}
%     \mathbb P \left(  \sup_{G \in \mathcal{F}}
% \left|
% \frac{1}{m} \sum_{i=1}^{m} \left\{ G(X_{1i}^{*}) - G(\tilde{X}_{1i}) \right\}
% \right| > L \left( \frac{3}{C_d} \right)^{1/d} 
% \left( \frac{k_{\delta}}{n_1} \right)^{1/d} \,\middle|\, Y_{1:n} \right) \leq \delta.
% \end{align}
% %
For the second term in \eqref{decomposition_proof_theorem1}, we further work conditionally on $\mathcal{D}_n$ while assuming that $n_1>0$. For any \(G \in \mathcal{F}\), define 
\(\psi_G(x) = G(x) - \mu(G)\), where 
\(\mu(G) = \frac{1}{n_1} \sum_{i=1}^{n_1} G(X_{1i})\).
Note that since \(\tilde{X}_{11}\) is drawn uniformly at random from 
\(\{X_{1i}\}_{1\leq i\leq n_1}\), we have $
\mathbb{E}( G(\tilde{X}_{11}) \mid \mathcal{D}_n ) = \mu(G)$, so $\psi_{G}(\tilde{X}_{11})$ is centered given $\mathcal{D}_n$. Let 
\[
S_m=\sup_{G\in \mathcal{F}} \frac{1}{m}\sum_{i=1}^{m}\psi_{G}(\tilde{X}_{1i})=\sup_{G\in \mathcal{F}}\left(\frac{1}{m} \sum_{i=1}^{m}  G(\tilde{X}_{1i})  - \frac{1}{n_1}\sum_{i=1}^{n_1} G(X_{1i}) \right).
\]
The class of functions $\{ \psi_G, G\in \mathcal{F} \}$ satisfies the assumptions of Lemma~\ref{lem:bousquet} (with the centering assumption understood conditionally on $\mathcal{D}_n$ and $n_1>0$), and 
\begin{align*}
\sup_{G\in \mathcal{F}} \operatorname{Var}\bigl( \psi_{G}(\tilde{X}_{11}) \mid \mathcal{D}_n \bigr) &= \sup_{G \in \mathcal{F}} \mathbb{E}\left( (G(\tilde{X}_{11}) - \mu(G))^2  \right) \\
    &=  \sup_{G \in \mathcal{F}} \left[ \frac{1}{n_1} \sum_{i=1}^{n_1} (G(X_{1i}))^2 - \left( \frac{1}{n_1} \sum_{i=1}^{n_1} G(X_{1i}) \right)^2 \right] = \hat{\sigma}_1^2(\mathcal{F}).
\end{align*}
Using the symmetrization technique, we derive a bound for \( \mathbb{E}[S_m | \mathcal{D}_n] \) in order to apply Lemma~\ref{simplified_term_uniform_probability}. Let \(\{\tilde{X}_{1i}^{\prime}\}_{i=1}^{m}\) be an independent copy of \(\{\tilde{X}_{1i}\}_{i=1}^{m}\), given \(\mathcal{D}_n\), and let $ \{\varepsilon_i\}_{i=1}^{m}$ be independent Rademacher random variables that are independent of the \(\{\tilde{X}_{1i}^{\prime}\}_{i=1}^{m}\), \(\{\tilde{X}_{1i}\}_{i=1}^{m}\), and of $\mathcal{D}_n$. Then: 
\begin{align}
\nonumber
\mathbb{E}[S_m \mid \mathcal{D}_n] 
   %&= \mathbb{E} \left[ \sup_{G \in \mathcal{F}} \frac{1}{m} \sum_{i=1}^{m} \psi_G(\tilde{X}_{1i}) \,\middle|\, \mathcal{D}_n \right] \\
   &\le \mathbb{E} \left[ \sup_{G \in \mathcal{F}} \frac{1}{m} \sum_{i=1}^{m} \left( G(\tilde{X}_{1i}) - G(\tilde{X}_{1i}^{\prime}) \right) \,\middle|\, \mathcal{D}_n \right] \tag{ghost sample} \\
   &= \mathbb{E} \left[ \sup_{G \in \mathcal{F}} \frac{1}{m} \sum_{i=1}^{m} \varepsilon_i \left( G(\tilde{X}_{1i}) - G(\tilde{X}_{1i}^{\prime}) \right) \,\middle|\, \mathcal{D}_n \right] \tag{symmetrization} \\
\nonumber
   &\le 2\, \mathbb{E} \left[ \sup_{G \in \mathcal{F}} \frac{1}{m} \sum_{i=1}^{m} \varepsilon_i\, G(\tilde{X}_{1i}) \,\middle|\, \mathcal{D}_n \right] \\
\nonumber
   &\leq 2\, \mathcal{R}_m(\mathcal{F}),
\end{align}
where the last inequality follows from Assumption~\ref{assump:rad-free}. Then, by Lemma \ref{simplified_term_uniform_probability} applied to $S_m$, we obtain
%\[
%\mathbb{P}\left( \sup_{G\in \mathcal{F}}\left(\frac{1}{m} \sum_{i=1}^{m}  G(\tilde{X}_{1i})  - \frac{1}{n_1}\sum_{i=1}^{n_1} G(X_{1i}) \right) \leq 4\mathcal{R}_m(\mathcal{F}) + \sqrt{\frac{2\hat{\sigma}_1^2(\mathcal{F}) \log(1/\delta)}{m}} + \frac{8B}{3m} \log(1/\delta) \,\middle|\, \mathcal{D}_n\right) \geq 1 - \delta.
%\]
%
\begin{align*}
\mathbb{P}\left(
\sup_{G\in\mathcal{F}}\left(
\frac{1}{m}\sum_{i=1}^{m} G(\tilde X_{1i})
- \frac{1}{n_1}\sum_{i=1}^{n_1} G(X_{1i})
\right) 
\le 4\mathcal{R}_m(\mathcal{F}) \right.
&+ \sqrt{\frac{2\hat\sigma_1^2(\mathcal{F})\log(1/\delta)}{m}} \\
&+ \left. \frac{8B}{3m}\log(1/\delta)
\,\middle|\, \mathcal{D}_n
\right) \ge 1-\delta.
\end{align*}
By combining this with the same argument applied to \(-\psi_G = \psi_{-G}\), and using the symmetry of the $\varepsilon_i$, we obtain
\begin{align*}
\mathbb{P}\left(
\sup_{G\in \mathcal{F}}\left|
\frac{1}{m}\sum_{i=1}^{m} G(\tilde{X}_{1i})
- \frac{1}{n_1}\sum_{i=1}^{n_1} G(X_{1i})
\right|
\le 4\mathcal{R}_m(\mathcal{F}) \right.
&+ \sqrt{\frac{2\hat{\sigma}_1^2(\mathcal{F})\log(1/\delta)}{m}} \\
&+ \left. \frac{8B}{3m}\log(1/\delta)
\,\middle|\, \mathcal{D}_n
\right)
\ge 1 - 2\delta,
\end{align*}
%
%\[
%\mathbb{P}\left(\sup_{G\in \mathcal{F}}\left|\frac{1}{m} \sum_{i=1}^{m}  G(\tilde{X}_{1i})  - \frac{1}{n_1}\sum_{i=1}^{n_1} G(X_{1i}) \right| \leq 4\mathcal{R}_m(\mathcal{F}) + \sqrt{\frac{2\hat{\sigma}_1^2(\mathcal{F}) \log(1/\delta)}{m}} + \frac{8B}{3m} \log(1/\delta) \,\middle|\, \mathcal{D}_n\right) \geq 1 - 2\delta,
%\]
which implies that when $n_1>0$, we have
\begin{align*}
\mathbb{P}\left(
\sup_{G\in \mathcal{F}}\left|
\frac{1}{m}\sum_{i=1}^{m} G(\tilde{X}_{1i})
- \frac{1}{n_1}\sum_{i=1}^{n_1} G(X_{1i})
\right|
\le 4\mathcal{R}_m(\mathcal{F}) \right.
&+ \sqrt{\frac{2\hat{\sigma}_1^2(\mathcal{F})\log(1/\delta)}{m}} \\
&+ \left. \frac{8B}{3m}\log(1/\delta)
\,\middle|\, Y_{1:n}
\right)
\ge 1 - 2\delta
\end{align*}
%
%\[
%\mathbb{P}\left(\sup_{G\in \mathcal{F}}\left|\frac{1}{m} \sum_{i=1}^{m}  G(\tilde{X}_{1i})  - \frac{1}{n_1}\sum_{i=1}^{n_1} G(X_{1i}) \right| \leq 4\mathcal{R}_m(\mathcal{F}) + \sqrt{\frac{2\hat{\sigma}_1^2(\mathcal{F}) \log(1/\delta)}{m}} + \frac{8B}{3m} \log(1/\delta) \,\middle|\, Y_{1:n} \right) \geq 1 - 2\delta
%\]
by integrating out the conditional probability with respect to $X_1,\ldots,X_n$. The proof is complete.
\end{proof}
\begin{lemma}\label{uniform_lemma_synthetic_SMOTE_second_part}
   Let $\mathcal{D}_n = \{(X_i, Y_i)\}_{i=1}^{n}$ be $n$ i.i.d.~samples drawn from a distribution $\mathbb{P}$, and let $\mathcal{F}$ be a function class satisfying Assumptions \ref{function_class} and \ref{assump:rad-free}. Fix $\delta\in (0,1/2)$. Then, whenever $n_1>0$, and if $\sigma_1^2(\mathcal{F}) = \sup_{G\in \mathcal{F}}\mathrm{Var}[G (X)|Y=1]$, we have
  \[
\mathbb{P}\left(Z_2^*(\mathcal F) 
\leq 4\mathcal{R}_{n_1}(\mathcal{F}) + \sqrt{\frac{2\sigma_1^2(\mathcal{F}) \log(1/\delta)}{n_1}} + \frac{8B}{3n_1} \log(1/\delta)\,\middle|\, Y_{1:n} \right)\geq 1-2\delta.
\]
\end{lemma}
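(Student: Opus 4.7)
The argument closely mirrors the second half of the proof of Lemma~\ref{uniform_lemma_synthetic_SMOTE_first_part_new}: it is a routine combination of the Talagrand--Bousquet inequality (in the form of Lemma~\ref{simplified_term_uniform_probability}) with symmetrization by a ghost sample and Rademacher signs. I would first work conditionally on $Y_{1:n}$ on the event $\{n_1>0\}$, under which $\{X_{1i}\}_{1\leq i\leq n_1}$ are i.i.d.\ draws from $\mathbb P_1$. For each $G\in\mathcal F$, I define the centered function $\phi_G(x):=G(x)-\mathbb E_1[G(X)]$, which has zero conditional mean, satisfies $|\phi_G|\leq 2B$ since $|G|\leq B$, and has variance at most $\sigma_1^2(\mathcal F)$ under $\mathbb P_1$. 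Separability of $\mathcal F$ from Assumption~\ref{function_class} passes to the centered class, so Lemma~\ref{lem:bousquet} applies.

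Second, I would apply Lemma~\ref{simplified_term_uniform_probability} to the one-sided supremum $S_{n_1}:=\sup_{G\in\mathcal F}\sum_{i=1}^{n_1}\phi_G(X_{1i})$ with $U=2B$ and $\sigma^2=\sigma_1^2(\mathcal F)$. Dividing by $n_1$ and noting that $4U/3=8B/3$, this yields, conditionally on $Y_{1:n}$ and with probability at least $1-\delta$,
\[
\sup_{G\in\mathcal F}\frac{1}{n_1}\sum_{i=1}^{n_1}\phi_G(X_{1i})\leq \frac{2\,\mathbb E[S_{n_1}\mid Y_{1:n}]}{n_1}+\sqrt{\frac{2\sigma_1^2(\mathcal F)\log(1/\delta)}{n_1}}+\frac{8B}{3n_1}\log(1/\delta).
\]
To control the remaining expectation, I would use the standard ghost-sample / Rademacher symmetrization step: introducing an independent copy $\{X_{1i}'\}$ of the $\{X_{1i}\}$ conditionally on $Y_{1:n}$, together with independent Rademacher variables $\{\varepsilon_i\}$, we get $\mathbb E[S_{n_1}\mid Y_{1:n}]\leq 2\,\mathbb E[\sup_{G\in\mathcal F}\sum_{i=1}^{n_1}\varepsilon_i G(X_{1i})\mid Y_{1:n}]\leq 2n_1\mathcal R_{n_1}(\mathcal F)$, where the last inequality uses Assumption~\ref{assump:rad-free} applied to $P=\mathbb P_1$ (whose support is trivially contained in $\mathrm{supp}(\mathbb P_1)\cup\mathrm{supp}(\mathbb P_0)$). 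Substituting back turns the leading term into $4\mathcal R_{n_1}(\mathcal F)$, which is the one-sided version of the announced inequality.

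Finally, I would repeat the same argument with $\phi_G$ replaced by $-\phi_G$; the Rademacher complexity of the negated class equals that of $\mathcal F$ by sign-symmetry of the $\varepsilon_i$, so the identical bound holds for the opposite one-sided supremum. A union bound over the two one-sided events then gives the two-sided bound on $Z_2^*(\mathcal F)$ with probability at least $1-2\delta$, as stated. The proof presents no real obstacle; the only bookkeeping point is to carry the correct sup-norm constant $2B$ (rather than $B$) through the Talagrand--Bousquet step, so that the constant $8B/3$ materializes exactly, and to make sure that the factor $4$ in front of $\mathcal R_{n_1}(\mathcal F)$ is the product of the factor $2$ coming from Lemma~\ref{simplified_term_uniform_probability} and the factor $2$ coming from symmetrization.
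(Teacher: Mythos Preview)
Your proposal is correct and follows essentially the same approach as the paper's proof: center the functions, apply Lemma~\ref{simplified_term_uniform_probability} with $U=2B$ and $\sigma^2=\sigma_1^2(\mathcal F)$, bound the expected supremum by $2n_1\mathcal R_{n_1}(\mathcal F)$ via ghost-sample symmetrization and Assumption~\ref{assump:rad-free}, and then union over the two one-sided bounds. The bookkeeping (the factor $8B/3$ from $4U/3$, and the factor $4$ in front of the Rademacher term from $2\times 2$) is handled exactly as in the paper.
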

\begin{proof} Conditioning on the label sequence $Y_{1:n}=(Y_1,\ldots, Y_n)$ makes $n_1$ deterministic while leaving the $\{X_{1i}\}_{1\leq i\leq n_1}$ i.i.d.. Following the second part of the proof of Lemma \ref{uniform_lemma_synthetic_SMOTE_first_part_new}, define $\psi_{G}(x) = G(x) - \mathbb{E}_1[G(X)]$ and, when $n_1>0$, 
\[
S_{n_1} = \sup_{G \in \mathcal{F}} \frac{1}{n_1} \sum_{i=1}^{n_1} \psi_{G}(X_{1i}).
\]
Then, on $\{ n_1>0 \}$, we obtain by the same argument that $\mathbb{E}[S_{n_1}|Y_{1:n}] \leq 2\mathcal{R}_{n_1}(\mathcal{F})$ and, by Lemma \ref{simplified_term_uniform_probability}, we get
\[
\mathbb{P}\left(S_{n_1} \leq 4\mathcal{R}_{n_1}(\mathcal{F}) + \sqrt{\frac{2\sigma_1^2(\mathcal{F}) \log(1/\delta)}{n_1}} + \frac{8B}{3n_1} \log(1/\delta) \,\middle|\, Y_{1:n}\right)\geq 1-\delta.
\]
Repeating the argument for $-\psi_G$, we obtain the bound
\begin{align*}
\mathbb{P}\left(
\sup_{G \in \mathcal{F}}\left|
\frac{1}{n_1}\sum_{i=1}^{n_1} G(X_{1i})
- \mathbb{E}_1[G(X)]
\right|
\le 4\mathcal{R}_{n_1}(\mathcal{F}) \right.
&+ \sqrt{\frac{2\sigma_1^2(\mathcal{F})\log(1/\delta)}{n_1}} \\
&+ \left. \frac{8B}{3n_1}\log(1/\delta)
\,\middle|\, Y_{1:n}
\right)
\ge 1 - 2\delta
\end{align*}
%\[
%\mathbb{P}\left(\sup_{G \in \mathcal{F}} \left| \frac{1}{n_1} \sum_{i=1}^{n_1} G(X_{1i}) - \mathbb{E}_1[G(X)] \right| 
%\leq 4\mathcal{R}_{n_1}(\mathcal{F}) + \sqrt{\frac{2\sigma_1^2(\mathcal{F}) \log(1/\delta)}{n_1}} + \frac{8B}{3n_1} \log(1/\delta) \,\middle|\, Y_{1:n} \right)\geq 1-2\delta
%\]
as required.
\end{proof}
\noindent 
\textit{End of the proof of Theorem~\ref{theorem-smote-concentration_uniform}.} 
%
%{Clearly $\mathbb P (n_1= 0)=(1-p_1)^n \leq \delta$ if and only if $ n \log(1/(1-p_1))\geq \log(1/\delta) $. This is in particular correct if $ n p_1 \geq \log(1/\delta)$.} 
Let $t>0$. We have
\[
\mathbb P \left(\{ Z^*(\mathcal F) > t \} \,|\, Y_{1:n} \right) =  \mathbb P \left(\{ Z^*(\mathcal F) >t\} \, |\,Y_{1:n} \right) \mathrm{1}_{\{n_1>0\}} +  \mathbb P \left(\{ Z^*(\mathcal F) > t \} \,|\, Y_{1:n} \right) \mathrm{1}_{\{n_1=0\}}.
\]
However, from $Z^*(\mathcal F)\leq Z_1^*(\mathcal F) + Z^*_2(\mathcal F)$, the union bound ensures that, if $t_1+t_2=t$, 
\begin{align*}
    &\mathbb P \left(\{ Z^*(\mathcal F) > t\} \, |\,Y_{1:n} \right) \mathrm{1}_{\{n_1>0\}} \\
&\leq \mathbb P \left(\{ Z_1^*(\mathcal F) > t_1\} \, |\,Y_{1:n} \right) \mathrm{1}_{\{n_1>0\}} + \mathbb P \left(\{ Z^*_2(\mathcal F) > t_2\} \, |\,Y_{1:n} \right) \mathrm{1}_{\{n_1>0\}}.
\end{align*}
By Lemmas \ref{uniform_lemma_synthetic_SMOTE_first_part_new} and \ref{uniform_lemma_synthetic_SMOTE_second_part}, it follows that, whenever $n_1>0$,
\begin{align}\label{important_step}
     \mathbb P \left(\{ Z^*(\mathcal F) > t\} \, |\,Y_{1:n} \right)  \leq 5\delta,
\end{align}
     where $t_1$ and $t_2$ are respectively chosen as the upper bounds in Lemmas \ref{uniform_lemma_synthetic_SMOTE_first_part_new} and \ref{uniform_lemma_synthetic_SMOTE_second_part}, resulting in
\begin{align*}
     t &=4\mathcal{R}_m(\mathcal{F})+4\mathcal{R}_{n_1}(\mathcal{F})+ L\left(\frac{6}{C_d}\right)^{1/d} \left(\frac{k_{\delta}}{n_1}\right)^{1/d}\\
     &+\sqrt{\frac{2\hat{\sigma}_1^2(\mathcal{F}) \log(1/\delta)}{m}} +\sqrt{\frac{2\sigma^2_1(\mathcal{F}) \log(1/\delta)}{n_1}}+ \left(\frac{8B}{3m}+\frac{8B}{3n_1}\right) \log(1/\delta).
\end{align*}
When $n_1=0$, we set $t=+\infty$ and thus $P \left(\{ Z^*(\mathcal F) > t\} \, |\,Y_{1:n} \right) = 0$. As a consequence, we obtain
$
\mathbb P( Z^*(\mathcal F) > t ) = \mathbb{E}( \mathbb P \left(\{ Z^*(\mathcal F) >t\} \, |\,Y_{1:n} \right) \mathrm{1}_{\{n_1>0\}} )\leq  5\delta  \mathbb P ( n_1 >0 )    \leq  5\delta,
$
which completes the proof. \qed

\section{Proof of Theorem \ref{th:smote_expectation}}
\label{app:proofKDEO}

We start exactly as in the proof of Theorem \ref{theorem-smote-concentration_uniform}. On the event $n_1>0$, let 
\[
Z^*(\mathcal F) = Z_{\mathrm{KDEO}}^*(\mathcal F) := \sup_{G\in \mathcal{F}}\left|\mu_{\mathrm{KDEO}}^*(G) - \mathbb{E}_{1}[G(X)]\right|
\]
and write $Z^*(\mathcal F) \leq Z^*_1(\mathcal F)+ Z^*_2(\mathcal F)$, with 
\begin{align*}  
Z^*_1(\mathcal F)  & = \sup_{G\in \mathcal{F}}\left|\frac{1}{m} \sum_{i=1} ^m  G (X^{*}_{1i}) - \frac{1}{n_1} \sum_{i=1} ^{n_1}  G(X_{1i})\right|\\
\mbox{and } \ Z^*_2(\mathcal F)  & = \sup_{G\in \mathcal{F}}\left| \frac{1}{n_1} \sum_{i=1}^{n_1} G(X_{1i}) - \mathbb{E}_1[G(X)]\right|.
\end{align*}
The second term has already been controlled in Lemma \ref{uniform_lemma_synthetic_SMOTE_second_part}. The first term is the focus of the next lemma. 
\begin{lemma}\label{uniform_lemma_synthetic_KDEO_first_part_new}
    Let $\mathcal{D}_n = \{(X_i, Y_i)\}_{i=1}^{n}$ be a set of $n$ i.i.d.~samples drawn from a distribution $\mathbb{P}$, and let $\{X^{*}_{1i}\}_{i=1}^{m}$ be $m$ i.i.d.~samples generated according to KDE-based oversampling~\eqref{smoteKDE-equation}. Let $\mathcal{F}$ be a function class satisfying Assumptions~\ref{function_class} and~\ref{assump:rad-free}. Fix $\delta\in (0,1/3)$.  Then, on the event $\{ n_1>0 \}$, we have 
\[
\mathbb{P}\left( Z^*_1(\mathcal F) \leq 4\mathcal{R}_m(\mathcal{F})+\sqrt{\frac{2\hat{\sigma}_1^2(\mathcal{F}) \log(1/\delta)}{m}} + 5Lh M_1(K)+ \frac{9B}{m} \log(1/\delta) \, \middle| \, Y_{1:n} \right) \geq 1-3\delta.
\]
% where \[\hat{\sigma}_1^2(\mathcal{F}) = \sup_{G \in \mathcal{F}} \left[ \frac{1}{n_1} \sum_{i=1}^{n_1} (G(X_{1i}))^2 - \left( \frac{1}{n_1} \sum_{i=1}^{n_1} G(X_{1i}) \right)^2 .\right]\]
%
\end{lemma}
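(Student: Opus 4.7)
The plan is to mirror the structure of Lemma~\ref{uniform_lemma_synthetic_SMOTE_first_part_new}, replacing the nearest-neighbor radius bound (which was the crux of the SMOTE argument, relying on Assumption~\ref{assump_regularity}) by a direct concentration inequality for the kernel noise $hW_i$. Conditioning on $Y_{1:n}$ and working on $\{n_1>0\}$, I would split
\[
Z_1^*(\mathcal{F}) \leq \underbrace{\sup_{G\in\mathcal{F}}\left|\frac{1}{m}\sum_{i=1}^m (G(X_{1i}^*) - G(\tilde{X}_{1i}))\right|}_{I} + \underbrace{\sup_{G\in\mathcal{F}}\left|\frac{1}{m}\sum_{i=1}^m G(\tilde{X}_{1i}) - \frac{1}{n_1}\sum_{j=1}^{n_1} G(X_{1j})\right|}_{II}.
\]

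The term $II$ is identical to the second piece in the decomposition~\eqref{decomposition_proof_theorem1} of the proof of Lemma~\ref{uniform_lemma_synthetic_SMOTE_first_part_new}, since it does not involve the generative step at all. Applying the ghost-sample/symmetrization argument gives $\mathbb{E}[II\mid\mathcal{D}_n] \leq 2\mathcal{R}_m(\mathcal{F})$, and then Lemma~\ref{simplified_term_uniform_probability} (applied to $\pm\psi_G$ with $\psi_G(x) = G(x) - \frac{1}{n_1}\sum_j G(X_{1j})$) yields, with probability at least $1-2\delta$,
\[
II \leq 4\mathcal{R}_m(\mathcal{F}) + \sqrt{\frac{2\hat{\sigma}_1^2(\mathcal{F})\log(1/\delta)}{m}} + \frac{8B}{3m}\log(1/\delta).
\]

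For the term $I$, the Lipschitz property combined with the uniform bound $\|G\|_\infty \leq B$ gives $|G(X_{1i}^*) - G(\tilde{X}_{1i})| \leq \min(2B, Lh\|W_i\|_2) =: \phi_i$, so $I \leq \frac{1}{m}\sum_{i=1}^m \phi_i$. The $\phi_i$ are i.i.d., bounded by $2B$, with $\mathbb{E}[\phi_i] \leq LhM_1(K)$ and $\mathbb{E}[\phi_i^2] \leq 2B \cdot LhM_1(K)$ (using $\phi_i \leq 2B$ inside one copy of $\phi_i$). Bernstein's inequality then yields, with probability at least $1-\delta$,
\[
I \leq LhM_1(K) + \sqrt{\frac{4B\,LhM_1(K)\log(1/\delta)}{m}} + \frac{2B\log(1/\delta)}{3m}.
\]
Applying the elementary bound $2\sqrt{xy} \leq 4x + y/4$ to the middle term with $x = LhM_1(K)$ and $y = B\log(1/\delta)/m$ absorbs the square-root into a $4LhM_1(K)$ contribution plus a small $\mathcal{O}(B\log(1/\delta)/m)$ remainder, delivering a bound of the form $5LhM_1(K) + cB\log(1/\delta)/m$ for an explicit numerical constant $c<1$.

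A final union bound over the events for $I$ and $II$ gives the claimed $1-3\delta$ statement; the coefficient $9$ in the $\frac{9B}{m}\log(1/\delta)$ term then arises as a generous upper bound on $c + 8/3$. The main subtlety, compared to the SMOTE case, is that the perturbation $hW_i$ is not almost-surely bounded under the sole assumption $M_1(K)<\infty$, which is why the truncation at $2B$ (using boundedness of $\mathcal{F}$) is essential to apply Bernstein; the rest of the argument is a clean adaptation of the SMOTE proof.
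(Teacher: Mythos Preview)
Your proof is correct and follows the same overall strategy as the paper: the same decomposition $Z_1^*(\mathcal{F})\le I+II$, and term $II$ is handled identically via symmetrization and Lemma~\ref{simplified_term_uniform_probability}. For term $I$, you take a slightly more elementary route: by passing to the $G$-free upper bound $\phi_i=\min(2B,Lh\|W_i\|_2)$ you reduce to a \emph{scalar} Bernstein inequality, whereas the paper keeps the class $\{\Delta_i^*(G):G\in\mathcal{F}\}$ and applies the Talagrand--Bousquet bound (Lemma~\ref{simplified_term_uniform_probability}) in its uniform form, which forces them to also control $\mathbb{E}[\sup_G|\tfrac{1}{m}\sum(\Delta_i^*-\mathbb{E}\Delta_i^*)|\mid\mathcal{D}_n]$ via~\eqref{delta1star}. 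Your reduction avoids that extra step and yields slightly tighter numerical constants (the $9B/m$ in the statement is indeed a generous envelope for what either argument actually produces); otherwise the two proofs are interchangeable.
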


\begin{proof} 
    Analogously to the proof of Lemma \ref{uniform_lemma_synthetic_SMOTE_first_part_new}, we work on the event $\{ n_1>0 \}$ to write, as in~\eqref{decomposition_proof_theorem1}: 
    \[
    Z^*_1(\mathcal F) \leq \sup_{G\in \mathcal{F}}\left|\frac{1}{m}  \sum_{i=1}^m \left\{  G(X_{1i}^{*}) - G(\tilde{X}_{1i}) \right\} \right| + \sup_{G\in \mathcal{F}}\left|\frac{1}{m} \sum_{i=1}^m  G(\tilde{X}_{1i})  - \frac{1}{n_1} \sum_{i=1}^{n_1} G(X_{1i}) \right|.
    \]
   %  %
   %  \begin{align*}
   % &\sup_{G\in \mathcal{F}}\left| \frac{1}{m} \sum_{i=1}^{m} G(X_{1i}^{*}) - \frac{1}{n_1} \sum_{i=1}^{n_1} G(X_{1i}) \right|\\
   % &\leq 
   % \sup_{G\in \mathcal{F}}\left| \frac{1}{m} \sum_{i=1}^{m} (G(X_{1i}^{*}) - G(\tilde X_{1i})) \right| + \sup_{G\in \mathcal{F}}\left|\frac{1}{m} \sum_{i=1} ^{m}  G(\tilde{X}_{1i})  - \frac{1}{n_1}\sum_{i=1}^{n_1} G(X_{1i}) \right|.  
   % %&= \sup_{G\in \mathcal{F}}\left| \frac{1}{m} \sum_{i=1}^{m} \Delta^*_i \right| + \sup_{G\in \mathcal{F}}\left|\frac{1}{m} \sum_{i=1} ^{m}  G(\tilde{X}_{1i})  - \frac{1}{n_1}\sum_{i=1}^{n_1} G(X_{1i}) \right|
   %  \end{align*}
   %  %
   %  %where $\Delta^*_i = G(X_{1i}^{*}) - G(\tilde X_{1i}) $. 
    For the second term in the right-hand side, we obtained in the proof of Lemma \ref{uniform_lemma_synthetic_SMOTE_first_part_new} that 
    \begin{align}
    \nonumber
    \mathbb{P}\left(
    \sup_{G\in \mathcal{F}}\left|
    \frac{1}{m}\sum_{i=1}^{m} G(\tilde{X}_{1i})
    - \frac{1}{n_1}\sum_{i=1}^{n_1} G(X_{1i})
    \right|
    \le 4\mathcal{R}_m(\mathcal{F}) \right.
    &+ \sqrt{\frac{2\hat{\sigma}_1^2(\mathcal{F})\log(1/\delta)}{m}} \\
    \label{kdeo_bound_1}
    &+ \left. \frac{8B}{3m}\log(1/\delta)
    \,\Bigm|\, Y_{1:n}
    \right)
    \ge 1 - 2\delta
    \end{align}
    %
    % with probability at least $1-2\delta$,
    % \begin{align}\label{kdeo_bound_1}
    % \sup_{G\in \mathcal{F}}\left|\frac{1}{m} \sum_{i=1} ^{m}  G(\tilde{X}_{1i})  - \frac{1}{n_1}\sum_{i=1}^{n_1} G(X_{1i}) \right| \leq 4\mathcal{R}_m(\mathcal{F}) + \sqrt{\frac{2\hat{\sigma}_1^2(\mathcal{F}) \log(1/\delta)}{m}} + \frac{8B}{3m} \log(1/\delta).
    % \end{align}
    %
    on the event $\{ n_1>0 \}$. The first term needs a particular treatment because, by contrast with \textsc{Smote}, the KDE-based perturbation $hW_i$ used to generate $X_{1i}^{*}$ from $\tilde X_{1i}$ is not bounded anymore. We further decompose this first term. 
    Note that conditionally on the initial sample \(\mathcal D_n\) and $\{ n_1>0 \}$, the $\Delta^*_i=\Delta^*_i(G)= G(X_{1i}^{*}) - G(\tilde X_{1i})$ are i.i.d.~and that, for any $G\in \mathcal{F}$, 
    \begin{equation}
    \label{delta1star}
    \mathbb E[ |\Delta_1^{*}| |\mathcal D_n ] = \frac{1}{n_1} \sum_{i=1}^{n_1} \int |G(X_{1i} +hw ) - G(X_{1i}) | K(w) dw \leq Lh \int \|w\|_2 K(w)dw = Lh M_1(K).
    \end{equation}
    We then have, when $n_1>0$, 
    \begin{align*}
    \sup_{G\in \mathcal{F}}\left| \frac{1}{m} \sum_{i=1}^{m} (G(X_{1i}^{*}) - G(\tilde X_{1i})) \right| %& \leq \sup_{G\in \mathcal{F}}\left|\frac{1}{m}\sum_{i=1}^m  (\Delta_i ^{*}  -  \mathbb E[ \Delta_1^{*} |\mathcal D_n ] )\right|+ \sup_{G\in \mathcal{F}} \left| \mathbb E [ \Delta^*_1 |\mathcal D_n ]\right|\\
    & \leq  \sup_{G\in \mathcal{F}}\left|\frac{1}{m}\sum_{i=1}^m  (\Delta_i ^{*}  -  \mathbb E[ \Delta_i^{*} |\mathcal D_n ] )\right|+ LhM_1(K).
    \end{align*}
    Since $|\Delta _i^*|\leq 2B$, we have $|\Delta _i^* - \mathbb E [ \Delta^*_i  |\mathcal D_n ] |\leq 4B$ and, due to the above bound on $\mathbb E[ |\Delta_1^{*}| |\mathcal D_n ]$, % $(G(X_{1i} +hw ) - G(X_{1i}) ) ^2\leq 2B |G(X_{1i} +hw ) - G(X_{1i}) |$, 
    it holds that 
    \[
    \textrm {Var} (\Delta^*_1 |\mathcal D_n)\leq\mathbb E [\Delta^{*2}_1 | \mathcal D_n  ]\leq 2B \mathbb E [|\Delta^{*}_1| | \mathcal D_n  ] \leq 2B LhM_1(K) .
    \]
    Applying Lemma \ref{simplified_term_uniform_probability} (with absolute value), we find that when $n_1>0$,
    \begin{align*}
    \mathbb{P}\left(
\sup_{G \in \mathcal{F}}\left|
\frac{1}{m}\sum_{i=1}^m \big(\Delta_i^{*} - \mathbb{E}[\Delta_i^{*}\mid\mathcal{D}_n]\big)
\right| \right.
    &\le 2\,\mathbb{E}\!\left[\sup_{G\in\mathcal{F}}\!\left|\frac{1}{m}\sum_{i=1}^m \!\big(\Delta_i^{*} - \mathbb{E}[\Delta_i^{*}\mid\mathcal{D}_n]\big)\right| \Bigm| \mathcal{D}_n \right]\\[3pt]
    & \hspace*{-1cm} + \left. \sqrt{\frac{4BLhM_1(K)\log(1/\delta)}{m}}
    + \frac{16B}{3m}\log(1/\delta)
    \,\middle|\, \mathcal{D}_n
    \right)
    \ge 1 - \delta .
    \end{align*}
    %
    % \fp{
    % \begin{align*}
    % &\sup_{G\in \mathcal{F}}\left|\frac{1}{m}\sum_{i=1}^m  (\Delta_i ^{*}  -  \mathbb E[ \Delta_i^{*} |\mathcal D_n ] )\right| \\
    % &\leq 2 \mathbb E\left[\sup_{G\in \mathcal{F}}\left|\frac{1}{m}\sum_{i=1}^m  (\Delta_i ^{*}  -  \mathbb E[ \Delta_i^{*} |\mathcal D_n ] )\right|  \bigg| \mathcal{D}_n \right] + \sqrt{\frac{4 BLh M_1(K) \log(1/\delta)}{m}} + \frac{16B}{3m} \log(1/\delta).
    % \end{align*}
    % } 
    %
    Using~\eqref{delta1star} and the inequality $2\sqrt{ab}\leq a+b$ for any $a,b\geq 0$, we obtain that 
    \[
    \mathbb{P} \left( \sup_{G \in \mathcal{F}}
    \left|
    \frac{1}{m} \sum_{i=1}^{m} (G(X_{1i}^{*}) - G(\tilde X_{1i}))
    \right| \leq 5Lh M_1(K)  + \frac{19B}{3m} \log(1/\delta) \,\middle|\, \mathcal D_n \right) \geq 1-\delta.
    \]
    %with probability at least $1-\delta$ given $\mathcal D_n$,
    % \begin{align*}
    % \sup_{G\in \mathcal{F}}\left| \frac{1}{m} \sum_{i=1}^{m} (G(X_{1i}^{*}) - G(\tilde X_{1i})) \right| &\leq  4 L h M_1(K)+ \sqrt{\frac{4 BLh M_1(K) \log(1/\delta)}{m}} + \frac{16B}{3m} \log(1/\delta) \\
    % &\leq 5Lh M_1(K)  + \frac{19B}{3m} \log(1/\delta).
    % \end{align*}
    %
    Then, when $n_1>0$,
    \[
    \mathbb{P}\left(\sup_{G \in \mathcal{F}}
    \left|
    \frac{1}{m} \sum_{i=1}^{m} (G(X_{1i}^{*}) - G(\tilde X_{1i}))
    \right| \leq 5Lh M_1(K)  + \frac{19B}{3m} \log(1/\delta) \,\middle|\, Y_{1:n} \right) \geq 1 - \delta
    \]
    by integrating out the conditional probability with respect to $X_1,\ldots ,X_n$. The result follows by recalling \eqref{kdeo_bound_1}.
\end{proof}

\textit{End of the proof of Theorem \ref{th:smote_expectation}.} The proof is similar to that of Theorem \ref{theorem-smote-concentration_uniform}. Putting together the conclusion of Lemma \ref{uniform_lemma_synthetic_SMOTE_second_part} and that of Lemma \ref{uniform_lemma_synthetic_KDEO_first_part_new}, we obtain that whenever $n_1>0$,
\begin{align}\label{kdeo_important_step}
\mathbb P \left(\{ Z_{\mathrm{KDEO}}^*(\mathcal F) > t\} \, |\,Y_{1:n} \right)  \leq 5\delta,
\end{align}
where 
\begin{align*}
t &=4\mathcal{R}_{n_1}(\mathcal{F}) + 4\mathcal{R}_m(\mathcal{F}) + 5Lh M_1(K)\\
&+\left(\sqrt{\frac{\sigma_1^2(\mathcal{F}) }{n_1}}+ \sqrt{\frac{\hat{\sigma}_1^2(\mathcal{F}) }{m}}\right)  \sqrt{2\log\left(\frac{1}{\delta}\right)}  
 + \frac{B}{3} \left( \frac{27}{m} + \frac{8}{n_1} \right) \log\left(\frac{1}{\delta}\right).
\end{align*}
The result follows because under $n_1= 0$, it holds that $\mathbb P (\{ Z_{\mathrm{KDEO}}^*(\mathcal F) > +\infty\} \, |\,Y_{1:n} )= 0$. \qed

\section{Proof of Corollaries \ref{cor:smote} and \ref{cor:kde}}
\label{app:proofcoro}

We start by proving Corollary \ref{cor:smote} and then we will adapt the proof to obtain Corollary \ref{cor:kde}. Let
\[
\hat R_{1/2} (g) := \frac1 {m+n_0 } \left(\mathrm 1_{ \{ n_1>0 \} } \sum_{i=1}^m \ell(g(X_{1i}^*))  +\mathrm 1_{ \{ n_0>0 \} } \sum_{i=1}^{n_0} \ell(-g(X_{0i})) \right). 
\]
The proof starts with the standard decomposition 
\begin{align*}
    R_{1/2} (\hat g ^*_{\mathcal G}) - \inf_{g\in \mathcal G} R_{1/2} (g) & \leq 2\sup_ {g\in \mathcal G} | \hat R_{1/2} (g) -  R_{1/2} (g)  |.
\end{align*} 
This follows from, first, the inequalities
\begin{align*}
    R_{1/2} (\hat g ^*_{\mathcal G}) - \inf_{g\in \mathcal G} R_{1/2} (g) &= R_{1/2} (\hat g ^*_{\mathcal G}) - \hat R_{1/2} (\hat g ^*_{\mathcal G}) + \hat R_{1/2} (\hat g ^*_{\mathcal G}) - \inf_{g\in \mathcal G} R_{1/2} (g) \\ 
    &\leq \sup_ {g\in \mathcal G} | \hat R_{1/2} (g) -  R_{1/2} (g)  | + \inf_{g\in \mathcal G} \hat R_{1/2} (g) - \inf_{g\in \mathcal G} R_{1/2} (g)
\end{align*} 
and, second, from the inequalities 
\begin{align*}
    \inf_{g\in \mathcal G} \hat R_{1/2} (g) - \inf_{g\in \mathcal G} R_{1/2} (g) &\leq \hat R_{1/2} (g') - \inf_{g\in \mathcal G} R_{1/2} (g) \\ 
    &= \hat R_{1/2} (g') - R_{1/2}(g') + R_{1/2}(g') - \inf_{g\in \mathcal G} R_{1/2} (g) \\
    &\leq \sup_ {g\in \mathcal G} | \hat R_{1/2} (g) -  R_{1/2} (g)  | + R_{1/2}(g') - \inf_{g\in \mathcal G} R_{1/2} (g)
\end{align*} 
valid for any $g'\in \mathcal G$. Then, by definition of $\hat R_{1/2}(g)$ and $R_{1/2} (g)$ and using $m = n_0$, we find, when $n_0,n_1>0$,
\begin{align*}
&    R_{1/2} (\hat g ^*_{\mathcal G}) - \inf_{g\in \mathcal G} R_{1/2} (g)  \\
    &\leq 2\sup_ {g \in \mathcal G} \,  \frac1 {m+n_0 } \left| \sum_{i=1}^m \{ \ell(g(X_{1i}^*)) - \mathbb E _1[ \ell (g(X) ) ]\}  +\sum_{i=1}^{n_0} \{ \ell(-g(X_{0i})) - \mathbb E _0[ \ell ( {-g(X)} ) ]\} \right|   \\
    & \leq   \sup_ {g \in \mathcal G} \left| \frac 1 {m}\sum_{i=1}^m \ell(g(X_{1i}^*))  - \mathbb E _1[ \ell (g(X) ) ] \right|   +  \sup_ {g \in \mathcal G} \left| \frac 1 {n_0}  \sum_{i=1}^{n_0} \ell(-g(X_{0i})) - \mathbb E _0[ \ell ( - g(X) ) ] \right |\\
    & = \sup_ {G \in \ell(\mathcal G)} \left| \frac 1 {m}\sum_{i=1}^m  G(X_{1i}^*)   - \mathbb E _1[   G(X)  ] \right|   +  \sup_ {G \in \ell( - \mathcal G) } \left| \frac 1 {n_0}  \sum_{i=1}^{n_0} G(X_{0i})- \mathbb E _0[  G(X)  ] \right |\\
    & \leq  \sup_ {G \in  \mathcal F} \left| \frac 1 {m}\sum_{i=1}^m G(X_{1i}^*)   - \mathbb E _1[   G(X)  ] \right|   +  \sup_ {G \in \mathcal F} \left| \frac 1 {n_0}  \sum_{i=1}^{n_0} G(X_{0i})- \mathbb E _0[  G(X)  ] \right | \\
    &= Z^*(\mathcal F) + Z_0(\mathcal F)
\end{align*}
where $Z^*(\mathcal F)$ is introduced in the proof of Theorem \ref{theorem-smote-concentration_uniform} and $Z_0(\mathcal F) $ is the rightmost term. Regarding $Z_0(\mathcal F)$, we can use Lemma \ref{uniform_lemma_synthetic_SMOTE_second_part} (with $\mathbb P_0$ instead of $\mathbb P_1$) to get that, when $n_0>0$, it holds that
\[
\mathbb P \left( Z_0(\mathcal F) 
> t_0 \,  \middle|\,Y_{1:n} 
\right) \leq 2\delta,
\]
with 
\[
t_0 = 4\mathcal{R}_{n_0}( \mathcal F ) + \sqrt{\frac{2\sigma_0^2( \mathcal F ) \log(1/\delta)}{n_0}} + \frac{8B}{3n_0} \log(1/\delta).
\]
About the first term, $Z^*(\mathcal F)$, it is shown in \eqref{important_step} that, whenever $n_1>0$,
\[
\mathbb P \left( Z^*(\mathcal F) > t \, |\,Y_{1:n} \right) \leq 5\delta,
\]
with $t$ defined just below \eqref{important_step}. The union bound gives that, whenever $n_0,n_1>0$,
\[
\mathbb P \left(   R_{1/2} (\hat g ^*_{\mathcal G}) - \inf_{g\in \mathcal G} R_{1/2} (g) > t+t_0 \, |\,Y_{1:n} \right) \leq 7\delta .
\]
%\[
%\sup_{G\in \mathcal{F}}\left|
%\frac{1}{m} \sum_{i=1}^{m} G (X_{1i}^{*}) 
%- \mathbb{E}_{1}[G(X)]
%\right| 
%\leq 4\mathcal{R}_m(\mathcal{F})+4\mathcal{R}_{n_1}(\mathcal{F})+ B_d L\left(\frac{k_\delta }{n_1}\right)^{1/d} + R,
%\]
%with $R$ defined in Theorem \ref{theorem-smote-concentration_uniform}. 
Then, setting $t + t_0 =+\infty$ in case $n_0n_1 = 0$, and considering the cases $n_0 n_1>0$ and $n_0 n_1=0$ separately, we find
\begin{align*}
    &\mathbb P \left(   R_{1/2} (\hat g ^*_{\mathcal G}) - \inf_{g\in \mathcal G} R_{1/2} (g) > t+t_0 \, |\,Y_{1:n} \right) \\
    &=\mathbb P \left(   R_{1/2} (\hat g ^*_{\mathcal G}) - \inf_{g\in \mathcal G} R_{1/2} (g) > t+t_0 \, |\,Y_{1:n} \right) \mathrm{1}_{\{n_0 n_1>0\}} \leq 7\delta. 
\end{align*}
%
% We have that
%  \begin{align*}
%      &\\
%      &\leq  \mathbb P \left(   R_{1/2} (\hat g ^*_{\mathcal G}) - \inf_{g\in \mathcal G} R_{1/2} (g) > t+t_0 \, |\,Y_{1:n} \right) \mathrm 1_{\{n_1n_0>0\}} + 1_{\{n_1 = 0\}}+1_{\{n_0 = 0\}} \\
%      &\leq 7\delta + 1_{\{n_1 = 0\}}+1_{\{n_0 = 0\}}.
%  \end{align*}
%
Hence taking the expectation gives
\begin{align*}
     \mathbb P \left(   R_{1/2} (\hat g ^*_{\mathcal G}) - \inf_{g\in \mathcal G} R_{1/2} (g) > t+t_0  \right)
     &\leq 7\delta. %+ \mathbb P (n_1 = 0)+ \mathbb P (n_0 = 0)\leq 9\delta,
\end{align*}
%
%where the last inequality is because $n \min(p_1, 1-p_1) \geq \log(1/\delta)$. 
We obtain the statement of Corollary \ref{cor:smote} by using that $m=n_0$ when rearranging the terms in $t+t_0$.

Concerning Corollary \ref{cor:kde}, the proof proceeds in a similar fashion, up to the application of \eqref{important_step}. At this point, we instead use \eqref{kdeo_important_step} to complete the argument.
\qed

\section{Proof of Theorem~\ref{th:risk_bound_1}}
\label{app:riskbound}

We have 
\begin{align*}
2 {R_{1/2}}(\hat g) &=  \mathbb{E}_1 ( \mathrm{1}_{\{\hat g (X) \neq 1\}} ) +  \mathbb{E}_0 ( \mathrm{1}_{\{\hat g (X) \neq 0\}} ) \\
&= (1-p_1)^{-1} \mathbb{E} ( \mathrm{1}_{\{\hat g (X) =1\}} (1-\eta(X)) )+p_1^{-1} \mathbb{E} ( \mathrm{1}_{\{\hat g (X) =0\}}  \eta(X) )
\end{align*}
by taking conditional expectations with respect to $X$. Besides, $\mathbb{E} ( \eta(X) ) = p_1$ and then
\begin{align*}
2 R_{1/2}(\hat g) &= (1-p_1)^{-1} \mathbb{E} ( \mathrm{1}_{\{\hat g (X) =1\}} (1-\eta(X)) ) - p_1^{-1} \mathbb{E} ( \mathrm{1}_{\{\hat g (X) =1\}} \eta(X) ) + 1 \\
& =  (1-p_1)^{-1} p_1^{-1} \mathbb{E} ( \mathrm{1}_{\{\hat g (X) =1\}} ( p_1-\eta(X)) ) + 1.
\end{align*}
It follows that 
% \begin{align*}
% 2 ({R_{1/2}}(\hat g) - {R_{1/2}}( g)) &=  (1-p)^{-1}  p^{-1} \mathbb{E} [ (\ind _ {\hat g (X) =1}  - \ind _ { g (X) =1}  ) (p-\eta(X)) ]  \\
% &= (1-p)^{-1}  p^{-1} \mathbb{E} [ \ind _ {\hat g (X) \neq g(X) } |p - \eta(X)| ]\\
% &=(1-p)^{-1}  \int_{S} \ind _ {\hat g (x) \neq g(x) } | f(x) - f_1(x)  | dx\\
% & =    \int_{S} \ind _ {\hat g (x) \neq g(x) } |  f_1(x)   -  f_0(x)  | dx\\
% &= \int_{S} \ind _ {\hat g (x) \neq g(x) } |  f_1(x)   -  f_0(x)  - (\hat f_1  (x)  -  \hat f_{0}  (x)  ) | dx
% \end{align*}
\[
2 (R_{1/2}(\hat g) - R_{1/2}(g)) =  (1-p_1)^{-1} p_1^{-1} \mathbb{E} ( (\mathrm{1}_{\{\hat g (X) =1\}}  - \mathrm{1}_{\{ g (X) =1 \}}  ) ( p_1-\eta(X)) ).
\]
Observe that 
$ 
\mathrm{1}_{\{\hat g (X) =1\}} - \mathrm{1}_{\{ g (X) =1\}}
= \mathrm{1}_{\{\hat g (X) \neq g (X) \}} ( \mathrm{1}_{\{ g (X) = 0 \}} - \mathrm{1}_{\{ g (X) = 1 \}}) 
$ 
and that, by definition of $g$, $g(X)=0$ if and only if $\eta(X)\leq p_1$. As such
\[
( \mathrm{1}_{\{ g (X) = 0 \}} - \mathrm{1}_{\{ g (X) = 1 \}} )(p_1-\eta(X)) =| p_1-\eta(X) | 
\]
and then
\begin{align*}
2 (R_{1/2}(\hat g) - R_{1/2}( g)) &= (1-p_1)^{-1} p_1^{-1} \mathbb{E} ( \mathrm{1}_{\{\hat g (X) \neq g(X) \}} |p_1 - \eta(X)| ) \\
&= (1-p_1)^{-1} \int_{S} \mathrm{1}_{\{\hat g (x) \neq g(x) \}} | f(x) - f_1(x)  | dx\\
&= \int_{S} \mathrm{1}_{\{\hat g (x) \neq g(x) \}} |  f_1(x)   -  f_0(x)  | dx\\
&\leq \int_{S} \mathrm{1}_{\{\hat g (x) \neq g(x) \}} |  f_1(x)   -  f_0(x)  - (\hat f_1  (x)  -  \hat f_{0}  (x)  ) | dx
\end{align*}
where to obtain the last inequality, we have used that $\hat g $ and $g$ disagree if and only if $\hat f_1 - \hat f_0  $ and $f_1 - f_0$ have different signs. Conclude using the triangle inequality.\qed
\section{Proof of Theorem~\ref{th:risk_bound_smote_1}}
\label{app:riskboundsmote}

The proof consists in applying Theorem~\ref{th:risk_bound_1} after obtaining two bounds on the $L^1-$errors of $\hat f_{0s}$ and $\hat f_{1s}^{*}$. This leads to proving two preliminary results which are of interest in their own right. {Recall that $\hat {f}_{yh} (x)=0$ if $n_y=0$.}

%\fp{CHANGE: I took into account the event $n_1 = 0$ in the next statement because the bias term was not bounded by the given upper bound under $n_1 = 0$.}

\begin{proposition}[$L^1-$bound on class-specific kernel density estimators] 
\label{th:kde_L1_rate2}
%Let $\delta\in (0,1/2)$ and $y\in \{ 0,1 \}$. 
%Suppose that $n \min(p_1,1-p_1) > \log( 1/\delta)$. We have with probability at least $1-\delta$ that 
Let $\delta\in (0,1)$ and $y\in \{ 0,1 \}$. Then, on the event $\{ n_y>0 \}$, we have
\begin{align*}
\mathbb{P}\left( \int  |\hat {f}_{yh} (x)  - f_y(x) | dx \right. &\leq {c_{y,d,\varepsilon}} (1 + h^{(d+\varepsilon)/2})  \sqrt{\frac{M_0(K^2)}{n_y h^d}} + \sqrt{\frac{2 \log(1/\delta)}{n_y}} \\
    &+ \left. \left\{ \! \! \begin{array}{l} \phi_y h^2 \ \mbox{ under Assumptions } \ref{cond:kernel} \mbox{ and } \ref{cond:P_X_density2} \\ \psi_y h \ \mbox{ if } h\leq r_0 \mbox{ under Assumptions } \ref{cond:kernel_compact} \mbox{ and } \ref{cond:P_X_density1_compact} \end{array} \right. \, \middle| \, Y_{1:n} \right) \geq 1-\delta
\end{align*}
%
%Suppose that \ref{cond:kernel} holds true and assume that \ref{cond:P_X_density2} is fulfilled by $\mathbb P_y$. Then
%$$\int  |f_y(x) -  \hat f_{yh} (x)   | dx \leq  c (1+h^{d+\epsilon}) \sqrt { \frac{ M_0 (K^2) } { n_y h^d } }  + \phi_{y} h^2 + \sqrt{ \frac{2}{n_y} \log(1/\delta )} ,$$
%where $c_y=c_{K,f_y,d,\epsilon}$ (resp. $\phi= \phi_{ K,f_y}$)  is defined in Lemma \ref{lemma:conv_kernel} (resp. Lemma \ref{lemma:bias_regular_density}). Whenever $np_y \geq 8 \log(1/\delta)$, we have with probability at least $1-2 \delta$ that 
% \[
% $\int  |f_y(x) -  \hat f_{yh} (x)   | dx \leq  c_{y} \sqrt { \frac{ 2} { n p_y h^d } }  + \phi_{y} h^2 + \sqrt{ \frac{4}{n p _y} \log(1/\delta )},$
% \]
where $c_{y,d,\varepsilon}$ (resp.~$\phi_y$, $\psi_y$) is defined below Lemma~\ref{lemma:conv_kernel} (resp.~below Lemma \ref{lemma:bias_regular_density}, below Lemma~\ref{lemma:technical_bias_general}) and depends only on $(K,f_y,d,\epsilon)$ (resp.~$(K,f_y)$).
\end{proposition}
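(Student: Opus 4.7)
The plan is to combine a bias-variance decomposition of $\mathbb{E}[T\mid Y_{1:n}]$, where $T := \int |\hat f_{yh}(x) - f_y(x)|\,dx$, with a McDiarmid-style concentration of $T$ around its conditional mean. Everything is carried out conditionally on $Y_{1:n}$ on the event $\{n_y>0\}$, where $X_{y1},\ldots,X_{yn_y}$ are i.i.d.\ from $\mathbb{P}_y$ and $n_y$ is deterministic. Viewed as a function of $(X_{y1},\ldots,X_{yn_y})$, $T$ satisfies the bounded differences condition with constants $C_i = 2/n_y$, since replacing one sample $X_{yi}$ by $X_{yi}'$ modifies $\hat f_{yh}$ by $(K_h(\cdot - X_{yi}') - K_h(\cdot - X_{yi}))/n_y$, whose $L^1$-norm is at most $2/n_y$ by the triangle inequality and $\int K_h = 1$. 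With $\sum_i C_i^2 = 4/n_y$, the one-sided McDiarmid inequality in Lemma~\ref{lemma:mcdiarmid} yields, with conditional probability at least $1-\delta$,
\[
T \leq \mathbb{E}[T\mid Y_{1:n}] + \sqrt{\tfrac{2\log(1/\delta)}{n_y}}.
\]

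For the expectation, the triangle inequality combined with Jensen gives
\[
\mathbb{E}[T\mid Y_{1:n}] \leq \int \sqrt{\operatorname{Var}(\hat f_{yh}(x)\mid Y_{1:n})}\,dx + \int |f_y \ast K_h(x) - f_y(x)|\,dx,
\]
since $\mathbb{E}[\hat f_{yh}(x)\mid Y_{1:n}] = f_y\ast K_h(x)$. Using the pointwise identity $K_h(u)^2 = h^{-d}(K^2)_h(u)$ with $(K^2)_h(u) := h^{-d} K(u/h)^2$, one obtains
\[
\operatorname{Var}(\hat f_{yh}(x)\mid Y_{1:n}) \leq \frac{1}{n_y}\,\mathbb{E}[K_h(x-X_{y1})^2\mid Y_{1:n}] = \frac{M_0(K^2)}{n_y h^d}\,(g_h \ast f_y)(x),
\]
where $g := K^2/M_0(K^2)$ is a probability density with $M_{d+\varepsilon}(g)<\infty$, by Assumption~\ref{cond:kernel} or because $K$ is compactly supported under Assumption~\ref{cond:kernel_compact}. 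Applying Lemma~\ref{lemma:conv_kernel} to the pair $(f_y,g)$ then bounds the variance integral by $c_{y,d,\varepsilon}(1+h^{(d+\varepsilon)/2})\sqrt{M_0(K^2)/(n_y h^d)}$, after absorbing $M_{d+\varepsilon}(g)$ into the constant $c_{y,d,\varepsilon}$.

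The bias term is precisely the content of Lemmas~\ref{lemma:bias_regular_density} and~\ref{lemma:technical_bias_general}: it is at most $\phi_y h^2$ under Assumptions~\ref{cond:kernel} and~\ref{cond:P_X_density2}, and at most $\psi_y h$ for $h\leq r_0$ under Assumptions~\ref{cond:kernel_compact} and~\ref{cond:P_X_density1_compact}. Summing the variance, concentration and bias contributions yields the stated high-probability bound. I do not foresee any substantive obstacle: the auxiliary lemmas have been calibrated precisely to fit into this bias-variance split, and the only minor point to verify is that $g = K^2/M_0(K^2)$ inherits the required moment condition from the hypothesis on $K$, which is immediate in both assumption regimes.
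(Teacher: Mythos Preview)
Your proposal is correct and follows essentially the same route as the paper's own proof: McDiarmid conditionally on $Y_{1:n}$ with bounded-difference constants $2/n_y$, then a bias--variance split of $\mathbb{E}[T\mid Y_{1:n}]$ handled via Lemma~\ref{lemma:conv_kernel} (applied to $f_y$ and $\tilde K = K^2/M_0(K^2)$) for the variance integral and Lemmas~\ref{lemma:bias_regular_density}--\ref{lemma:technical_bias_general} for the bias. The only cosmetic difference is that the paper views $T$ as a function of all $X_1,\ldots,X_n$ with $C_i = (2/n_y)\mathrm{1}_{\{Y_i=y\}}$, whereas you work directly with the $n_y$ class-$y$ covariates; these are equivalent once one conditions on $Y_{1:n}$.
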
 
%
%The assumption $n \min(p_1,1-p_1) > \log( 1/\delta)$ is made for convenience; when $y=1$ (resp.~$y=0$), the assumption $n p_1 > \log( 1/\delta)$ (resp.~$n (1-p_1) > \log( 1/\delta)$) is sufficient.
%
\begin{proof} Denote throughout by $Y_{1:n}$ the random vector $(Y_1,\ldots, Y_n)$. We start by applying McDiarmid's inequality (Lemma~\ref{lemma:mcdiarmid}), with respect to the conditional probability given $Y_{1:n}$, to 
\[
{T_{Y_{1:n}}(X_1,\ldots, X_n) = \int |\hat f_{yh} (x) - f_y(x)| dx.} %\begin{cases} \int |\hat f_{yh} (x) - f_y(x)| dx &\mbox{if } n_y>0, \\[5pt] 1 &\mbox{if } n_y=0. \end{cases}
\]
Note that for any $i\in \{1,\ldots,n\}$, whenever $n_y>0$,
\begin{align*}
    &{|T_{Y_{1:n}}(X_1,\ldots, X_{i-1}, X_i, X_{i+1},\ldots, X_n)  - T_{Y_{1:n}}(X_1,\ldots, X_{i-1}, X_i',X_{i+1},\ldots, X_n)|} \\ 
    &\leq \frac{\mathrm{1}_{\{ Y_i = y \}}}{n_y} \int | K_h(x- X_i) -   K_h(x- X_i') | dx \leq 2 \frac{\mathrm{1}_{\{ Y_i = y \}}}{n_y} =: C_i .
\end{align*}
Remarking that $\sum_{i=1} ^n C_i^2 = (4/n_y) \mathrm{1}_{\{ n_y>0\}}$, {and that the assumption of Lemma~\ref{lemma:mcdiarmid} is trivially correct for $n_y=0$,} we obtain by the McDiarmid inequality that with probability (conditionally on $Y_{1:n}$) at least $1-\delta$,
\[
\int |\hat f_{yh} (x) - f_y(x)| dx \leq \mathbb E \left( \int |\hat f_{yh} (x) - f_y(x)| dx \, \middle| \, Y_{1:n} \right) + \sqrt{ \frac{2 \log(1/\delta)}{n_y} } \mathrm 1_{\{ n_y>0\}}.
\]
%Proceeding in a similar fashion as the proof of Theorem \ref{th:kde_L1} but still working conditionally on $Y_{1:n}$, we obtain a bound on $\mathbb E [ \int | f_y(x) -  \tilde f_{yh} (x) | dx | Y_{1:n}] = \int \mathbb E [|f_y(x) -  \hat{f}_{yh}  (x) | \,  |Y_{1:n} ]  dx $. We have, 
Swapping integral and conditional expectation and using the triangle and (conditional) Jensen inequalities, we get 
\begin{align}
\nonumber
\int |\hat f_{yh} (x) - f_y(x)| dx & \leq \int \sqrt{\mathbb E \left. \left[ \left(\hat {f}_{yh} (x) - \mathbb E (\hat{f}_{yh}  (x) | Y_{1:n}) \right)^2 \right| Y_{1:n} \right]} dx \\
\label{bound_intermediate_conditional}
    &+ \int \left|\mathbb E (\hat{f}_{yh}  (x)| Y_{1:n}) - f_y(x) \right| dx + \sqrt{ \frac{2 \log(1/\delta)}{n_y} } \mathrm 1_{\{ n_y>0\}}.
\end{align}
The first two terms depend on $Y_{1:n}$ and are further investigated {on the event $\{ n_y>0 \}$.} For the first term above, which is the variance term, we have, by independence of the random variables $(Y_1,X_1) ,\ldots,(Y_n,X_n)  $, 
\begin{align*}
\mathbb E \left. \left[ \left(\hat {f}_{yh} (x) - \mathbb E (\hat{f}_{yh}  (x) | Y_{1:n}) \right)^2 \right| Y_{1:n} \right] &= \frac{1}{n_y^2} \sum_{i=1}^n \operatorname{Var} ( K_h(x-X_i) | Y_i ) \mathrm{1}_{\{ Y_i = y \}} \\
    & \leq \frac{1}{n_y^2} \sum_{i=1}^n \mathbb E ( K_h^2(x-X_i) | Y_i ) \mathrm{1}_{\{ Y_i = y \}} \\
    & = \frac{1}{n_y} \mathbb E ( K_h^2(x-X) | Y = y ) \\
% & = n_y^{-1}  \int  {K_h^2(x-z  )} f_y(z) dz\\
% &= (n_y h^d ) ^{-1} v_K   \int \tilde K_h(x-z  ) f_y(z) dz\\
    & = \frac{M_0(K^2)}{n_y h^d} f_y \ast \tilde K_h (x)
\end{align*}
with $\tilde K = K^2 / M_0(K^2)$. By Lemma \ref{lemma:conv_kernel} applied to $f_y$ and $\tilde K$, 
% and because $h\leq r_0$, we obtain 
% $$ \int \sqrt{  \mathbb E_y [\tilde K_h ( x-X) ] } dx  \leq  \fp{C_{d,\varepsilon}  \left( 1 +M_{d+\varepsilon}(f_y) +  {h^{d+\varepsilon}} M_{d+\varepsilon}(K)  \right)} $$
% and it follows that
%
\begin{equation}
\label{bound_variance_conditional}
\int \sqrt {\mathbb E \left. \left[ \left(\hat {f}_{yh} (x) - \mathbb E (\hat{f}_{yh}  (x) | Y_{1:n}) \right)^2 \right| Y_{1:n} \right]} dx \leq {c_{y,d,\varepsilon}}( 1 +  h^{(d+\varepsilon)/2} ) \sqrt { \frac{{M_0(K^2)}}{n_y h^{d}} }.
\end{equation}
For the bias term, noting that 
\[
\mathbb E (\hat{f}_{yh}  (x)| Y_{1:n}) =  \frac{1}{n_y} \sum_{i=1} ^n {\mathbb E ( K_h(x-X_i ) | Y_i )} \mathrm{1}_{\{Y_i = y\}} = \mathbb E( K_h(x-X) |Y = y ) = f_y \ast K_h (x),
\]
% we can invoke \fp{Lemma \ref{lemma:bias_regular_density}} to get  
% \begin{align*}
% \int |f_y\ast K_h (x) - f_y(x) | dx 
% &\leq \fp{ h^2 \phi_y}.
% \end{align*}
% where $\phi_y$ is the constant in Lemma \ref{lemma:bias_regular_density} for density $f_y$. 
we apply Lemma \ref{lemma:bias_regular_density} to obtain that 
\begin{equation}
\label{bound_bias2_conditional}
\int |\mathbb E (\hat{f}_{yh}  (x)| Y_{1:n}) - f_y(x) | dx  \leq \phi_y h^2  
\end{equation}
under Assumptions~\ref{cond:kernel}-\ref{cond:P_X_density2}, and we use Lemma \ref{lemma:technical_bias_general} to find 
\begin{equation}
\label{bound_bias_compact_conditional}
\int |\mathbb E (\hat{f}_{yh}  (x)| Y_{1:n}) - f_y(x) | dx \leq \psi_y h 
\end{equation}
under Assumptions~\ref{cond:kernel_compact}-\ref{cond:P_X_density1_compact} when $h\leq r_0$. Combining~\eqref{bound_intermediate_conditional},~\eqref{bound_variance_conditional},~\eqref{bound_bias2_conditional} and~\eqref{bound_bias_compact_conditional}, we have therefore shown that the event 
\begin{align*}
    E = \left\{ \int |\hat f_{yh} (x) - f_y(x)| dx \right. &\leq {c_{y,d,\varepsilon}} (1 + h^{(d+\varepsilon)/2})  \sqrt{\frac{M_0(K^2)}{n_y h^d}} + \sqrt{\frac{2 \log(1/\delta)}{n_y}} \\
    &\left. + \left| \! \! \begin{array}{l} \phi_y h^2 \ \mbox{ under Assumptions } \ref{cond:kernel} \mbox{ and } \ref{cond:P_X_density2} \\ \psi_y h \ \mbox{ if } h\leq r_0 \ \mbox{ under Assumptions } \ref{cond:kernel_compact} \mbox{ and } \ref{cond:P_X_density1_compact} \end{array} \right. \right\}
\end{align*}
has probability at least $1-\delta$ conditionally on $Y_{1:n}$ and when $n_y>0$, which is the result. %that is, $\mathbb P ( E^c | Y_{1:n} )\mathrm 1_{\{n_y>0\}} \leq \delta$. Taking the expectation yields $\mathbb P ( E^c ) \leq \delta + {\mathbb P (n_y = 0)}\leq 2\delta$, which is the announced result.
\end{proof}

\begin{proposition}[$L^1-$bound on kernel density estimators based on \textsc{Kdeo}] 
\label{th:smote_kernel_smooth}
%Let $\delta\in (0,1/3)$. Suppose that $np_1 > \log( 1/\delta)$. We have with probability at least $1-3\delta$ that
Let $\delta\in (0,1/2)$. Then, on the event $\{ n_1>0 \}$, we have 
\begin{align*}
    \mathbb{P}\left( \int | \hat f^*_{1s} (x) - f_1(x) | dx \right. &\leq {\hat{c}_{1,d,\varepsilon}} ( 1 + s^{(d+\varepsilon)/2}) \sqrt{ \frac{M_0(K^2)}{m s^d} } \\
    &+ {c_{1,d,\varepsilon}} (1 + h^{(d+\varepsilon)/2}) \sqrt{\frac{M_0(K^2)}{n_1 h^d}} \\
    &+ \sqrt{2 \log(1/\delta)} \left( \frac{1}{\sqrt{m}} + \frac{1}{\sqrt{n_1}} \right) \\
    &+ \left. \left\{ \! \! \begin{array}{l} \phi_1 (h^2 + s^2) \ \mbox{ under Assumptions } \ref{cond:kernel} \mbox{ and } \ref{cond:P_X_density2} \\ \psi_1 (h + s) \ \mbox{ if } h,s\leq r_0 \mbox{ under Assumptions } \ref{cond:kernel_compact} \mbox{ and } \ref{cond:P_X_density1_compact} \end{array} \right. \, \middle| \, Y_{1:n} \right) \\
    &\geq 1-2\delta
\end{align*}
where $\hat{c}_{1,d,\varepsilon} =  C_{d,\varepsilon}  ( 1 +\sqrt{M_{d+\varepsilon}(\hat{f}_{1h}+ K)})$ and with the notation of Proposition~\ref{th:kde_L1_rate2}. 
\end{proposition}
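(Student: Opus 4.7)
My plan is to mimic the structure of the proof of Proposition~\ref{th:kde_L1_rate2}, but since $\{X_{1i}^{*}\}_{1\leq i\leq m}$ is i.i.d.\ only \emph{conditionally} on $\mathcal{D}_n$ (with common density $\hat f_{1h}$), I will first condition on $\mathcal{D}_n$, treat the problem like an ordinary $L^1$-bound for a KDE based on a density $\hat f_{1h}$, and then integrate out the initial sample to reduce to a bound that only depends on $Y_{1:n}$. Throughout, I work on the event $\{n_1 > 0\}$ so that $\hat f_{1h}$ is a bona fide density.

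The first step is a bias-variance decomposition. Since $\mathbb{E}[\hat f_{1s}^{*}(x) \mid \mathcal{D}_n] = \hat f_{1h}\ast K_s (x)$, I write
\[
\int |\hat f_{1s}^{*}(x) - f_1(x)|\,dx \leq A + B_1 + B_2,
\]
with $A=\int |\hat f_{1s}^{*}(x) - \hat f_{1h}\ast K_s(x)|\,dx$, $B_1=\int |\hat f_{1h}\ast K_s(x) - f_1\ast K_s(x)|\,dx$, and $B_2=\int |f_1 \ast K_s(x) - f_1(x)|\,dx$. For $A$, I apply McDiarmid (Lemma~\ref{lemma:mcdiarmid}) conditionally on $\mathcal{D}_n$ to the functional $(X_{11}^{*},\dots,X_{1m}^{*})\mapsto \int |\hat f_{1s}^{*}(x) - \hat f_{1h}\ast K_s(x)|\,dx$; each bounded-difference constant is $2/m$, yielding a fluctuation of order $\sqrt{2\log(1/\delta)/m}$. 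For the conditional expectation, Jensen gives
\[
\mathbb{E}[A \mid \mathcal{D}_n] \leq \int \sqrt{\mathrm{Var}(\hat f_{1s}^{*}(x)\mid \mathcal{D}_n)}\,dx \leq \sqrt{\frac{M_0(K^2)}{m s^d}}\int \sqrt{\hat f_{1h}\ast \tilde K_s(x)}\,dx,
\]
with $\tilde K=K^2/M_0(K^2)$, exactly as in the proof of Proposition~\ref{th:kde_L1_rate2}. Applying Lemma~\ref{lemma:conv_kernel} with $f=\hat f_{1h}$ and $\tilde K$ produces the constant $\hat c_{1,d,\varepsilon}$ and the $(1+s^{(d+\varepsilon)/2})$ factor.

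For the bias terms, Young's convolution inequality gives
\[
B_1 = \int \left| (\hat f_{1h}-f_1)\ast K_s(x) \right| dx \leq \int |\hat f_{1h}(x) - f_1(x)|\,dx,
\]
and Proposition~\ref{th:kde_L1_rate2} applied to $y=1$ with bandwidth $h$ controls this by $c_{1,d,\varepsilon}(1+h^{(d+\varepsilon)/2})\sqrt{M_0(K^2)/(n_1 h^d)} + \sqrt{2\log(1/\delta)/n_1}$ plus either $\phi_1 h^2$ or $\psi_1 h$, with conditional probability at least $1-\delta$ given $Y_{1:n}$. For $B_2$, a direct application of Lemma~\ref{lemma:bias_regular_density} under Assumptions~\ref{cond:kernel}--\ref{cond:P_X_density2} gives $B_2\leq \phi_1 s^2$, while Lemma~\ref{lemma:technical_bias_general} under Assumptions~\ref{cond:kernel_compact}--\ref{cond:P_X_density1_compact} gives $B_2\leq \psi_1 s$ provided $s\leq r_0$.

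The final step is to combine the two high-probability statements by a union bound: the $A$-bound holds with conditional probability at least $1-\delta$ given $\mathcal{D}_n$ (hence also given $Y_{1:n}$ after integrating out $X_{1:n}$), and the $B_1$-bound holds with conditional probability at least $1-\delta$ given $Y_{1:n}$. Collecting terms yields the advertised inequality. The main subtlety I anticipate is not a difficult calculation but a conceptual one: the constant $\hat c_{1,d,\varepsilon}$ depends on $\hat f_{1h}$ and is therefore itself random given $Y_{1:n}$; care is needed to keep it inside the bound as a random quantity (rather than attempting to deterministically upper bound it), and to ensure that the event on which the $A$-bound holds is measurable with respect to $\mathcal{D}_n$ so that the union bound goes through cleanly.
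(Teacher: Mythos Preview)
Your proposal is correct and follows essentially the same approach as the paper: the paper decomposes into $\int|\hat f_{1s}^* - \mathbb{E}(\hat f_{1s}^*\mid\mathcal{D}_n)|\,dx + \int|\mathbb{E}(\hat f_{1s}^*\mid\mathcal{D}_n) - f_1|\,dx$, handles the first piece exactly via your McDiarmid-plus-Jensen-plus-Lemma~\ref{lemma:conv_kernel} argument (producing $\hat c_{1,d,\varepsilon}$), and then splits the second piece as your $B_1+B_2$, bounding $B_1$ by $\int|\hat f_{1h}-f_1|\,dx$ and invoking Proposition~\ref{th:kde_L1_rate2}, and $B_2$ via Lemma~\ref{lemma:bias_regular_density} or~\ref{lemma:technical_bias_general}. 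Your remark about $\hat c_{1,d,\varepsilon}$ being random and the need to integrate out $X_{1:n}$ to pass from conditioning on $\mathcal{D}_n$ to conditioning on $Y_{1:n}$ matches the paper's handling as well.
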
 
\begin{proof} {%Let $\mathcal F_ n$ be the $\sigma-$algebra generated by $(X_i,Y_i)_{1\leq i\leq n}$. 
Write the quantity of interest as
\begin{equation}
\label{eqn:L1_smote_conditioning}
\int | \hat f^*_{1s} (x) - f_1(x) | dx \leq \int | \hat f^*_{1s} (x) - \mathbb E ( \hat f^*_{1s} (x) | \mathcal{D}_n ) | dx + \int | \mathbb E ( \hat f^*_{1s} (x) | \mathcal{D}_n ) - f_1(x) | dx. 
\end{equation}
We control the two terms on the right-hand side separately. Define 
\[
T_{\mathcal{D}_n}(X_1^*,\ldots, X_m^*) =\int | \hat f^*_{1s} (x) - \mathbb E ( \hat f^*_{1s} (x) | \mathcal{D}_n ) | dx \, {\mathrm 1_{\{n_1>0\}}}
\]
and note that the function $T_{\mathcal{D}_n}$ satisfies the assumption of Lemma~\ref{lemma:mcdiarmid} with $C_i=(2/m){\mathrm 1_{\{n_1>0\}}}$. Since, given $\mathcal{D}_n$ and $\{ n_1>0 \}$, the $X_{1i}^*$ are i.i.d.~generated according to $\hat f_{1h}$, we can therefore apply the McDiarmid inequality, conditionally on $\mathcal{D}_n$, to obtain, with probability at least $1-\delta$ (conditionally on $\mathcal{D}_n$),
\begin{align*}
     \int | \hat f^*_{1s} (x) - \mathbb E ( \hat f^*_{1s} (x) | \mathcal{D}_n ) | dx &\leq \! \int \! \mathbb E \left[ \left|  \hat f^*_{1s} (x)  - \mathbb E ( \hat f^*_{1s} (x) | \mathcal{D}_n ) \right|  | \mathcal{D}_n  \right] \! dx+ \sqrt{\frac{2\log( 1/\delta) }{m}} {\mathrm 1_{\{n_1>0\}}}.
\end{align*}  
Then the Jensen inequality gives that, with probability at least $1-\delta$ (conditionally on $\mathcal{D}_n$),
\begin{align*}
     \int | \hat f^*_{1s} (x) - \mathbb E ( \hat f^*_{1s} (x) | \mathcal{D}_n ) | dx 
    &\leq \int \sqrt{ \mathbb E \left[ \left( \hat f^*_{1s} (x) - \mathbb E ( \hat f^*_{1s} (x) | \mathcal{D}_n ) \right)^2  | \mathcal{D}_n\right] } dx\\ &+ \sqrt{\frac{2\log( 1/\delta) }{m}} {\mathrm 1_{\{n_1>0\}}}.
\end{align*}  
{Now we investigate the behavior of the first term in the right-hand side on the event $\{ n_1>0 \}$.} We have 
% note that since the $X_i^*$ are independent and identically distributed with common density $\hat f_{yh}$ given $\mathcal{F}_n$, one has $\mathbb E ( \hat f^*_{ys} (x)  | \mathcal F_n ) =  \int K_s( x-z ) \hat f_{yh} (z) dz $
%
\begin{align*}
    \mathbb E \left[ \left( \hat f^*_{1s} (x)  - \mathbb E ( \hat f^*_{1s} (x)   | \mathcal{D}_n ) \right)^2  | \mathcal{D}_n\right] = \frac{1}{m} \operatorname{Var} ( K_{s}  (x - X_{11}^*) | \mathcal{F}_n) &\leq \frac{1}{m} \mathbb E ( K_{s}^2  (x - X_{11}^*) | \mathcal{F}_n) \\
    &= \frac{M_0(K^2)}{m s^d} \hat{f}_{1h} \ast \tilde{K}_s (x) 
\end{align*}
with $\tilde K = K^2 / M_0(K^2)$. Using Lemma \ref{lemma:conv_kernel} and the previous probability bound, we get that the event 
\[
E_1 = \left\{ \int | \hat f^*_{1s} (x) - \mathbb E ( \hat f^*_{1s} (x) | \mathcal{D}_n ) | dx \leq {\hat{c}_{1,d,\varepsilon}} ( 1 + s^{(d+\varepsilon)/2}) \sqrt{ \frac{M_0(K^2)}{m s^d} } + \sqrt{\frac{2\log( 1/\delta)}{m}} \right\}
\]
%with probability at least $1-\delta$ (conditionally on $\mathcal{D}_n$ and given that $n_1>0$),
satisfies $\mathbb P (E_1^c| \mathcal{D}_n) \mathrm 1_{\{ n_1>0\}} \leq \delta$, and then, integrating out the conditional expectation with respect to $X_1,\ldots,X_n$, 
\begin{equation}
\label{eqn:L1_smote_L1error}
%\int | \hat f^*_{1s} (x) - \mathbb E ( \hat f^*_{1s} (x) | \mathcal{D}_n ) | dx \leq {\hat{c}_{1,d,\varepsilon}} ( 1 + s^{(d+\varepsilon)/2}) \sqrt{ \frac{M_0(K^2)}{m s^d} } + \sqrt{\frac{2\log( 1/\delta)}{m}}.
\mathbb P (E_1^c| Y_{1:n}) \mathrm 1_{\{ n_1>0\}} \leq \delta. 
\end{equation}  
%
%\fp{The above event, say $E$, is therefore such that $ \mathbb P (E^c| \mathcal{D}_n) \mathrm 1_{\{ n_1>0\}} \leq \delta $. As a consequence, $\mathbb P(E^c) \leq 2\delta$ whenever $\mathbb P (n_1=0) \leq \delta$, which is satisfied as $np>\log(1/\delta)$.} 
To control the second term in~\eqref{eqn:L1_smote_conditioning}, write 
\begin{align*}
\int | \mathbb E ( \hat f^*_{1s} (x) | \mathcal{D}_n ) - f_1(x) | dx &= \int | \hat{f}_{1h} \ast K_s (x) - f_1(x) | dx \\
    &\leq \int | (\hat{f}_{1h} - f_1) \ast K_s (x) | dx + \int | f_1 \ast K_s (x) - f_1(x) | dx \\
    &\leq \int | \hat{f}_{1h}(x) - f_1(x) | dx + \int | f_1 \ast K_s (x) - f_1(x) | dx. 
\end{align*}
It was shown at the end of Proposition~\ref{th:kde_L1_rate2} that the event 
\begin{align*}
    \left\{ \int |\hat f_{1h} (x) - f_1(x)| dx \right. &\leq {c_{1,d,\varepsilon}} (1 + h^{(d+\varepsilon)/2}) \sqrt{\frac{M_0(K^2)}{n_1 h^d}} + \sqrt{\frac{2 \log(1/\delta)}{n_1}} \\
    &\left. + \left| \! \! \begin{array}{l} \phi_1 h^2 \ \mbox{ under Assumptions } \ref{cond:kernel} \mbox{ and } \ref{cond:P_X_density2} \\ \psi_1 h \ \mbox{ if } h\leq r_0 \ \mbox{ under Assumptions } \ref{cond:kernel_compact} \mbox{ and } \ref{cond:P_X_density1_compact} \end{array} \right. \right\}
\end{align*}
has probability at least $1-\delta$ conditionally on $Y_{1:n}$ and when $n_1>0$. %Combine Proposition~\ref{th:kde_L1_rate2}, on the one hand, and 
This statement with Lemma~\ref{lemma:bias_regular_density} under Assumptions~\ref{cond:kernel}-\ref{cond:P_X_density2}, or Lemma \ref{lemma:technical_bias_general} under Assumptions~\ref{cond:kernel_compact}-\ref{cond:P_X_density1_compact} when $s\leq r_0$, allows to obtain that the event 
\begin{align*}
\nonumber
    E_2 = \left\{ \int | \mathbb E ( \hat f^*_{1s} (x) | \mathcal{D}_n ) - f_1(x) | dx \right. &\leq {c_{1,d,\varepsilon}} (1 + h^{(d+\varepsilon)/2})  \sqrt{\frac{M_0(K^2)}{n_1 h^d}} + \sqrt{\frac{2 \log(1/\delta)}{n_1}} \\
    &+ \left. \left| \! \! \begin{array}{l} \phi_1 (h^2 + s^2) \ \mbox{ under Assumptions } \ref{cond:kernel} \mbox{ and } \ref{cond:P_X_density2} \\ \psi_1 (h + s) \ \mbox{ if } h,s\leq r_0 \mbox{ under Assumptions } \ref{cond:kernel_compact} \mbox{ and } \ref{cond:P_X_density1_compact} \end{array} \right. \right\}
\end{align*}
satisfies
\begin{equation}
\label{eqn:L1_smote_bias}
%\int | \hat f^*_{1s} (x) - \mathbb E ( \hat f^*_{1s} (x) | \mathcal{D}_n ) | dx \leq {\hat{c}_{1,d,\varepsilon}} ( 1 + s^{(d+\varepsilon)/2}) \sqrt{ \frac{M_0(K^2)}{m s^d} } + \sqrt{\frac{2\log( 1/\delta)}{m}}.
\mathbb P (E_2^c| Y_{1:n}) \mathrm 1_{\{ n_1>0\}} \leq \delta. 
\end{equation}  
A combination of~\eqref{eqn:L1_smote_conditioning},~\eqref{eqn:L1_smote_L1error} and~\eqref{eqn:L1_smote_bias} yields 
\[
\mathbb P ( E_1^c \cup E_2^c | Y_{1:n}) \mathrm 1_{\{ n_1>0\}} \leq \mathbb P (E_1^c| Y_{1:n}) \mathrm 1_{\{ n_1>0\}} + \mathbb P (E_2^c| Y_{1:n}) \mathrm 1_{\{ n_1>0\}} \leq 2\delta.
\]
% \[
% \mathbb P ( E_1^c \cup E_2^c ) \leq \mathbb{E}( \mathbb P (E_1^c| Y_{1:n}) \mathrm 1_{\{ n_1>0\}} ) + \mathbb{E}( \mathbb P (E_2^c| Y_{1:n}) \mathrm 1_{\{ n_1>0\}} ) + \mathbb P (n_1 = 0)\leq 3\delta.
% \]
The result follows immediately.} 
% of the first inequality. To prove the second one in the specific case when $K$ is the Gaussian kernel, note that in this setting
% \[
% \mathbb E ( \hat f^*_{ys} (x)  | \mathcal F_n ) = \frac{ \sum_{i=1} ^ n \ind \{ Y_i =  y \} (K_{h} \ast K_s) (x-X_i)}{\sum_{i=1} ^ n \ind\{ Y_i =  y \}} = \frac{ \sum_{i=1} ^ n \ind \{ Y_i =  y \} K_b(x-X_i)}{\sum_{i=1} ^ n \ind\{ Y_i =  y \}} 
% \]
% with $b = \sqrt{h^2+s^2}$. It follows that $\mathbb E ( \hat f^*_{ys} (x) | \mathcal F_n ) = \hat f_{yb} (x)$ and then, by~Theorem \ref{th:kde_L1_rate2}, with probability $1-\delta$, 
% %
% \begin{align*}
%     &\int | \mathbb E ( \hat f^*_{ys} (x) | \mathcal F_n ) - f_y(x) | dx \\
%     &= \int | \hat f_{yb} (x) - f_y(x) | dx \\
%     &\leq c_y (1 + (h^2+s^2)^{(d+\varepsilon)/4})  \sqrt{\frac{(2\sqrt{\pi})^{-d}}{n_y (h^2+s^2)^{d/2}}} + \sqrt{\frac{2 \log(1/\delta)}{n_y}} \\
%     &+ \left\{ \! \! \begin{array}{l} \phi_y (h^2 + s^2) \ \mbox{ under } \ref{cond:kernel} \mbox{ and if } \mathbb{P}_y \mbox{ satisfies } \ref{cond:P_X_density2} \\ \psi_y \sqrt{h^2 + s^2} \ \mbox{ if } \sqrt{h^2 + s^2}\leq r_0 \mbox{ under } \ref{cond:kernel_compact} \mbox{ and if } \mathbb{P}_y \mbox{ satisfies } \ref{cond:P_X_density1_compact} \end{array} \right.
% \end{align*}
% %
% because $M_0(K^2)= (2\pi)^{-d} \int \exp(-\| u \|_2^2) du=1/(2\sqrt{\pi})^d$. Use this inequality instead of~\eqref{eqn:L1_smote_bias} to complete the proof.
%
\end{proof}

\noindent \textit{End of the proof of Theorem~\ref{th:risk_bound_smote_1}.} 
%of Theorem~\ref{th:risk_bound_smote_1}
%
Since $m=n_0\mathrm 1_{\{n_1>0\}}$, we clearly have $\hat \eta^*(x) >1/2$ if and only if $\hat f_{1s}^{*} (x) > \hat f_{0s} (x)$.
% $$ \hat f_{1s}^{*} (x) / \hat f_{0s} (x)  > 1$$
% where $\hat f_{1s}^{*} $ is the kernel density estimate of $X$ from class $1$ based on synthetic data and  $\hat f_{0s} $ is the kernel density estimate of $X$ from class $0$ based on initial data. This is because $\hat \eta(x) >1/2 $ is equivalent to $\hat \eta(x)  > 1-\hat \eta(x) $. In addition, we have
% $$ \f\mathbb{P}\left( rac{\hat \eta(x)}{1-\hat \eta(x) }= \frac{
% \sum_{i=1} ^{m }  K_{s} (x- X_i^{*}  )}{\sum_{i=1} ^{n_0}  K_{s} (x- X_{0i}  )}= \frac{m 
% \hat f_{1s}^{*} (x) }{n_0\hat f_{0s} (x)  } = \frac{ 
% \hat f_{1s}^{*} (x) }{\hat f_{0s} (x)  }  $$
Hence we can apply Theorem \ref{th:risk_bound_1} to obtain
\[
R_{1/2} (\hat g^*) - R_{1/2} (g) \leq \frac 1 2 \int | \hat f_{0s} (x)  - f_0(x) | dx + \frac 1 2 \int | \hat f_{1s}^{*} (x)  - f_1(x) | dx.
\]
Using finally Proposition~\ref{th:kde_L1_rate2} to control the first term and Proposition~\ref{th:smote_kernel_smooth} to control the second one, we conclude that %with probability at least $1-5\delta$,
\begin{align*}
    \mathbb{P}\Bigg( & R_{1/2} (\hat g^*) - R_{1/2} (g) \\
    % &\leq  \hat{c}_1 ( 1 + s^{(d+\varepsilon)/2}) \sqrt{ \frac{M_0(K^2)}{m s^d} } + c_1 (1 + h^{(d+\varepsilon)/2}) \sqrt{\frac{M_0(K^2)}{n_1 h^d}} \\
    % &+ \sqrt{2 \log(1/\delta)} \left( \frac{1}{\sqrt{m}} + \frac{1}{\sqrt{n_1}} \right) \\
    % &+ \left\{ \! \! \begin{array}{l} \phi_y (h^2 + s^2) \ \mbox{ under } \ref{cond:kernel} \mbox{ and if } \mathbb{P}_y \mbox{ satisfies } \ref{cond:P_X_density2} \\ \psi_y (h + s) \ \mbox{ if } h,s\leq r_0 \mbox{ under } \ref{cond:kernel_compact} \mbox{ and if } \mathbb{P}_y \mbox{ satisfies } \ref{cond:P_X_density1_compact} \end{array} \right.\\
    %  &+c_0 (1 + h^{(d+\varepsilon)/2})  \sqrt{\frac{M_0(K^2)}{n_0 h^d}} + \sqrt{\frac{2 \log(1/\delta)}{n_0}} \\
    % &+ \left\{ \! \! \begin{array}{l} \phi_y h^2 \ \mbox{ under } \ref{cond:kernel} \mbox{ and if } \mathbb{P}_y \mbox{ satisfies } \ref{cond:P_X_density2} \\ \psi_y h \ \mbox{ if } h\leq r_0 \mbox{ under } \ref{cond:kernel_compact} \mbox{ and if } \mathbb{P}_y \mbox{ satisfies } \ref{cond:P_X_density1_compact} \end{array} \right.
    &\! \! \! \! \leq {\frac{{c_{0,d,\varepsilon}}}{2}} (1 + s^{(d+\varepsilon)/2}) \sqrt{\frac{M_0(K^2)}{n_0 s^d}} + {\frac{{c_{1,d,\varepsilon}}}{2}} (1 + h^{(d+\varepsilon)/2}) \sqrt{\frac{M_0(K^2)}{n_1 h^d}} \\
    &\! \! \! \! + {\frac{{\hat{c}_{1,d,\varepsilon}}}{2}} (1 + s^{(d+\varepsilon)/2}) \sqrt{\frac{M_0(K^2)}{n_0 s^d}} + \sqrt{2 \log(1/\delta)} \left( {\frac{1}{\sqrt{n_0}} + \frac{1}{2\sqrt{n_1}}} \right) \\
    &\! \! \! \! + {\frac{1}{2}} \left\{ \! \! \begin{array}{l} \phi_0 s^2 + \phi_1 (h^2 + s^2) \ \mbox{ under Assumptions } \ref{cond:kernel} \mbox{ and } \ref{cond:P_X_density2} \\ \psi_0 s + \psi_1 (h + s) \ \mbox{ if } h,s\leq r_0 \mbox{ under Assumptions } \ref{cond:kernel_compact} \mbox{ and } \ref{cond:P_X_density1_compact} \end{array} \right. \Bigg| \, Y_{1:n} \Bigg) \mathrm{1}_{\{n_0n_1>0\}}\geq 1-3\delta.
\end{align*}
Rearrange the above upper bound (taken to be infinite on the event $\{ n_0n_1=0 \}$, and hence also valid on this event) and integrate out the conditional expectation given $Y_{1:n}$ to complete the proof.
\qed
\section{Analysis of the kernel smoothing plug-in rule}
\label{app:remarksmoothing}

Let us finally highlight that we get, as a direct byproduct of Theorem~\ref{th:risk_bound_1} and Proposition~\ref{th:kde_L1_rate2}, the following bound on the risk of the kernel discrimination rule 
\[
\hat{g}(x) = \hat g_h(x) = \mathrm{1}_{ \{  \hat f_{1h}(x)  >  \hat f_{0h}(x)   \} }
\]
based on the initial data only. This will be used for comparison purposes with Theorem~\ref{th:risk_bound_smote_1} in Remark~\ref{rmk:comparison}. 
\begin{proposition} 
\label{th:kbc_L1}
Let $\delta\in (0,1/2)$. % and suppose that $n \min(p_1,1-p_1) > \log( 1/\delta)$. We have with probability at least $1-4\delta$ that
Then, with probability at least $1-2\delta$,
\begin{align*}
R_{1/2}(\hat g) - R_{1/2}(g) &\leq \frac{\sqrt{M_0(K^2)}}{2} \left( \frac{c_{0,d,\varepsilon}}{\sqrt{n_0 h^d}} + \frac{c_{1,d,\varepsilon}}{\sqrt{n_1 h^d}} \right) (1 + h^{(d+\varepsilon)/2}) \\
    &+ \sqrt{\frac{\log(1/\delta)}{2}} \left( \frac{1}{\sqrt{n_0}} + \frac{1}{\sqrt{n_1}} \right) \\
    &+ \frac{1}{2} \left\{ \! \! \begin{array}{l} (\phi_0+\phi_1) h^2 \ \mbox{ under Assumptions } \ref{cond:kernel} \mbox{ and } \ref{cond:P_X_density2} \\ (\psi_0+\psi_1) h \ \mbox{ if } h\leq r_0 \mbox{ under Assumptions } \ref{cond:kernel_compact} \mbox{ and } \ref{cond:P_X_density1_compact} \end{array} \right.
\end{align*}
with the notation of Proposition~\ref{th:kde_L1_rate2}, where the upper bound is taken to be infinite when $n_0n_1=0$.
\end{proposition}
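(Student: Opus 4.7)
The plan is to combine Theorem~\ref{th:risk_bound_1} with Proposition~\ref{th:kde_L1_rate2} applied separately to each class, and then to reconcile the conditional probability statements via a union bound. The argument is essentially a simplification of the proof of Theorem~\ref{th:risk_bound_smote_1}, since here both densities are estimated directly from the initial data without any intermediate synthetic sampling step.

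First, I would observe that, by definition, $\hat g_h(x) = \mathrm{1}_{\{\hat f_{1h}(x) > \hat f_{0h}(x)\}}$ is of the form required by Theorem~\ref{th:risk_bound_1}. Applying that theorem with $\hat f_y = \hat f_{yh}$ yields, deterministically,
\[
R_{1/2}(\hat g_h) - R_{1/2}(g) \leq \tfrac{1}{2}\int_S |\hat f_{1h}(x) - f_1(x)|\,dx + \tfrac{1}{2}\int_S |\hat f_{0h}(x) - f_0(x)|\,dx,
\]
where $S = S_0 \cup S_1$. Extending the integrals to $\mathbb{R}^d$ can only enlarge the right-hand side, so it suffices to bound the two $L^1$-errors.

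Next, I would invoke Proposition~\ref{th:kde_L1_rate2} once with $y=0$ and once with $y=1$. Each application yields an event of conditional probability at least $1-\delta$ given $Y_{1:n}$ on the respective events $\{n_0>0\}$ and $\{n_1>0\}$, on which the corresponding $L^1$-error is bounded by a variance term $c_{y,d,\varepsilon}\sqrt{M_0(K^2)/(n_y h^d)}(1+h^{(d+\varepsilon)/2})$, a McDiarmid fluctuation term $\sqrt{2\log(1/\delta)/n_y}$, and a bias term ($\phi_y h^2$ or $\psi_y h$ depending on which set of regularity assumptions is in force). A union bound then shows that the intersection of the two good events has conditional probability at least $1-2\delta$ on $\{n_0 n_1 > 0\}$. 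Summing the two bounds and dividing by two gives exactly the claimed inequality, where the factor $\sqrt{\log(1/\delta)/2}$ arises from $\tfrac{1}{2}\cdot\sqrt{2\log(1/\delta)}$.

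The only slightly delicate point — and the main bookkeeping obstacle — is unconditionalizing the probability statement while accommodating the convention that the bound is $+\infty$ on $\{n_0 n_1 = 0\}$. Following the strategy used at the end of the proof of Theorem~\ref{th:risk_bound_smote_1}, I would write the bad event as $\{R_{1/2}(\hat g_h) - R_{1/2}(g) > t\}\cap\{n_0 n_1 > 0\}$ (it being empty on $\{n_0 n_1 = 0\}$ by convention), apply the conditional union bound above to get $\mathbb{P}(\,\cdot\,|\,Y_{1:n})\mathrm{1}_{\{n_0 n_1 > 0\}} \leq 2\delta$, and finally take expectations in $Y_{1:n}$ to conclude. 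No new Rademacher or nearest-neighbor machinery is needed, since all the heavy lifting is already encapsulated in Theorem~\ref{th:risk_bound_1} and Proposition~\ref{th:kde_L1_rate2}.
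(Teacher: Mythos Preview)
Your proposal is correct and matches the paper's approach exactly: the paper states Proposition~\ref{th:kbc_L1} as ``a direct byproduct of Theorem~\ref{th:risk_bound_1} and Proposition~\ref{th:kde_L1_rate2}'' without further detail, and your sketch spells out precisely that combination (apply Theorem~\ref{th:risk_bound_1}, then Proposition~\ref{th:kde_L1_rate2} once per class, union-bound, and unconditionalize with the $+\infty$ convention on $\{n_0 n_1=0\}$). The arithmetic you note for the factor $\sqrt{\log(1/\delta)/2}=\tfrac{1}{2}\sqrt{2\log(1/\delta)}$ is also correct.
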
 
%
%\begin{proof} Combine Theorem \ref{th:risk_bound_1} and Proposition~\ref{th:kde_L1_rate2}.
%
%\end{proof}
%

\section{Additional numerical results}
\label{app:numerical}

\subsection{Algorithms for oversampling methods}
%\subsubsection{kNN SMOTE}
%kNN SMOTE KDE SMOTE CV for both.
%The Synthetic Minority Over-sampling Technique (SMOTE) is proposed by \citet{chawla2002smote} 
%  As discussed in Section~\ref{subsec:smote-detail} of the main document, \textsc{Smote} addresses class imbalance by generating synthetic samples for the minority class.
% Thus, \textsc{Smote} synthesizes minority samples by placing them randomly between existing samples, using nearest neighbors, and expanding the decision space. 
Algorithm~\ref{alg:SMOTE} explains the different steps of the \textsc{Smote} algorithm for generating synthetic samples while highlighting the different hyperparameters involved. %; more theoretical justifications can be found in Section~\ref{subsec:smote-detail}.  

  \begin{algorithm}
\begin{algorithmic}[1] 
\Statex{\textbf{Input:} Samples $\{X_{11}, \ldots, X_{1n_1}\}\subset \mathbb{R}^{d}$, number of nearest neighbors $k\in\{0, 1,\ldots, n_1-1\}$, and number of synthetic samples $m$}
   \For{each $i= 1,\ldots , m$} 
\Statex \hspace{\algorithmicindent}{Generate $\tilde{X}_{1i}$ uniformly among  \( \{X_{1i}\}_{1\leq i\leq n_1} \).} 
% $\{X_1^{(1)}, \ldots, X_{n_1} ^{(1)}\} $}
   % \For{each of $N_k(X_i)\in \mathcal{I}_{NN}$ uniformly.}
   \Statex \hspace{\algorithmicindent}{If $n_1>1$ and $k>0$, generate $\overline{X}_{1i}$ uniformly among the $k$ nearest neighbors to $\tilde{X}_{1i}$ in the minority class deprived of $\tilde{X}_{1i}$}. Else, do $\overline{X}_{1i} = \tilde{X}_{1i}$.
   %among minority class samples.}
   %(or full sample for SMOTE2)}
\Statex \hspace{\algorithmicindent}{$X_{1i}^{*} = (1-\lambda) \tilde{X}_{1i} + \lambda \overline X_{1i}$ ,
where \( \lambda \sim \mathcal{U}[0,1] \).}
\EndFor
\State{Return $  (X_{1i}^{*},\ldots, X_{1m}^{*})$}
\end{algorithmic}
\caption{\textsc{Smote}}
\label{alg:SMOTE}
\end{algorithm}

\begin{algorithm}
\caption{\textsc{Kdeo}}
\label{alg:KDE_SMOTE}
\begin{algorithmic}[1]
\Statex \textbf{Input:} Samples $\{X_{11}, \ldots, X_{1{n_1} } \}\subset \mathbb{R}^{d}$, number of synthetic samples $m$
%\State Extract minority class samples: $\mathcal{X}_{\text{min}} = \{X_j : Y_j = 1\}$, with $n_{\text{min}} = |\mathcal{X}_{\text{min}}|$
\State Compute the (empirical) covariance matrix $S$ of minority samples and set the bandwidth matrix $H_1$ to satisfy $H_1^2 = n_{1}^{-2/(d+4)} S$ (Scott's rule of thumb). 
\For{each $i= 1,\ldots , m$} 
    \Statex 
    % $X^{(1)*}_i$ uniformly among $\{X_1^{(1)}, \ldots, X_{n_1} ^{(1)}\}$
    \hspace{\algorithmicindent}{Generate $\tilde{X}_{1i}$ uniformly among  \( \{X_{1i}\}_{1\leq i\leq n_1} \).}
    \Statex \hspace{\algorithmicindent}{Generate $W_i \sim \mathcal{N}(0, I_p)$.}
    \Statex \hspace{\algorithmicindent}{Generate $X_{1i}^{*} = \tilde{X}_{1i} + H_1 W_i$.}
\EndFor
 \State{Return $  (X_{1i}^{*},\ldots, X_{1m}^{*})$}
\end{algorithmic}
\end{algorithm}

\begin{figure}[h]
	\begin{subfigure}{.5\textwidth}
		\centering
		% include first image
\includegraphics[width=1\linewidth, height=0.20\textheight]{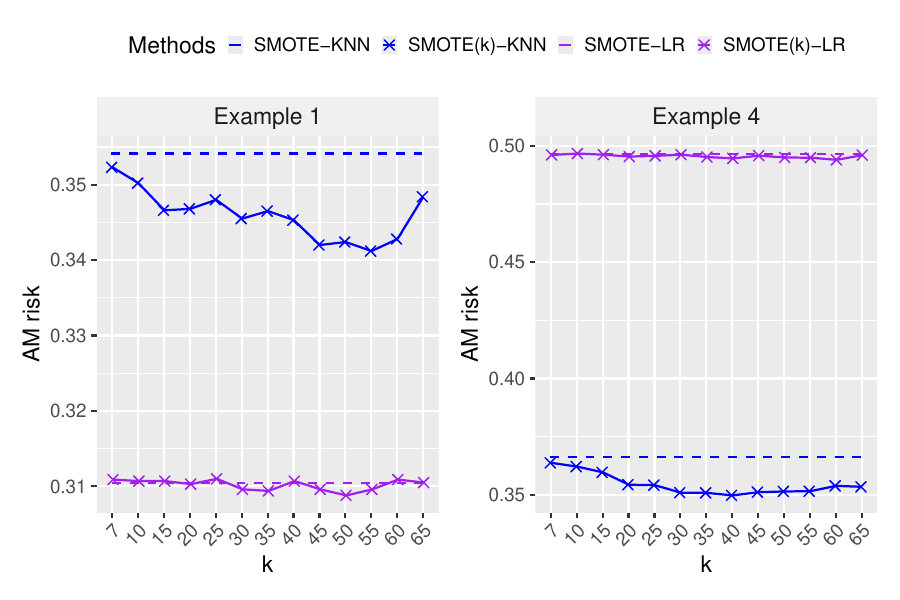}  
		% \caption{Example 1 with $p=4$}
		%\label{fig:sub-first}
	\end{subfigure}
	\begin{subfigure}{.5\textwidth}
		\centering
		% include second image
		\includegraphics[width=1\linewidth, height=0.20\textheight]{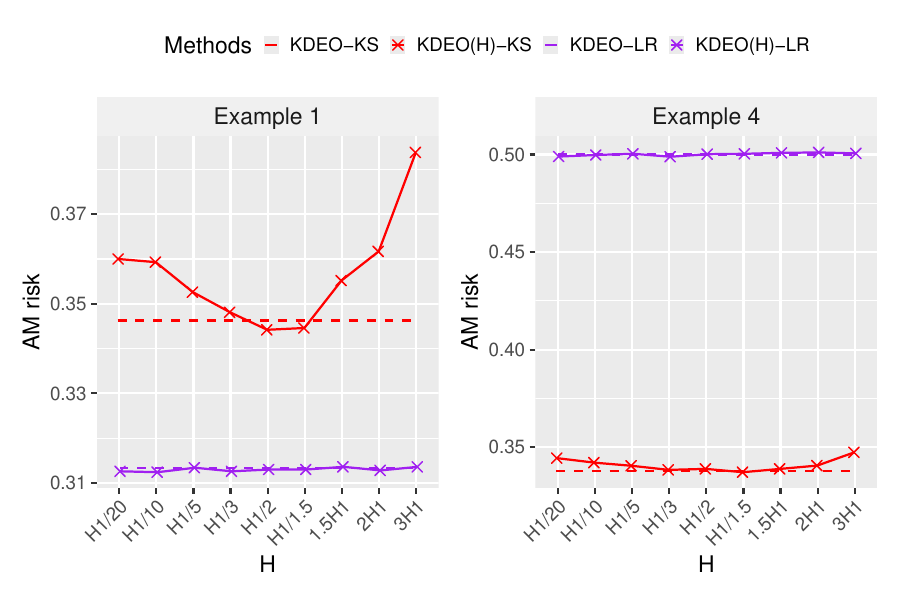}  
		% \caption{Example 2 with $p=4$}
		%\label{fig:sub-second}
	\end{subfigure}
	\caption{ Average AM-risk of KNN, KS, and LR classifiers on balanced data over 50 replications. \textit{Left:} using \textsc{Smote} and \textsc{Smote($k$)} with $k \in (7, 65)$. 
\textit{Right:} using \textsc{Kdeo} and  \textsc{Kdeo}\((H)\), with \( H=cH_1 \) and $c$ ranging in \((1/20, 3)\) where $H_1$ follows from Scott’s rule.
 }
	\label{fig:bandwidth-check1-Sup}
\end{figure}

\begin{figure}[t]
	\begin{subfigure}{.5\textwidth}
		\centering
		% include first image
\includegraphics[width=1\linewidth, height=0.25\textheight]{Figures-new/AM-Exm1-Fig2TL.pdf}  
		% \caption{Example 1 with $p=4$}
		%\label{fig:sub-first}
	\end{subfigure}
    \begin{subfigure}{.5\textwidth}
		\centering
\includegraphics[width=1\linewidth, height=0.25\textheight]{Figures-new/AM-Exm4-Fig2BR.pdf}  
	\end{subfigure}
     \begin{subfigure}{.5\textwidth}
		\centering
\includegraphics[width=1\linewidth, height=0.25\textheight]{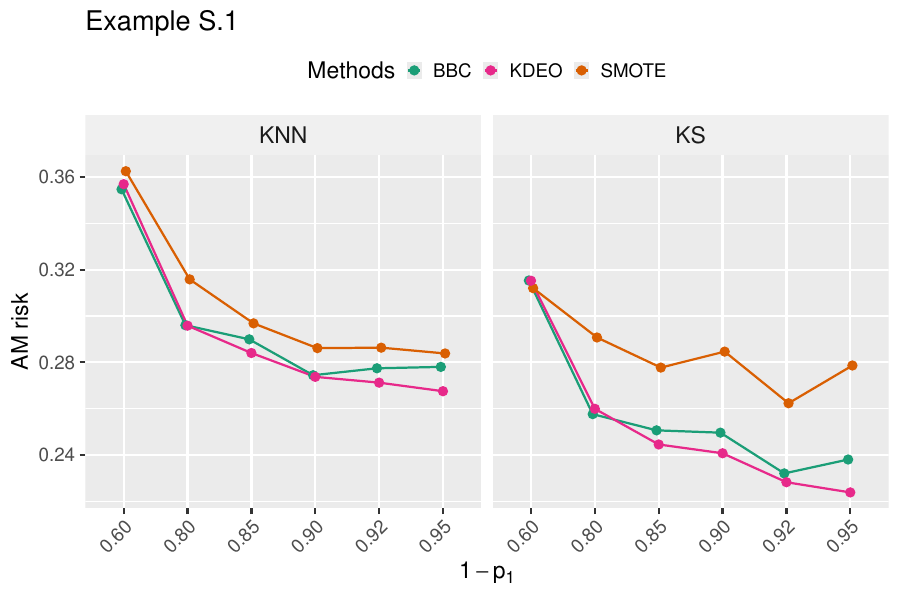}  
	\end{subfigure}

	\caption{Average AM-risk across different data imbalance regimes for the KS~(described in Section~\ref{sec:main:excessrisk}) and $K$NN classification rules computed over 50 replications.
    %described in Sections~\ref{sec:main:excessrisk} and~\ref{Methods+variants}, 
    %KDE(L) denotes the KDE-based oversampling with bandwidth set through Scott's rule of thumb, KDE(S) denotes the KDE-based oversampling with a smaller bandwidth set $1/10$ of Scott's rule of thumb. \textsc{Smote(S)} represents the \textsc{Smote} with $k=5$ nearest neighbors while \textsc{Smote(L)} represents the \textsc{Smote} with \(k = n_1^{4/(d+4)}\) where \(n_1\) is the number of minority class instances and \(d\) is the feature dimension.
    }
	\label{fig:sim-models-suppp1}
\end{figure}

\begin{figure}[t]
 	\begin{subfigure}{.5\textwidth}
 		\centering
% 		% include first image
 \includegraphics[width=1\linewidth, height=0.30\textheight]{Figures-new/AM-Exm1-CV-Fig3TL.pdf}  
% 		% \caption{Example 1 with $p=4$}
% 		%\label{fig:sub-first}
	\end{subfigure}
	 \begin{subfigure}{.5\textwidth}
	 	\centering
	% 	% include second image
	 	\includegraphics[width=1\linewidth, height=0.30\textheight]{Figures-new/AM-Exm4-CV-Fig3BR.pdf}  
 \end{subfigure}

  \begin{subfigure}{.5\textwidth}
	 	\centering
	% 	% include second image
	 	\includegraphics[width=1\linewidth, height=0.30\textheight]{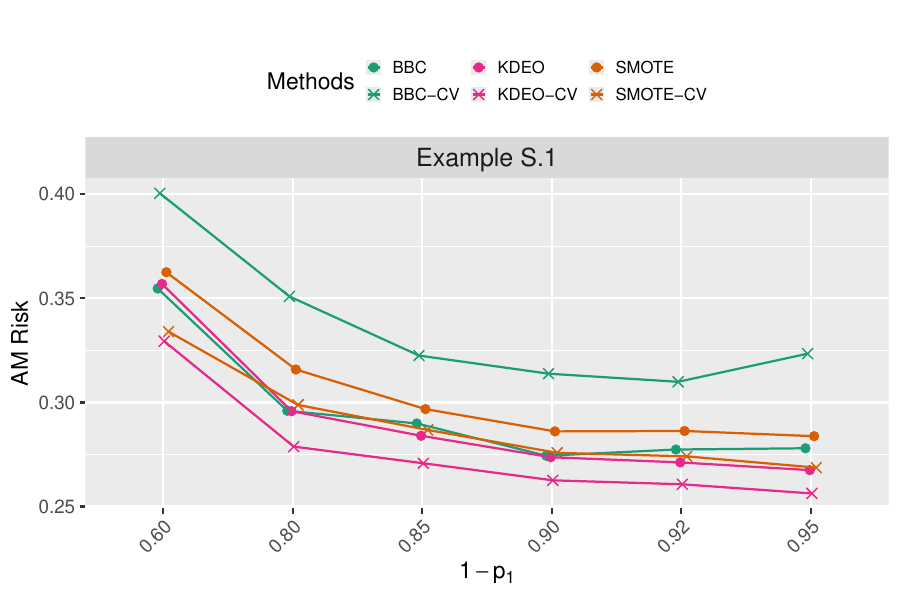}  
 \end{subfigure}
	\caption{Average AM-risk across different data imbalance regimes for the $K$NN methods described in Section~\ref{Methods+variants}, computed over 50 replications.
    %KDE(L) denotes the KDE-based oversampling with bandwidth set through Scott's rule of thumb, KDE(S) denotes the KDE-based oversampling with a smaller bandwidth set $1/10$ of Scott's rule of thumb. \textsc{Smote(S)} represents the \textsc{Smote} with $k=5$ nearest neighbors while \textsc{Smote(L)} represents the \textsc{Smote} with \(k = n_1^{4/(d+4)}\) where \(n_1\) is the number of minority class instances and \(d\) is the feature dimension.
    }
	\label{fig:sim-models-supp}
\end{figure}

% %%%%%%%%%%%%

% \begin{figure}[t]
% \centering
% 	\begin{subfigure}{.5\textwidth}
% 		% include first image
% \includegraphics[width=1\linewidth, height=0.25\textheight]{Figures-new/AM-Exm-S1-sup.pdf}  
% 		% \caption{Example 1 with $p=4$}
% 		%\label{fig:sub-first}
% 	\end{subfigure}
% 	\caption{Average AM-risk across different data imbalance regimes for the $K$NN methods described in Section~\ref{Methods+variants} of the paper, computed over
% $R=50$ replications.  }
% 	\label{fig:sim-models-suppp1}
% \end{figure}

% \begin{figure}[t]
% \centering
% 	\begin{subfigure}{.5\textwidth}
% 		% include first image
% \includegraphics[width=1\linewidth, height=0.25\textheight]{Figures-new/AM-ExmS1-CV-sup.pdf}  
% 		% \caption{Example 1 with $p=4$}
% 		%\label{fig:sub-first}
% 	\end{subfigure}
%  %    \begin{subfigure}{.5\textwidth}
% 	% 	\centering
% 	% 	% include second image
% 	% 	\includegraphics[width=1\linewidth, height=0.30\textheight]{Figures/AM-ExmS.1-CV-SMOTE.pdf}  
% 	% \end{subfigure}
% 	\caption{Average AM-risk across different data imbalance regimes for the $K$NN methods %described in Section~\ref{Methods+variants} of the paper, 
%     computed over
% $R=50$ replications.  }
% 	\label{fig:sim-models-supp}
% \end{figure}

%In contrast, the KDE-based ove
KDE-based oversampling (\textsc{Kdeo}) is another option to tackle the imbalanced data in a similar spirit to \textsc{Smote}. 
In contrast with \textsc{Smote}, the \textsc{Kdeo} algorithm relies on a kernel function $K$ and a bandwidth matrix $H$ that governs the smoothness of the estimated density function
and the tradeoff between bias and variance. Since the parameter $H$ has a significant impact on the accuracy of KDE, %than the
%choice of kernel function itself. 
several methods have been developed to obtain an appropriate choice of $H$. %as for instance Scott's rule of thumb \citep{scott2015multivariate} or some other cross-validation based approach. 
Algorithm~\ref{alg:KDE_SMOTE} explains the steps \textsc{Kdeo} follows for oversampling in compliance with the bandwidth selected through the multivariate version of Scott’s rule of thumb \citep{scott2015multivariate}. %More theoretical details about KDE oversampling can be found in the main document Section~\ref {subsec:KDE_oversampling} of the paper.

\subsection{Simulated data}

We provide results linked to the analysis of Examples 1 and 4 in the main paper. Before that, we also introduce a further example, where again, the $\{(X_i, Y_i)\}_{1\leq i\leq n}$ are $n=1000$ i.i.d.~samples from the distribution of $(X, Y)$ and $\boldsymbol{e}_i$ is the $i$th vector in the canonical basis of $\mathbb{R}^d$.

\textbf{Example S.1:} Let $Z = \mathcal{B} Y_1 + (1 - \mathcal{B}) Y_2$, with $\mathcal{B} \sim \text{Bernoulli}(0.5)$, $Y_1$ is an extended generalized Pareto random variable, i.e.~$Y_1 \sim \text{EGPD}(\kappa(X), \sigma(X), 0.5)$ and $Y_2 \sim \text{Exp}(10\gamma(X))$, where $\kappa(X) = \exp(X^\top \boldsymbol{e}_1)$, $\sigma(X) = \exp(X^\top \boldsymbol{e}_2)$ and $\gamma(X) = \exp(X^\top \boldsymbol{e}_3)$. Define $Y = \mathrm{1}_{ \{ Z>t \} }$ with $t$ tuning class imbalance.

\begin{figure}[ht]
	\begin{subfigure}{.5\textwidth}
		\centering
		% include first image
\includegraphics[width=1\linewidth, height=0.20\textheight]{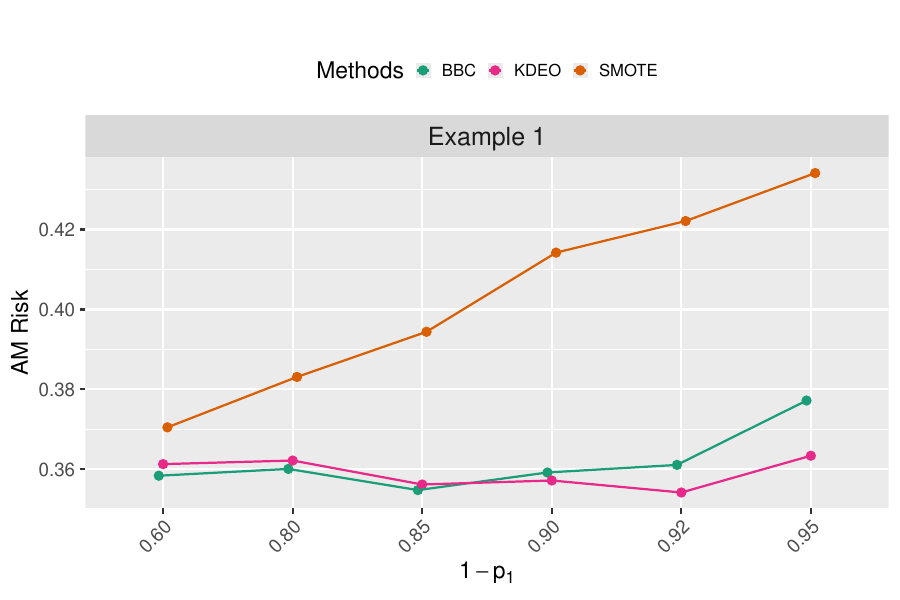}  
		% \caption{Example 1 with $p=4$}
		%\label{fig:sub-first}
	\end{subfigure}
	\begin{subfigure}{.5\textwidth}
		\centering
		% include second image
		\includegraphics[width=1\linewidth, height=0.20\textheight]{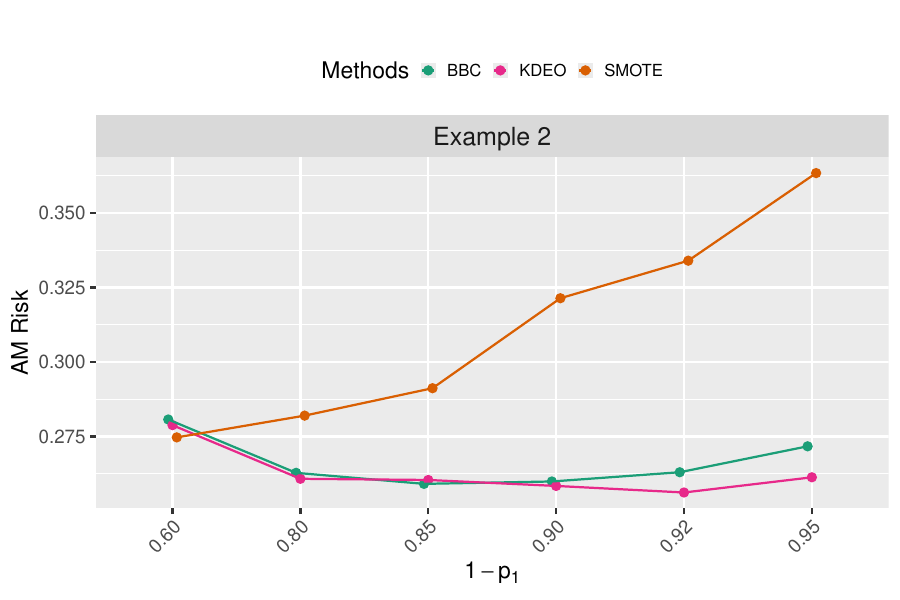}  
		% \caption{Example 2 with $p=4$}
		%\label{fig:sub-second}
	\end{subfigure}
        \newline
       \begin{subfigure}{.5\textwidth}
		\centering
		% include first image
\includegraphics[width=1\linewidth, height=0.20\textheight]{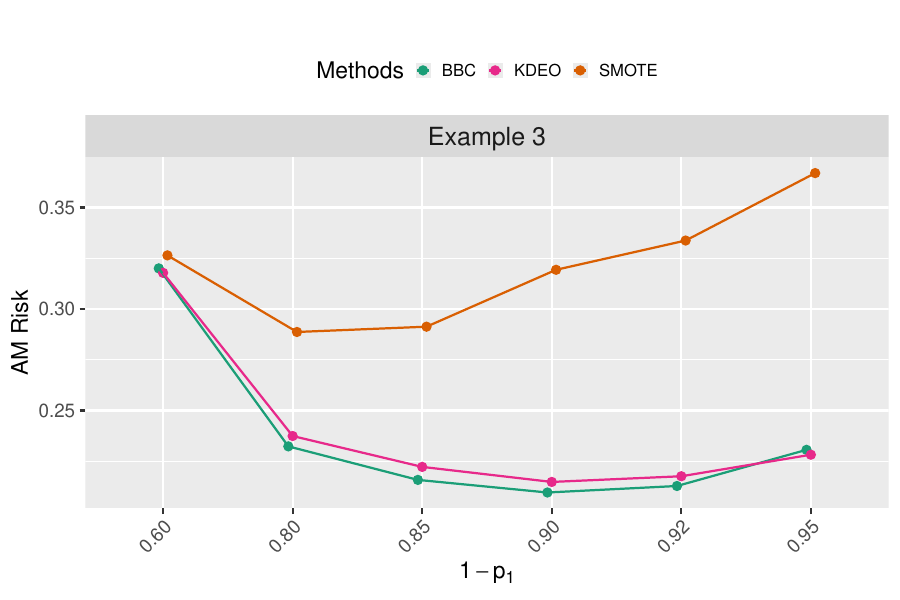}  
		% \caption{Example 3 with $p=4$}
		%\label{fig:sub-first}
	\end{subfigure}
	\begin{subfigure}{.5\textwidth}
		\centering
		% include second image
		\includegraphics[width=1\linewidth, height=0.20\textheight]{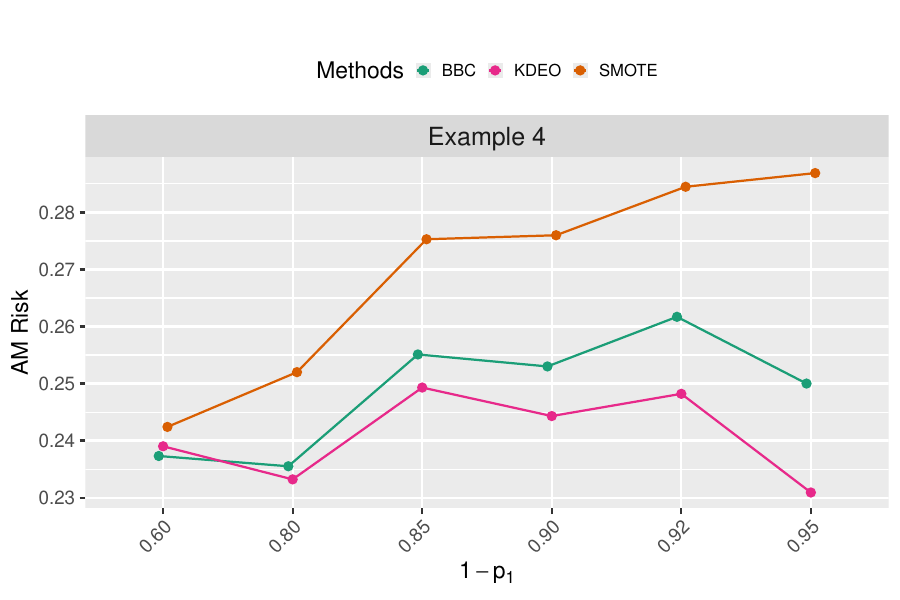}  
	\end{subfigure}
    \begin{subfigure}{.5\textwidth}
		\centering
\includegraphics[width=1\linewidth, height=0.20\textheight]{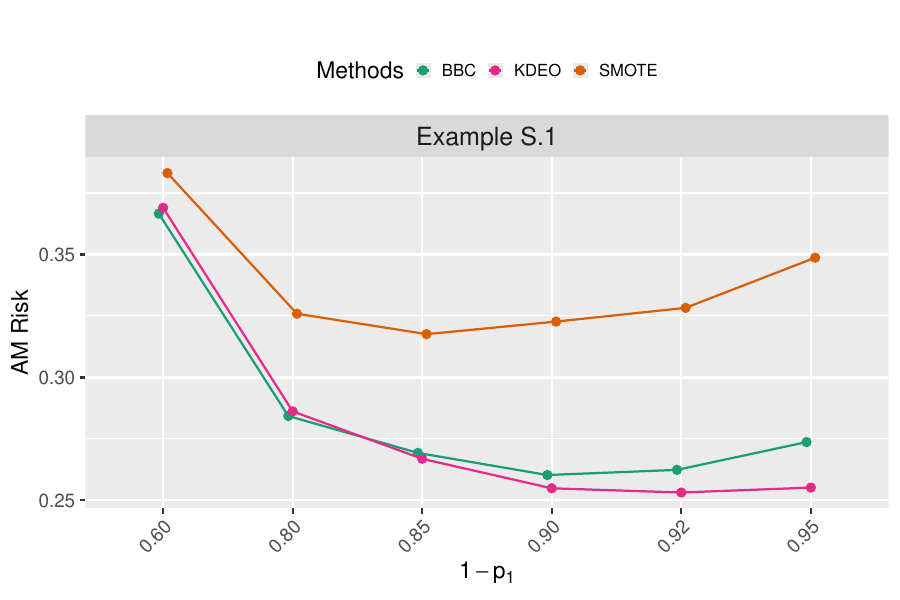}  
	\end{subfigure}
%  \begin{subfigure}{0.5\textwidth}
% 		\centering
% 		% include second image
% \includegraphics[width=1\linewidth, height=0.25\textheight]{Figures/AM-Ex55-logis-CV.pdf}  
	% \end{subfigure}
	\caption{Average AM-risk across different data imbalance regimes for the Random Forest classifier computed over 50 replications. }
	\label{fig:sim-models-RF}
\end{figure}

In Example S.1, the parameter \( t \) is tuned to achieve various probability levels (\( 1-p_1 = 0.60, 0.80, 0.85, 0.90, 0.92, 0.95 \)). The considered methods are the ones introduced in Sections~\ref{sec:main:excessrisk} and~\ref{Methods+variants} of the main document. 

The AM-risk performances of the KS and $K$NN classifiers for Examples 1, 4, and S.1 are reported in Figure~\ref{fig:sim-models-suppp1}, where both methods exhibit identical patterns and achieve superior performance with \textsc{Kdeo} and BBC compared to \textsc{Smote}. Moreover, the $K$NN classifier can further be improved by tuning the hyperparameter $K$ via CV, as shown in Figure~\ref{fig:sim-models-supp}; %where, in all cases, we observe that CV plays a crucial role in tuning the hyperparameter $K$ of $K$NN. 
A noticeable performance improvement can be observed in the \textsc{Kdeo}-CV, \textsc{Smote}-CV results, but %yields poorer performance when applied cross-validation to tune hyperparameter $K$ in the BBC. 
CV on $K$ yields poorer performance for the BBC. %This suggests that BBC and cross-validation may not interact beneficially in this context. 
Moreover, \textsc{Smote-CV}, \textsc{Kdeo} and BBC exhibit very similar performance, indicating that these methods enhance minority class representation comparably when used with $K$NN, which is consistent with the theoretical justifications presented in the main document. %Additional results obtained by KS classifeir from the examples considered in the main paper are also reported in Figures~\ref{fig:sim-models-sup} and~\ref{fig:sim-models-sup-smote}, which reach a similar conclusion to the one we discussed. 

We also consider other classifiers, i.e.~Random Forest and Logistic Regression (LR), compared to the %examples presented in 
the main document. %and Example S.1. 
Figure~\ref{fig:sim-models-RF} shows the AM-risk results for the Random Forest classifier. The \textsc{Kdeo} and BBC perform better than \textsc{Smote} in terms of AM-risk. Further notice that \textsc{Kdeo} still performs better than BBC even when the probability of the minority class $p_1$ is small. In Example 4, all methods perform similarly. %, possibly due to the ease of the underlying classification structure. In synthetic data, 
We did not use CV to select the number of features at each split, as the low dimensionality makes CV not beneficial; 
%However, this does not hold when applying Random Forest to real datasets, where such tuning can significantly affect performance in higher dimensions. A key observation is that 
without CV, BBC performs comparably to the oversampling methods across all examples. %however, as previously noted in the $K$NN case, BBC fails when using a \textit{downsampling}-based CV scheme. 

In contrast, the LR classifier performs uniformly across all resampling methods (see Figure~\ref{fig:sim-models-logis}). Notice that the LR classifier performs worse when applied to more complex classification structures. This overall uniformity in results is in line with the results given in Theorems \ref{theorem-smote-concentration_uniform} and \ref{th:smote_expectation} as they suggest that estimating the risk with \textsc{Smote} or \textsc{Kdeo} gives better results when $K$ and $H$ are small, underlining that oversampling may not be critical for such type of (parametric) classifier.

\begin{figure}[ht]
	\begin{subfigure}{.5\textwidth}
		\centering
		% include first image
\includegraphics[width=1\linewidth, height=0.20\textheight]{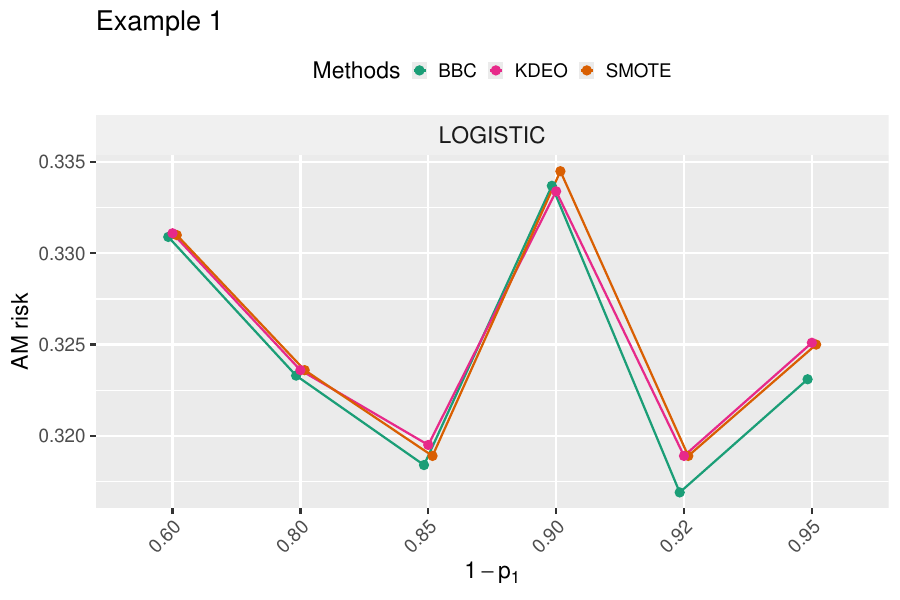}  
		% \caption{Example 1 with $p=4$}
		%\label{fig:sub-first}
	\end{subfigure}
	\begin{subfigure}{.5\textwidth}
		\centering
		% include second image
		\includegraphics[width=1\linewidth, height=0.20\textheight]{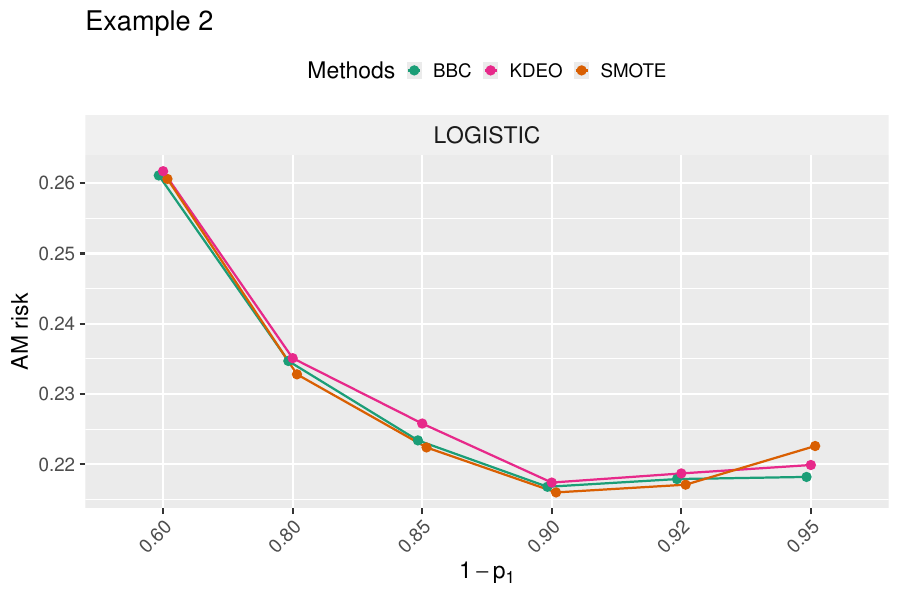}  
		% \caption{Example 2 with $p=4$}
		%\label{fig:sub-second}
	\end{subfigure}
        \newline
       \begin{subfigure}{.5\textwidth}
		\centering
		% include first image
\includegraphics[width=1\linewidth, height=0.20\textheight]{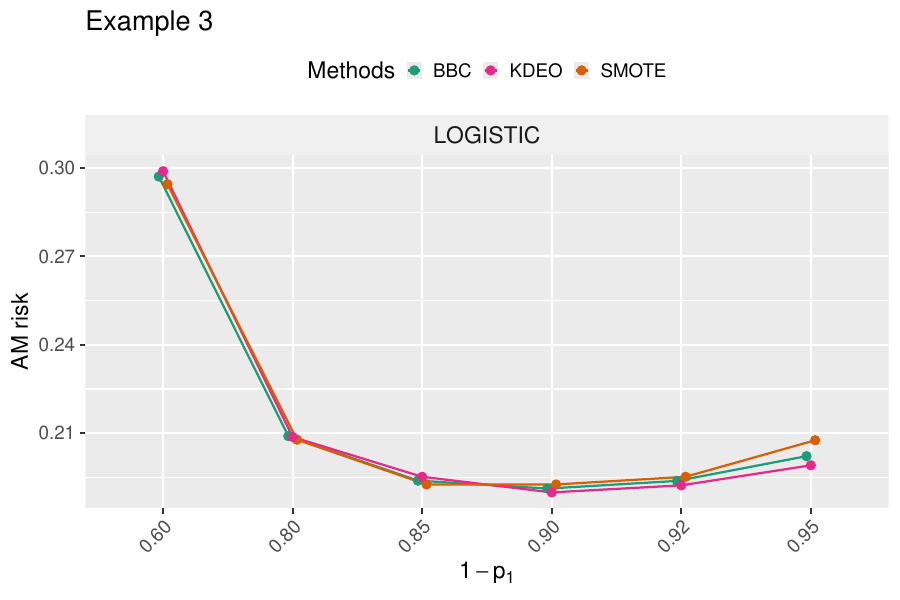}  
		% \caption{Example 3 with $p=4$}
		%\label{fig:sub-first}
	\end{subfigure}
	\begin{subfigure}{.5\textwidth}
		\centering
		% include second image
		\includegraphics[width=1\linewidth, height=0.20\textheight]{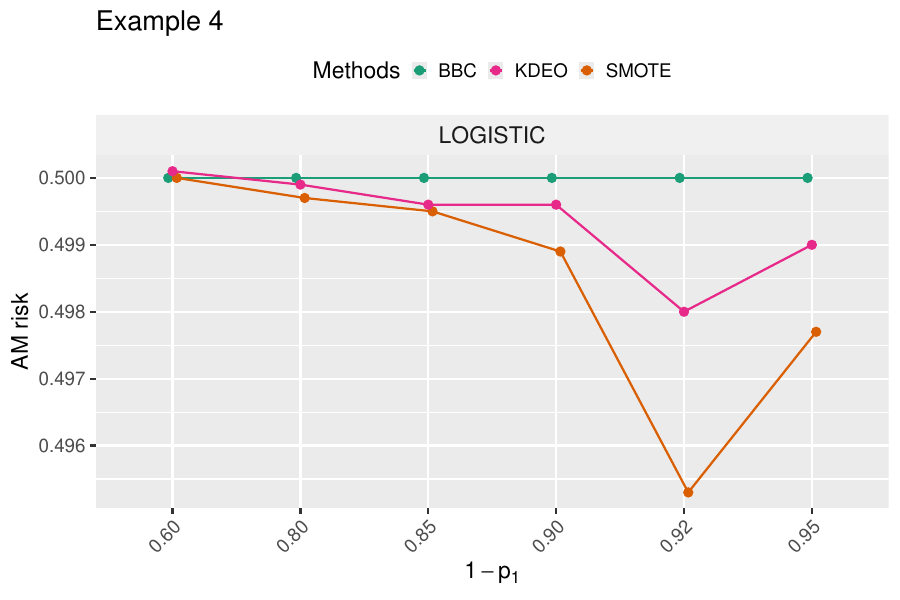}  
	\end{subfigure}
    \begin{subfigure}{.5\textwidth}
		\centering
\includegraphics[width=1\linewidth, height=0.20\textheight]{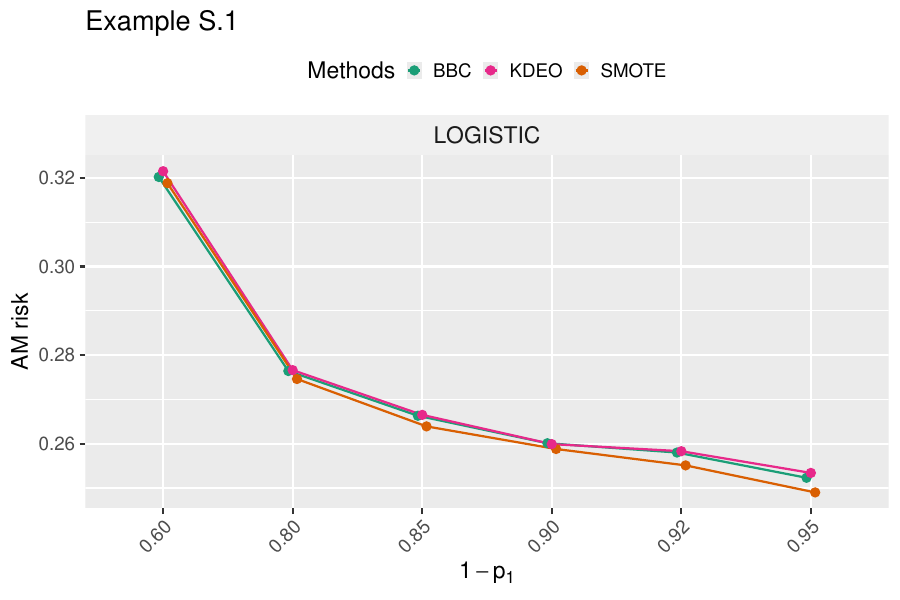}  
	\end{subfigure}
%  \begin{subfigure}{0.5\textwidth}
% 		\centering
% 		% include second image
% \includegraphics[width=1\linewidth, height=0.25\textheight]{Figures/AM-Ex55-logis-CV.pdf}  
	% \end{subfigure}
	\caption{Average AM-risk across different data imbalance regimes for the Logistic Regression classifier computed over 50 replications. }
	\label{fig:sim-models-logis}
\end{figure}

\subsection{Real data analysis: Abalone, California, MagicTel, Phoneme, and House\_16H datasets}

%Similar to the $K$NN classifier, we 
We apply the Random Forest, KS, LR, and LR-Lasso classifiers to the real datasets considered in the main paper. 
The AM-risk results for the KS are given in Figure~\ref{fig:realdata-sup-KSM}, those for Random Forest in Figure~\ref{fig:realdata-sup-RF}, while results for the LR and LR-Lasso classifiers are presented in Figure~\ref{fig:realdata-sup-logis}. The KS classifier shows comparable performance to \textsc{Smote} and \textsc{Kdeo} across all datasets. For instance, in the MagicTel dataset, \textsc{Kdeo} performs slightly better, while California and Abalone \textsc{Smote} stand superior. 
For the Random Forest classifier, \textsc{Smote} and \textsc{Kdeo}-based oversampling exhibit comparable performance across all datasets. Similarly, both oversampling methods yield nearly identical results in the LR and LR-Lasso classifiers, except for the Abalone and House\_16H datasets. A key finding is that using cross-validation to select the number of features at each split in Random Forest and the regularization parameter $\lambda$ in LR-Lasso consistently enhances performance across all scenarios, with the exception of Abalone and House\_16H. The low dimensionality of the Abalone and House\_16H datasets may explain the lack of performance improvement when CV is used for parameter tuning. 

\begin{figure}[t]
		\centering
		% include first image
\includegraphics[width=1\linewidth, height=0.4\textheight]{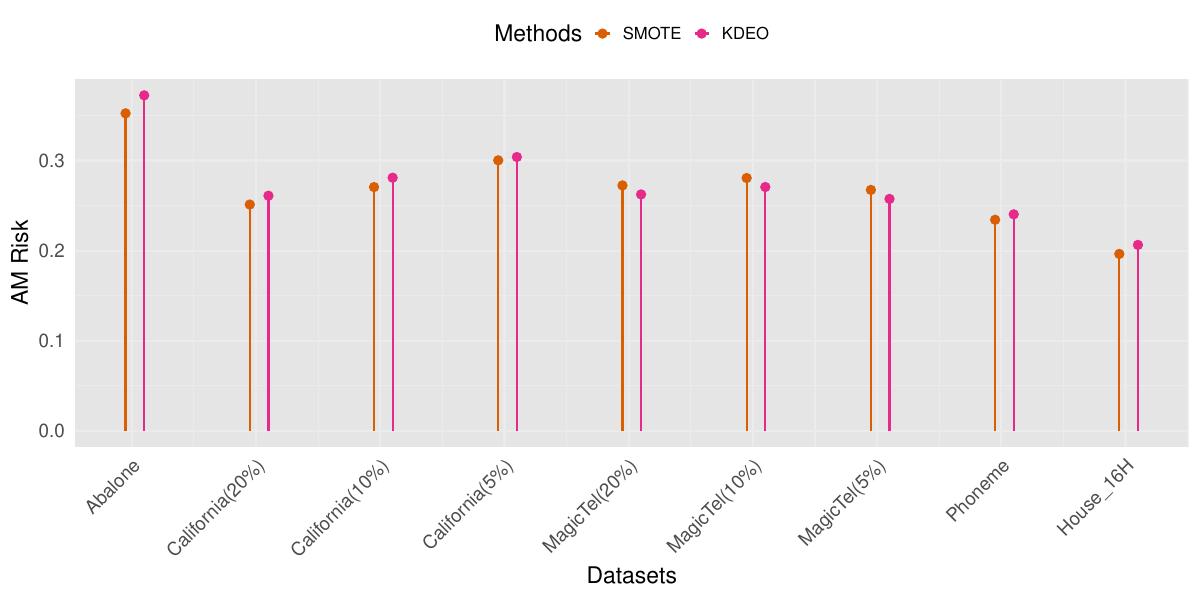}
		% \caption{Example 1 with $p=4$}
		%\label{fig:sub-first}
	\caption{AM-risk corresponding to different rebalancing methods and datasets when using the KS classifier. 
    %Methods with CV represent the use of CV for tuning the number of features at each split. 
    }
	\label{fig:realdata-sup-KSM}
\end{figure}

\begin{figure}[t]
		\centering
		% include first image
\includegraphics[width=1\linewidth, height=0.4\textheight]{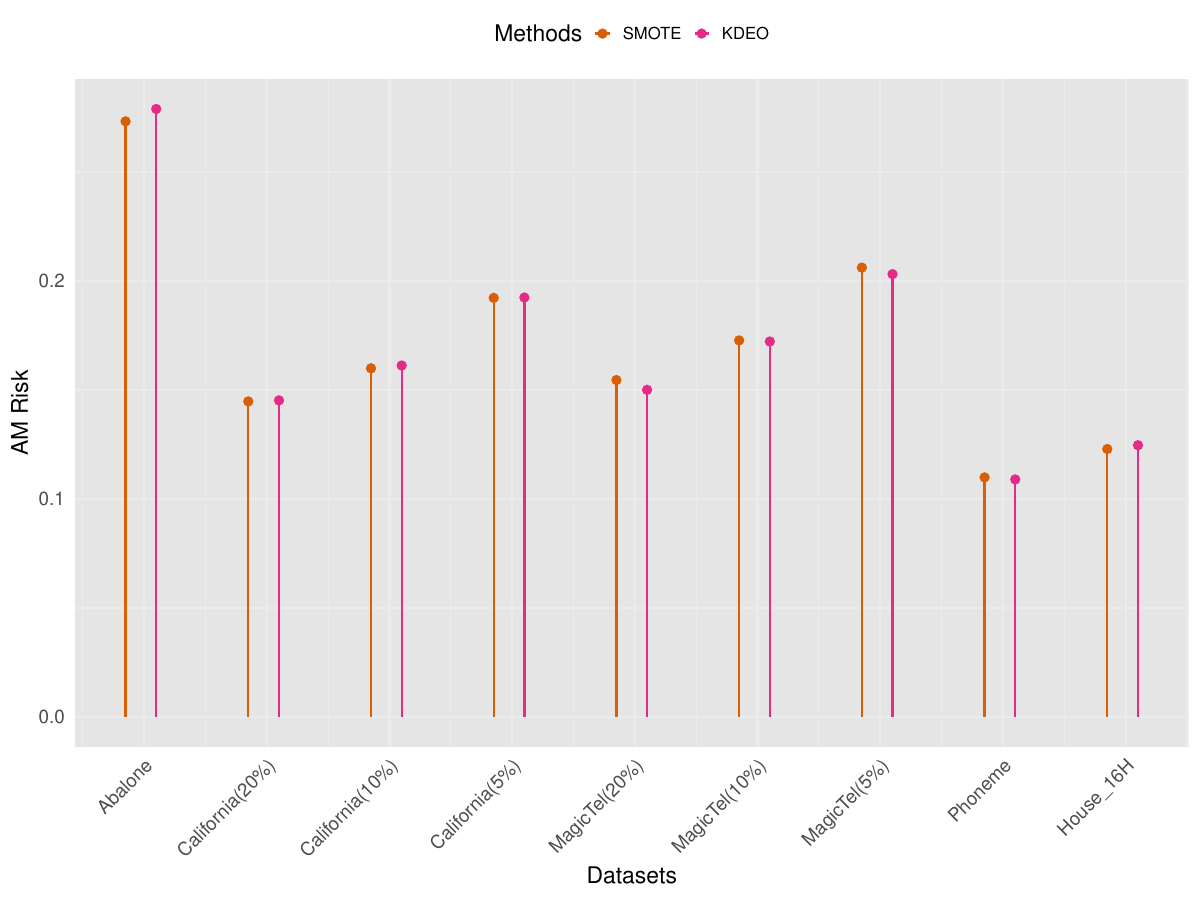}
		% \caption{Example 1 with $p=4$}
		%\label{fig:sub-first}
	\caption{AM-risk corresponding to different rebalancing methods and datasets when using the Random Forest classifier.
    %Methods with CV represent the use of CV for tuning the number of features at each split.
    }
	\label{fig:realdata-sup-RF}
\end{figure}

\begin{figure}[t]
		\centering
		% include first image
\includegraphics[width=1\linewidth, height=0.4\textheight]{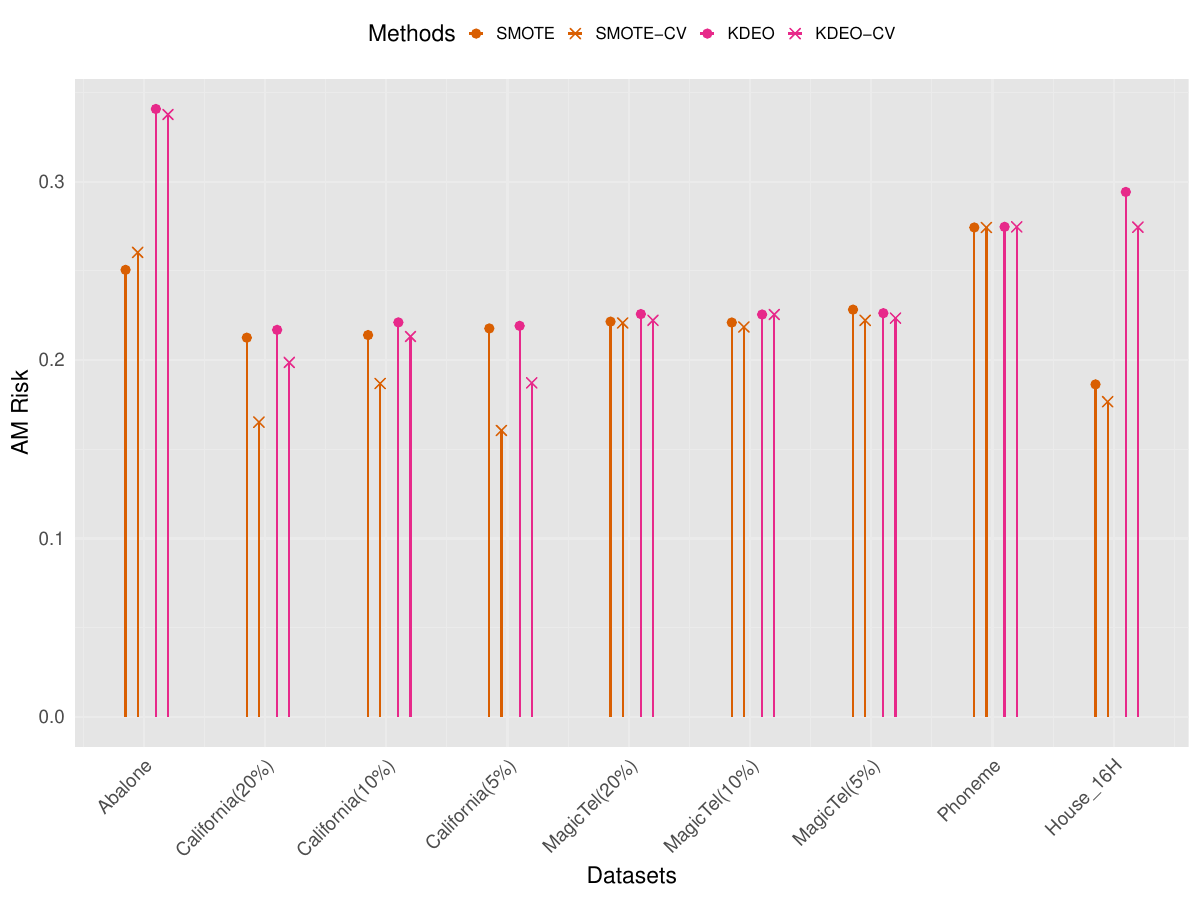}
		% \caption{Example 1 with $p=4$}
		%\label{fig:sub-first}
	\caption{AM-risk corresponding to different rebalancing methods and datasets when using the LR and LR-Lasso classifier. Methods containing the ``CV'' characters stand for the LR-Lasso classifier with a regularization parameter tuned through CV.}
	\label{fig:realdata-sup-logis}
\end{figure}

\end{document}